\numberwithin{equation}{section} 
\theoremstyle{plain}
\newtheorem{theorem}{Theorem}[section]
\theoremstyle{definition}
\newtheorem{definition}[theorem]{Definition}
\theoremstyle{remark}
\newtheorem*{remark*}{Remark}
\icmltitlerunning{Towards Atoms of Large Language Models}
\begin{document}

\twocolumn[
  \icmltitle{Towards Atoms of Large Language Models}

  % It is OKAY to include author information, even for blind submissions: the
  % style file will automatically remove it for you unless you've provided
  % the [accepted] option to the icml2026 package.

  % List of affiliations: The first argument should be a (short) identifier you
  % will use later to specify author affiliations Academic affiliations
  % should list Department, University, City, Region, Country Industry
  % affiliations should list Company, City, Region, Country

  % You can specify symbols, otherwise they are numbered in order. Ideally, you
  % should not use this facility. Affiliations will be numbered in order of
  % appearance and this is the preferred way.
  \begin{icmlauthorlist}
    \icmlauthor{Chenhui Hu}{institute,university}
    \icmlauthor{Pengfei Cao}{institute,university}
    \icmlauthor{Yubo Chen}{institute,university}
    \icmlauthor{Kang Liu}{institute,university}
    \icmlauthor{Jun Zhao}{institute,university}
  \end{icmlauthorlist}

  \icmlaffiliation{institute}{The Key Laboratory of Cognition and Decision Intelligence for Complex Systems, Institute of Automation, Chinese Academy of Sciences, Beijing, China.}
  \icmlaffiliation{university}{School of Artificial Intelligence, University of Chinese Academy of Sciences, Beijing, China}

  \icmlcorrespondingauthor{Pengfei Cao}{pengfei.cao@nlpr.ia.ac.cn}
  \icmlcorrespondingauthor{Jun Zhao}{jzhao@nlpr.ia.ac.cn}

  % You may provide any keywords that you find helpful for describing your
  % paper; these are used to populate the "keywords" metadata in the PDF but
  % will not be shown in the document
  \icmlkeywords{Atom Theory, Fundamental Representational Units, Atomic Inner Product, Atoms, Large Language Models}

  \vskip 0.3in
]

% this must go after the closing bracket ] following \twocolumn[ ...

% This command actually creates the footnote in the first column listing the
% affiliations and the copyright notice. The command takes one argument, which
% is text to display at the start of the footnote. The \icmlEqualContribution
% command is standard text for equal contribution. Remove it (just {}) if you
% do not need this facility.

% Use ONE of the following lines. DO NOT remove the command.
% If you have no special notice, KEEP empty braces:
\printAffiliationsAndNotice{}  % no special notice (required even if empty)
% Or, if applicable, use the standard equal contribution text:
% \printAffiliationsAndNotice{\icmlEqualContribution}

%\vspace{-20pt}
\begin{abstract}

    %\vspace{-2pt}
    The fundamental representational units (FRUs) of large language models (LLMs) remain undefined, limiting further understanding of their underlying mechanisms. In this paper, we introduce \textbf{\textit{Atom Theory}} to systematically define, evaluate, and identify such FRUs, which we term atoms. Building on the atomic inner product (AIP), a non-Euclidean metric that captures the underlying geometry of LLM representations, we formally define atoms and propose two key criteria for ideal atoms: faithfulness ($R^2$) and stability ($q^*$). We further prove that atoms are identifiable under threshold-activated sparse autoencoders (TSAEs). Empirically, we uncover a pervasive representation shift in LLMs and demonstrate that the AIP corrects this shift to capture the underlying representational geometry. We find that two widely used units, neurons and features, fail to qualify as ideal atoms: neurons are faithful ($R^2\!=\!1$) but unstable ($q^*\!=\!0.5\%$), while features are more stable ($q^*\!=\!68.2\%$) but unfaithful ($R^2\!=\!48.8\%$). To find atoms of LLMs, leveraging atom identifiability under TSAEs, we show via large-scale experiments that reliable atom identification occurs only when the TSAE capacity matches the data scale. Guided by this insight, we identify FRUs with near-perfect faithfulness ($R^2\!=\!99.9\%$) and stability ($q^*\!=\!99.8\%$) across layers of Gemma2-2B, Gemma2-9B, and Llama3.1-8B, satisfying the criteria of ideal atoms statistically. Further analysis confirms that these atoms align with theoretical expectations and exhibit substantially higher monosemanticity. Overall, we propose and validate Atom Theory as a foundation for understanding the internal representations of LLMs.
  
\end{abstract}

\begin{figure}
    \centering
    \includegraphics[width=0.48\textwidth]{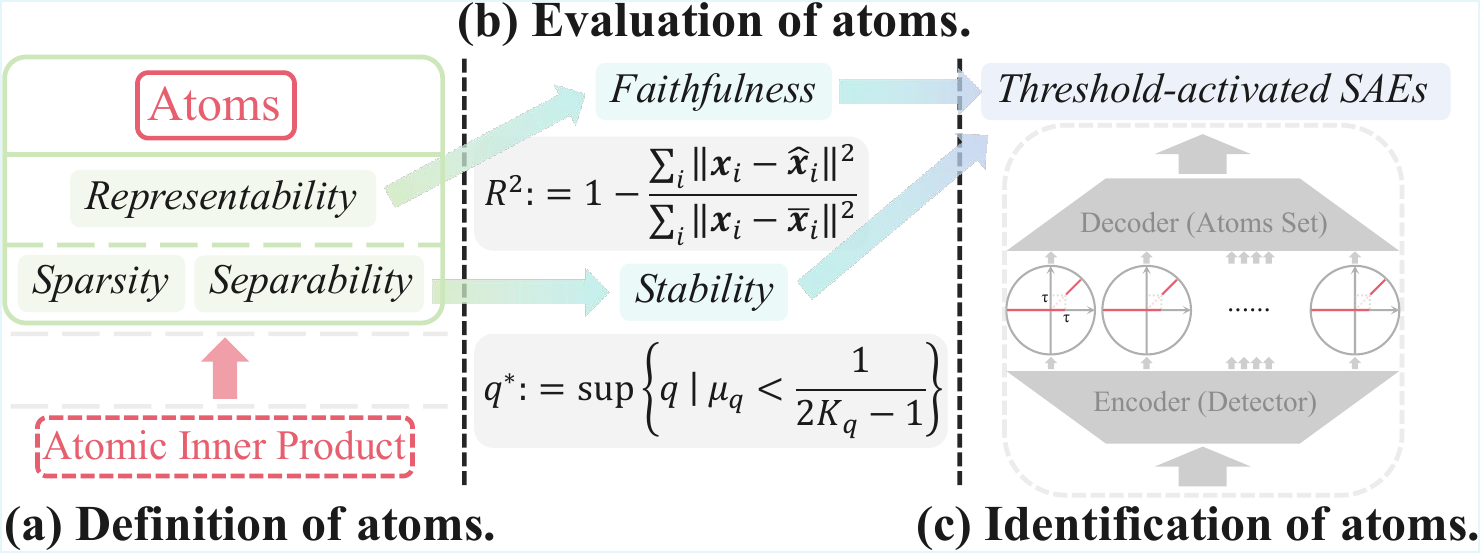}
    \caption{Illustration of Atom Theory. (a) Atoms are defined based on the atomic inner product, inducing representability, sparsity, and separability. (b) Atoms are evaluated by faithfulness ($R^2$) and stability ($q^*$), measuring fidelity and stable-atom fraction. (c) Threshold-activated SAEs enable atom identification, with the encoder as an atom detector and the decoder as the target atom set.}
    %\vspace{-10pt} 
  \label{fig:atoms_theory_illustration}
\end{figure}

\section{Introduction}
%\vspace{-2pt}

Large language models (LLMs), trained on vast corpora, exhibit emergent knowledge and reasoning abilities \citep{petroni2019language, brown2020language, achiam2023gpt}. Yet such information is not stored in explicit symbolic structures, but implicitly embedded within high-dimensional representations \citep{nanda2023emergent, gurnee2023finding}. This raises a fundamental question: \textbf{Do LLMs contain fundamental representational units (FRUs)—an atomic structure underlying how they encode and compose information?} These representational units are critical for understanding, interpreting, and controlling LLMs \citep{olah2020zoom}.

Traditionally, neurons have been regarded as such FRUs \citep{olah2017feature,dai2022knowledge,chen2024journey}. However, neurons frequently exhibit substantial polysemanticity \citep{elhage2022toy}, raising doubts about their validity for analysis. To address this issue, features extracted from internal representations \citep{cunningham2023sparse,chen2025knowledge} have been proposed as alternative FRUs \citep{olah2020zoom}. Yet this perspective remains controversial: (i) features fail to fully reconstruct the original representations \citep{bricken2023monosemanticity}, raising concerns about faithfulness; and (ii) features undergo splitting into finer ones or merging into broader ones under varying decomposition settings \citep{bussmann2025learning, chanin2025feature}, undermining stability. Although prior work implicitly treats neurons and features as FRUs, there is still no formal definition of FRUs for LLMs, which hinders principled evaluation and identification and ultimately limits theoretical clarity.

In this paper, we propose \textbf{\textit{Atom Theory}} to systematically define, evaluate, and identify the FRUs of LLMs, termed atoms (Fig.~\ref{fig:atoms_theory_illustration}). Specifically, to characterize the underlying geometry of LLM representations, we introduce the (non-Euclidean) atomic inner product (AIP) in $\S$~\ref{section:Atom_Theory_Atomic_Inner_Product}. Based on AIP, we formally define atoms (Fig.~\ref{fig:atoms_theory_illustration}(a)) in $\S$~\ref{section:Atom_Theory_Formal_Definition_of_Atoms} by three properties: \textbf{representability} (representations can be faithfully reconstructed from atoms), \textbf{sparsity} (each representation involves only a few atoms), and \textbf{separability} (atoms are approximately orthogonal under AIP). Representability ensures that atoms form a faithful set for the representation space. Sparsity and separability jointly enable efficient encoding of representations by approximately orthogonal atoms \citep{elhage2022toy}. Accordingly, sparsity and separability are tightly coupled, and we explicitly quantify their relationship in the following criteria.\vspace{-0.6pt} 

To operationalize this definition, we next introduce quantitative criteria for evaluating whether candidate units qualify as atoms (Fig.~\ref{fig:atoms_theory_illustration}(b)), theoretically in $\S$~\ref{section:Atom_Theory_Evaluation_of_Atoms} and practically in $\S$~\ref{section:Experiments_Neurons_or_Features_as_Ideal_Atoms}. Representability is measured by the coefficient of determination $R^2$, which quantifies \textit{faithfulness}. Sparsity and separability, drawing on compressed sensing \citep{donoho2006compressed, candes2006robust}, are unified into a single metric $q^*$ to quantify \textit{stability}. This metric corresponds to monorepresentationality: within the regime characterized by $q^*$, atoms and their combinations are distinguishable (i.e., not confusable), which is a desirable property under approximate orthogonality. Monorepresentationality provides a structural foundation for understanding LLM representations, offering the stability required for monosemanticity. Finally, to provide theoretical guarantees for atom identification, we prove that threshold-activated sparse autoencoders (TSAEs) (Fig.~\ref{fig:atoms_theory_illustration}(c)) can identify the target atom set. Overall, we establish a unified theoretical framework that provides guarantees for defining, evaluating, and identifying atoms.\vspace{-0.6pt} 

\textbf{Having introduced \textit{Atom Theory}, we validate that its foundation, the AIP, provides a principled basis for understanding LLM representations in $\S$~\ref{section:Experiments_Representation_Shift}.} Empirically, we uncover a pervasive representation shift across layers of differently scaled models from multiple LLM families (Fig.~\ref{fig:Representations_Shifting}), including GPT \citep{radford2019language,wang2022gpt}, Pythia \citep{biderman2023pythia}, Llama \citep{touvron2023llama,dubey2024llama}, and Gemma \citep{team2024gemma}. This shift arises from the Softmax operation in LLMs, which drives the centroid of the distribution of pairwise representation angles away from $90^\circ$ under the Euclidean inner product, thereby inducing a global bias in the representation space and distorting the representational geometry. Introducing the AIP effectively corrects this shift (Fig.~\ref{fig:Correcting_Representations_Shifting}), removing the global bias and restoring the centroid to $90^\circ$. This demonstrates that the AIP captures the underlying geometry of LLM representations.\vspace{-0.6pt} 

\textbf{Building on \textit{Atom Theory}, we evaluate whether candidate representational units satisfy the criteria for ideal atoms in $\S$~\ref{section:Experiments_Neurons_or_Features_as_Ideal_Atoms}.} We show that widely used representational units, neurons and features, remain distant from ideal atoms (Fig.~\ref{fig:neurons_features_ideal_atoms_radar_chart}). Although neurons, as the basic computational units of neural networks, exhibit perfect faithfulness ($R^2\!=\!1$), they display extremely low stability ($q^*\!=\!0.5\%$). Features achieve improved stability ($q^*\!=\!68.2\%$) but remain unstable and exhibit low faithfulness ($R^2\!=\!48.8\%$). These results quantitatively reveal the limitations of neurons and features, indicating that these common units are not ideal atoms.%\vspace{-1pt}

\textbf{Leveraging \textit{Atom Theory}, we identify the atoms of LLMs in $\S$~\ref{section:Experiments_TSAE_Capacity_meet_Data_Scale} and $\S$~\ref{section:Experiments_Atoms_of_LLMs}.} Based on the theoretical identifiability of TSAEs, we conduct large-scale experiments to characterize the relationship between data scale and TSAE capacity (Fig.~\ref{fig:Scaling_Saes}), showing that reliable atom identification is achieved only when the TSAE capacity exceeds a critical threshold for a given data scale. This is intuitive: data scale determines the scale of atoms, which in turn dictates the TSAE capacity required for their identification. Guided by this insight, we achieve faithful reconstructions ($R^2\!=\!99.90\%$) across layers of Gemma2-2B, Gemma2-9B, and Llama3.1-8B using TSAEs with JumpReLU activation \citep{erichson2019jumprelu, rajamanoharan2024jumping}, and verify high stability of the identified units ($q^*\!=\!99.85\%$), yielding ideal atoms statistically (Tab.~\ref{tab:atom_faithfulness_stability}). Further analysis demonstrates that the identified atoms are consistent with theoretical expectations and exhibit substantially higher monosemanticity (Fig.~\ref{fig:monosemanticity_score_across_models}).%\vspace{-1pt} 

In summary, our contributions are as follows:\vspace{-4pt}
\begin{itemize}
    \item We propose Atom Theory, a rigorous theoretical framework based on AIP that systematically defines, evaluates, and identifies the FRUs of LLMs, i.e., atoms.\vspace{-1pt} 
    \item We empirically uncover a representation shift in LLMs and show that the AIP corrects this shift to characterize the underlying representational geometry, validating the representational foundation of Atom Theory.\vspace{-1pt} 
    \item Building on Atom Theory, we use faithfulness and stability to systematically and quantitatively reveal the limitations of neurons and features as FRUs.\vspace{-1pt} 
    \item Leveraging Atom Theory, we establish atom identifiability under TSAEs, characterize the relationship between data scale and TSAE capacity, and identify FRUs in LLMs that satisfy the criteria of ideal atoms.\footnote{Code available at \url{https://github.com/ChenhuiHu/towards_atoms}.} Further analysis shows that these atoms align with theoretical expectations and exhibit high monosemanticity.\vspace{-1pt} 
\end{itemize}

\vspace{-6pt} 
\section{Preliminary}\label{section:Preliminary}

\vspace{-2pt} 
\paragraph{Background on Language Models}

Consider an \(L\)-layer language model over a vocabulary \( V\). For an input sequence \(\bm{x}=[x_1,x_2,\cdots ,x_T]\) with \(x_i\in V\), the model assigns each token \(x_i\) an embedded representation \(\bm h_i^0\in\mathbb R^{H}\), which is updated at layer \(l\) as \(\bm h_i^l=\bm h_i^{l-1}+\bm a_i^{l}+\bm v_i^l\), where \(\bm a_{i}^l\) and \(\bm v_i^l\) denote the outputs of attention and MLP modules, respectively. From the residual-stream perspective, the representation after \(L\) layers is \(\bm h_i^L=\bm h_i^{0}+\sum_{l=1}^{L}\bm a_i^{l}+\sum_{l=1}^{L}\bm v_i^l\). The probability distribution \(\bm{y}\) over the next token is obtained from the final representation via \(\bm{y} = \mathrm{Softmax}(\bm W_U^\top \bm{h}_T^L)\), where \(\bm W_U \in \mathbb{R}^{H \times |V|}\) is the unembedding matrix.

\section{Atom Theory}\label{section:Atom_Theory}

In language models, all information is embedded in high-dimensional representations. Our objective is to identify the fundamental representational units (FRUs), which we term atoms. Formally, we consider a collection of representations \(M = \{\bm{m}_i\}_{i=1}^{|M|}\), where \(\bm{m}_i \in \mathbb{R}^H\). Each representation can be expressed as \(\bm{m}_i=\sum_{j}\delta(i,j)\bm d_j\), where \(\delta(i,j) \geq 0\) denotes the presence and magnitude of the \(j\)-th representational unit \(\bm d_j\in\mathbb{R}^H\) in the \(i\)-th representation \(\bm{m}_i\). However, in representation space, the family of representational units admitting such a decomposition is, in principle, infinite.

The central question is how to define FRUs, i.e., atoms. A natural criterion is distinguishability: each atom should be detectable or manipulable without interfering with others. In high-dimensional spaces, this criterion translates into orthogonality: atoms occupy mutually orthogonal directions, making their identities distinguishable via inner products. This motivates introducing an inner product to analyze the geometry of FRUs. Thus, the choice of inner product is critical. While the Euclidean inner product is commonly used, it is not necessarily appropriate for language models. Following \citet{park2023linear} and \citet{hu2025knowledge}, we consider the following reparameterization of \(\bm W_U\) and \(\bm{h}^L\):%\vspace{-3pt}
\begin{equation}\label{equation:Reparameterization}
    {\bm W_U}' \leftarrow \bm A^{-\top} \bm W_U + \bm{b}\,\bm{1}^\top, \quad \bm{h}'^L \leftarrow \bm A\,\bm{h}^L,
    %\vspace{-3pt}
\end{equation}
where \(\bm A\! \in \mathbb{R}^{H \times H}\) is an invertible linear transform, \(\bm{b} \!\in \mathbb{R}^{H}\), and \(\bm{1} \in \mathbb{R}^{|V|}\) is the all-ones vector. Owing to the translation invariance of Softmax, this reparameterization leaves the output distribution unchanged: \(\bm{y} = \mathrm{Softmax}(\bm W_U^\top \bm{h}^L) = \mathrm{Softmax}({\bm W_U'}^\top\! \bm{h}'^L)\). See Appendix~\ref{appendix:Proofs_Explanation} for further details.

Since the training objective of language models depends on representations solely through Softmax probabilities, different pairs \((\bm W_U, \bm h^L)\) under \eqref{equation:Reparameterization} are observationally indistinguishable, producing exactly the same outputs for all inputs. Thus, even for a trained checkpoint, such reparameterizations leave the model’s input–output behavior unchanged, so \(\bm h^L\) is identifiable only up to an invertible linear transformation \(\bm A\) in principle. Due to the residual-stream architecture and linearity of matrix multiplication, this invariance propagates to all hidden representations and their representational units \(\bm d_j\), which are likewise identifiable only up to \(\bm A\).

Crucially, the Euclidean inner product is not invariant under reparameterization \eqref{equation:Reparameterization}: \(\langle \bm{d}_i,\! \bm{d}_j \rangle\! \neq\! \langle \bm A\bm{d}_i,\! \bm A\bm{d}_j \rangle\). Therefore, the Euclidean geometric relations (e.g., angles and orthogonality) between $\bm d_i$ and $\bm d_j$ depend on the chosen parameterization, thus the Euclidean inner product does not provide a canonical geometry for language-model representations.

\vspace{-1pt}
\subsection{Atomic Inner Product}\label{section:Atom_Theory_Atomic_Inner_Product}

To better understand language-model representations and thereby define atoms within them, we require additional principles to specify an appropriate inner product. We therefore introduce an inner product with the desired properties.

\begin{definition}[Atomic Inner Product; AIP]
    Let \(\mathcal D\!=\!\mathrm{span}(D)\), where \(D\!=\!\{\bm d_j\}_{j=1}^{|D|}\) denotes the atom set. The \textbf{atomic inner product} \(\langle \cdot, \cdot \rangle_S\) is an inner product on \(\mathcal D\) such that \(\langle \bm{d}_i, \bm{d}_j \rangle_S\! =\! 0\) for all \(\bm{d}_i, \bm{d}_j \in D\) with \(i \neq j\).
\end{definition}

Atoms are indexed arbitrarily, and any permutation of indices leaves their geometry invariant. Hence, there is no principled basis for assigning different norms to different atoms. By this symmetry, we assume a common norm under the chosen inner product, i.e., \(\|\bm d_i\|_S \!=\! c>0\), \(\forall i \in [|D|]\). The constant \(c\) cancels naturally in the subsequent analysis.

By abuse of notation, we also use \(\bm D \in \mathbb{R}^{H \times |D|}\) to denote the matrix with columns \(\bm d_j\). We next characterize the atomic inner product in an explicit form.

\begin{restatable}[Explicit Form of the Atomic Inner Product]{theorem}{ExplicitForm}\label{theorem:Explicit_Form}
    Let \(\langle \bm{d}_i, \bm{d}_j \rangle_S = \bm{d}_i^\top \bm S \bm{d}_j\) be an atomic inner product with \(\bm S\in \mathbb{R}^{H\times H}\) symmetric and positive definite. If the columns of \(\bm D = [\bm{d}_1, \bm{d}_2, \cdots, \bm{d}_{|D|}]\) form the atom set such that \(\forall i,\ \|\bm{d}_i\|_S = c>0\), and \(\mathcal{D} \simeq \mathbb{R}^H\), then \(\bm S = c^2 (\bm D\bm D^\top)^{-1}\).
\end{restatable}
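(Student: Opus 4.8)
The plan is to translate the defining properties of the atomic inner product into a single matrix identity, then use the spanning hypothesis to force $D$ to be square and invertible, at which point $S$ is pinned down uniquely by a one-line inversion.

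First I would collect the two conditions — $\langle \pmb{d}_i, \pmb{d}_j \rangle_S = 0$ for $i\neq j$ (from the AIP definition) and $\|\pmb{d}_i\|_S^2 = \pmb{d}_i^\top S \pmb{d}_i = c^2$ — into the compact form
\[
    D^\top S D = c^2 I_{|D|},
\]
where $I_{|D|}$ denotes the $|D|\times|D|$ identity. Since $S$ is symmetric positive definite, $\langle \cdot,\cdot\rangle_S$ is a genuine inner product on $\R^H$, so a family of nonzero, pairwise $S$-orthogonal vectors is linearly independent; this forces $|D| \le H$. On the other hand, the hypothesis $\mathcal{D} \simeq \R^H$ says the atoms span $\R^H$, hence $|D| \ge H$. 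Therefore $|D| = H$, the columns of $D$ form a basis of $\R^H$, and $D$ is invertible.

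With $D$ invertible, the identity $D^\top S D = c^2 I_H$ is solved directly by multiplying on the left by $D^{-\top}$ and on the right by $D^{-1}$, giving
\[
    S = c^2\, D^{-\top} D^{-1} = c^2 (D D^\top)^{-1},
\]
which is well defined because $D D^\top$ is invertible. I would also remark that this $S$ is automatically symmetric positive definite (as $(DD^\top)^{-1}$ is), consistent with the standing hypothesis, and that invertibility of $D$ makes the solution of $D^\top S D = c^2 I_H$ unique, so no other admissible $S$ exists — the constant $c$ being the only residual freedom, which cancels downstream as noted.

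The argument is short, and the only real subtlety — the point I would be most careful about — is the dimension bookkeeping in the second step: one must invoke positive definiteness of $S$ (not merely symmetry) to obtain linear independence of the atoms, and one must read $\mathcal{D} \simeq \R^H$ precisely as the spanning condition that upgrades $|D| \le H$ to $|D| = H$. Everything after that is routine matrix algebra.
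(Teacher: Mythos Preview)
Your proposal is correct and follows essentially the same route as the paper: assemble the orthogonality and norm conditions into $D^\top S D = c^2 I$, use positive definiteness of $S$ together with the spanning hypothesis $\mathcal{D}\simeq\mathbb{R}^H$ to force $|D|=H$ and $D$ invertible, then invert to obtain $S = c^2(DD^\top)^{-1}$. The only cosmetic difference is that the paper argues $|D|\le H$ via a rank computation using the square root $S^{1/2}$, whereas you invoke linear independence of $S$-orthogonal vectors directly; these are equivalent.
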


All proofs are provided in Appendix~\ref{appendix:Proofs}. By analogy with cosine similarity, we introduce the normalized atomic inner product to remove the dependence on \(c\).

\begin{restatable}[Normalized Atomic Inner Product; NAIP]{corollary}{NAIP}\label{corollary:NAIP}
    Let the atomic inner product be defined by \(\langle \bm{d}_i, \bm{d}_j \rangle_S = \bm{d}_i^\top \bm S \bm{d}_j\), where \(\bm S\) is symmetric and positive definite. Suppose the columns of \(\bm D = [\bm{d}_1, \cdots, \bm{d}_{|D|}]\) form the atom set satisfying \(\forall i ,\ \|\bm{d}_i\|_S = c>0\). Then, for any \(i, j\),
    \begin{equation}
        \langle\bm{d}_i, \bm{d}_j\rangle_{\tilde S} := \frac{\langle \bm{d}_i, \bm{d}_j \rangle_S}{\|\bm{d}_i\|_S \|\bm{d}_j\|_S} = \bm{d}_i^\top \tilde{\bm S} \bm{d}_j,\quad \tilde{\bm S} = (\bm D\bm D^\top)^{-1}.
    \end{equation}
    Consequently, the bilinear form \(\langle \bm{d}_i, \bm{d}_j \rangle_{\tilde{S}} = \bm{d}_i^\top \tilde{\bm S} \bm{d}_j\) defines the \textbf{normalized atomic inner product}.
\end{restatable}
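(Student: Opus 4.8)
The plan is to derive the corollary directly from Theorem~\ref{theorem:Explicit_Form}, so the proof is short. First I would record the hypotheses needed to invoke that theorem: \(S\) is symmetric positive definite, the columns of \(D\) are atoms of common norm \(\|\pmb{d}_i\|_S = c > 0\), and the atom space satisfies \(\mathcal{D} \simeq \mathbb{R}^H\). (The corollary inherits the last condition from the theorem; I would state it explicitly at the start, since it is what makes \(DD^\top\) invertible.) Theorem~\ref{theorem:Explicit_Form} then gives \(S = c^2 (DD^\top)^{-1}\), hence \(\tilde{S} := \tfrac{1}{c^2} S = (DD^\top)^{-1}\) is well-defined and equals the claimed expression; this settles the second equality in the displayed formula.

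For the first equality I would evaluate the normalizing denominator using the common-norm hypothesis: \(\|\pmb{d}_i\|_S\,\|\pmb{d}_j\|_S = c \cdot c = c^2\). Substituting into the definition of \(\rho_S\) yields \(\rho_S(\pmb{d}_i, \pmb{d}_j) = \dfrac{\pmb{d}_i^\top S \pmb{d}_j}{c^2} = \pmb{d}_i^\top \bigl(\tfrac{1}{c^2} S\bigr) \pmb{d}_j = \pmb{d}_i^\top \tilde{S} \pmb{d}_j\), which is exactly the bilinear form \(\langle \pmb{d}_i, \pmb{d}_j \rangle_{\tilde{S}}\).

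Finally, to justify that \(\langle \cdot, \cdot \rangle_{\tilde{S}}\) is itself a normalized atomic inner product, I would check three points. (i) \(\tilde{S} = \tfrac{1}{c^2} S\) is symmetric and positive definite, being a positive scalar multiple of a symmetric positive definite matrix, so the bilinear form is a genuine inner product. (ii) Distinct atoms remain orthogonal: \(\langle \pmb{d}_i, \pmb{d}_j \rangle_{\tilde{S}} = \tfrac{1}{c^2}\langle \pmb{d}_i, \pmb{d}_j \rangle_S = 0\) for \(i \neq j\), so it meets the atomic-inner-product definition. (iii) The atoms are now unit-normalized: \(\|\pmb{d}_i\|_{\tilde{S}}^2 = \pmb{d}_i^\top \tilde{S} \pmb{d}_i = \tfrac{1}{c^2}\|\pmb{d}_i\|_S^2 = 1\). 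Together these establish the claim.

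There is essentially no hard step here: the entire substantive content is carried by Theorem~\ref{theorem:Explicit_Form}, and the remainder is bookkeeping with a positive rescaling of \(S\). The only mild subtlety worth flagging is the implicit reliance on the spanning condition \(\mathcal{D} \simeq \mathbb{R}^H\) to guarantee invertibility of \(DD^\top\); outside that regime the formula \(\tilde{S} = (DD^\top)^{-1}\) would need to be replaced by a pseudoinverse on the subspace \(\mathcal{D}\), which I would mention in a remark rather than in the proof itself.
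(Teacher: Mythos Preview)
Your proposal is correct and follows essentially the same route as the paper: both arguments rest on the identity \(S = c^2 (DD^\top)^{-1}\) from Theorem~\ref{theorem:Explicit_Form} (the paper re-derives \(D^\top S D = c^2 I\) and inverts, while you cite the theorem directly), then rescale by \(1/c^2\) and verify that \(\tilde{S}\) yields an orthonormal, positive-definite atomic inner product. Your explicit check of the three properties (SPD, orthogonality, unit norm) is slightly more thorough than the paper's condensed version, and your remark on the role of \(\mathcal{D}\simeq\mathbb{R}^H\) for invertibility is apt.
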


Unlike the causal inner product \citep{park2023linear}, which is defined in the static and input-independent unembedding space over output tokens, our AIP (or NAIP) operates on the dynamic and input-dependent representation space, characterizing the geometry of representations and their constituent units. This enables direct analysis of internal representations and thereby allows the definition of atoms as fundamental representational units under the AIP.

\begin{remark*}
    Define \(\tilde{\bm{d}_i}\! = \!\tilde{\bm S}^{\frac{1}{2}} \bm{d}_i\) and \(\tilde{\bm{d}_j} \!=\! \tilde{\bm S}^{\frac{1}{2}} \bm{d}_j\). Under this transformation, \(\langle \bm{d}_i, \bm{d}_j \rangle_{\tilde{S}}=\langle \tilde{\bm{d}_i}, \tilde{\bm{d}_j} \rangle\), where the right-hand side denotes the Euclidean inner product. Hence, properties of the Euclidean inner product transfer directly to the NAIP; \(\tilde{\bm{d}_i}\) and \(\tilde{\bm{d}_j}\) are accordingly termed \textbf{\textit{normalized atoms}}.
\end{remark*}

%\vspace{-3pt}
\subsection{Formal Definition of Atoms}\label{section:Atom_Theory_Formal_Definition_of_Atoms}
%\vspace{-1pt}

Having established a principled perspective for understanding representations in language models, we proceed to formally define atoms. However, the preceding analysis assumes an idealized setting in which atoms are strictly orthogonal under the chosen inner product. Although this assumption yields a clean notion, it constrains the number of atoms to at most the representation dimension (\(|D| = H\)), rendering the formulation impractical.

\citet{elhage2022toy} observed that sparsity induces the emergence of approximately orthogonal representations in neural networks to cope with limited representational dimensionality, a phenomenon termed superposition. \citet{hu2025knowledge} later identified pervasive superposition in language models. Motivated by these findings, we introduce sparsity, which naturally leads to approximate orthogonality and enables a formally grounded, practical definition of atoms.

%\vspace{-2pt}
\begin{definition}[Sparsity Level]
    Let \(M \!=\! \{\bm{m}_i\}_{i=1}^{|M|} \!\subset\! \mathbb{R}^H\) be a collection of representations. Suppose there exist \(\bm D \!=\! [\bm{d}_1,\! \cdots\!, \bm{d}_{|D|}]\! \in\! \mathbb{R}^{H \times |D|}\) and \(\bm{\delta}_{i} \!\in\! \mathbb{R}^{|D|}_{\geq 0}\) such that \(\bm{m}_i\! =\! \bm D \bm{\delta}_i\), \(\forall i\!\in\![|M|]\). The \textbf{sparsity level} is \(K\!:=\!\max_{ i}\|\bm \delta_i\|_0\), where \(\|\cdot\|_0\) denotes the \(\ell_0\) norm.
\end{definition}
%\vspace{-4pt}

Sparsity enables the number of atoms to substantially exceed the ambient dimension, yielding an overcomplete structure with \(|D|\gg H\) that captures rich world knowledge, while simultaneously minimizing mutual interference.

\begin{remark*}
    In the basic setting of $\S$~\ref{section:Atom_Theory_Atomic_Inner_Product} with \(|D| = H\), one can verify that \(\tilde{\bm S} = (\bm D\bm D^\top)^{-1}\) satisfies \(\bm D^\top \tilde{\bm S} \bm D = \bm I_{|D|}\), so the atoms form an orthonormal basis under \(\langle \cdot, \cdot \rangle_S\). When \(|D|\!\gg\! H\), \(\tilde{\bm S} \!= (\bm D\bm D^\top)^{-1}\) is well defined provided \(\mathrm{rank}(\bm D) = H\), but \(\bm G \!:= \bm D^\top \tilde{\bm S} \bm D\) becomes a rank-\(H\) projection rather than \(\bm I_{|D|}\), thus the atoms cannot all be mutually orthogonal under \(\langle \cdot, \cdot \rangle_S\). Therefore, exact orthogonality is unattainable, motivating the introduction of approximate orthogonality.
\end{remark*}

This consideration motivates the following definition.

\begin{definition}[\(\epsilon\)-Approximately Orthogonal Atoms]\label{definition:Approximately_Orthogonal_Atoms}
    The atom set \(\{\bm d_i\}_{i=1}^{|D|}\) is \textbf{\(\epsilon\)-approximately orthogonal} if \(\bigl|\langle \bm d_i, \bm d_j \rangle_{\tilde S}\bigr| \le \epsilon,\ \forall i \neq j,\) where \(\langle \bm x, \bm y \rangle_{\tilde S} := \bm x^\top \tilde{\bm S} \bm y\) denotes the normalized atomic inner product and \(\tilde{\bm S} := (\bm D\bm D^\top)^{-1}\).
\end{definition}

\begin{remark*}
    In the ideal setting of exact orthogonality, the inner products \(\langle \bm d_i, \bm d_j \rangle_{\tilde S}\) (equivalently, \(\langle \tilde{\bm{d}_i}, \tilde{\bm{d}_j} \rangle\)) for \(i \neq j\) follow a Dirac measure concentrated at the origin. Under \(\epsilon\)-approximate orthogonality, they are instead expected to follow a Gaussian distribution \(\mathcal N(0, s^2)\) with small variance, which converges to the Dirac measure as \(s \to 0\).
\end{remark*}

We now introduce a formal definition of atoms.

\begin{definition}[Atoms]\label{definition:Atoms}
    Let \(M = \{ \bm{m}_i \}_{i=1}^{|M|}\) be a collection of representations. Suppose there exists \(\bm D \!= \![\bm{d}_1,\! \cdots, \!\bm{d}_{|D|}] \!\in \mathbb{R}^{H \times |D|}\) and \(\bm \Delta\!=\![\bm\delta_1,\!\cdots\!,\bm\delta_{|M|}]\!\in\mathbb R_{\ge0}^{|D|\times |M|}\) such that, for a given sparsity level \(K \in \mathbb{N}\),%\vspace{-4pt}
    \begin{equation}\label{eq:atoms}
         \forall i\in[|M|],\quad \bm{m}_i = \bm D \bm{\delta}_i\ \text{ with }\ \|\bm{\delta}_i\|_0 \leq K.%\vspace{-3pt}
    \end{equation}
    Furthermore, \(\bigl|\langle \bm d_i, \bm d_j \rangle_{\tilde S}\bigr|\! \le \!\epsilon,\ \forall i\! \neq\! j\), where \(\tilde{\bm S} \!:=\! (\bm D\bm D^\top)^{-1}\). Under these conditions, \(\{\bm d_i\}_{i=1}^{|D|}\) is called the \textbf{atom set} of \(M\), and each \(\bm d_i\) is referred to as an \textbf{atom}.
\end{definition}
%\vspace{-4pt}

Intuitively, atoms are characterized by three properties: \textbf{representability}, where each representation can be faithfully expressed by atoms, i.e., \(\bm m_i = \bm D \bm \delta_i\); \textbf{sparsity}, where each representation involves only a few atoms, i.e., \(\|\bm \delta_i\|_0 \le K\); and \textbf{separability}, where atoms are approximately orthogonal, i.e., \(|\langle \tilde{\bm d_i}, \tilde{\bm d_j} \rangle| \le \epsilon\). Representability is a natural requirement, while sparsity and separability jointly enable efficient encoding of separable information under approximate orthogonality. We further quantitatively characterize the relationship between sparsity and separability in $\S$~\ref{section:Atom_Theory_Evaluation_of_Atoms}.

\begin{remark*}
    \(K\) quantifies sparsity without enforcing a specific sparsity regime, ensuring broad applicability of the definition. Moreover, pre-multiplying both sides of \eqref{eq:atoms} by \(\tilde{\bm S}^{\frac{1}{2}}\) yields \(\tilde{\bm{m}}_i = \tilde{\bm D} \bm{\delta}_i\), with \(\tilde{\bm{m}}_i := \tilde{\bm S}^{\frac{1}{2}} \bm{m}_i\) and \(\tilde{\bm D} := \tilde{\bm S}^{\frac{1}{2}} \bm D = [\tilde{\bm{d}}_1, \cdots, \tilde{\bm{d}}_{|D|}]\), which simplifies subsequent derivations.
\end{remark*}

%\vspace{-4pt}
\subsection{Evaluation of Atoms}\label{section:Atom_Theory_Evaluation_of_Atoms}
%\vspace{-2pt}

Having formalized atoms via representability, sparsity, and separability, we now evaluate whether candidate representational units satisfy the criteria of ideal atoms. Representability is measured by the coefficient of determination \(R^2\! := \!1\! -\! \frac{\sum_i \|\bm x_i - \hat{\bm x}_i\|^2}{\sum_i \|\bm x_i - \bar{\bm x}\|^2}\), which quantifies the proportion of variance in the original representations explained by atoms and thus reflects faithfulness. While sparsity and separability can be quantified by \(K\) and \(\epsilon\), they do not directly indicate unit quality. In this section, we introduce a unified metric that integrates sparsity and separability to characterize the stability of representational units.

We note that \(\bm{\delta}_i \!\in \mathbb{R}^{|D|}\) can be viewed as a sparse representation of high-dimensional semantics, which is compressed by \(\bm D\! \in\! \mathbb{R}^{H \times |D|}\) to produce \(\bm{m}_i\!\in\! \mathbb{R}^{H}\). This perspective reveals a close connection to compressed sensing (\citet{donoho2006compressed}; \citet{candes2006robust}), whose core idea is that a signal sparse in some basis can be recovered from far fewer linear measurements than its ambient dimension, with the Restricted Isometry Property (RIP) \citep{goessmann2020restricted, chen2021conditioning} providing the essential guarantee.

\begin{definition}[Restricted Isometry Property; RIP]
    A matrix \(\tilde{\bm D} \in \mathbb{R}^{H \times |D|}\) is said to satisfy the \(K\)-RIP if there exists a constant \(\delta_K \in [0,1)\) such that, for any \(K\)-sparse vector \(\bm{\delta} \in \mathbb{R}^{|D|}\) (i.e., \(\|\bm\delta\|_0\le K\)),%\vspace{-3pt}
    \begin{equation}
        (1-\delta_K)\|\bm{\delta}\|_2^2 \le \|\tilde{\bm  D} \bm{\delta}\|_2^2 \le (1+\delta_K)\|\bm{\delta}\|_2^2.%\vspace{-3pt}
    \end{equation}
    Here, \(\delta_K\) is called the \(K\)-RIP constant of \(\tilde{\bm D}\).
\end{definition}

Intuitively, projecting a sparse vector via \(\tilde{\bm D}\) to lower dimension preserves its geometric structure, ensuring the possibility of recovery. Direct verification of the RIP is NP-hard, but coherence \citep{foucart2013invitation,murdock2020dataless} provides a computable upper bound on \(\delta_k\).

\begin{restatable}[Coherence–RIP Upper Bound]{theorem}{RIPUpperBound}\label{theorem:Coherence–RIP_Upper_Bound}
    Let \(\tilde{\bm D}\!\in \mathbb{R}^{H\times |D|}\) and define the coherence \(\mu \!:= \max_{i\neq j} |\langle \tilde{\bm d_i}, \tilde{\bm d_j}\rangle| \le \varepsilon\). For any \(K\)-sparse vector \(\bm{\delta} \in \mathbb{R}^{|D|}\) with \(\|\bm{\delta}\|_0 \le K\),
    \begin{equation}
        \big(1\!-\!(K\!-\!1)\mu\big)\|\bm{\delta}\|_2^2 \le\!\|\tilde{\bm D}\bm{\delta}\|_2^2\!\le\!\big(1\!+\!(K\!-\!1)\mu\big)\|\bm{\delta}\|_2^2.
    \end{equation}
    Hence \(\delta_K(\tilde{\bm D}) \le (K-1)\mu\); in particular, \(\tilde{\bm D}\) satisfies the \(K\)-RIP whenever \((K-1)\mu < 1\).
\end{restatable}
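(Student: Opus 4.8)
The plan is to bound the quadratic form $\|\tilde D\pmb\delta\|_2^2$ directly via the Gram matrix of the selected columns. Fix a $K$-sparse vector $\pmb\delta$ and let $T=\mathrm{supp}(\pmb\delta)$, so $|T|\le K$. Write $\tilde D_T$ for the submatrix of columns indexed by $T$ and $G_T=\tilde D_T^\top\tilde D_T$ for its Gram matrix. Then $\|\tilde D\pmb\delta\|_2^2=\pmb\delta_T^\top G_T\pmb\delta_T$, so the two inequalities are equivalent to bounding the extreme eigenvalues of $G_T$ between $1-(K-1)\mu$ and $1+(K-1)\mu$. Since the normalized atoms satisfy $\|\tilde{\pmb d}_i\|=1$ (Definition~\ref{definition:Approximately_Orthogonal_Atoms} and the accompanying remark), every diagonal entry of $G_T$ equals $1$, and every off-diagonal entry has absolute value at most $\mu\le\varepsilon$.

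The key step is then a Gershgorin-disc argument. Decompose $G_T = I + E_T$ where $E_T$ has zero diagonal and off-diagonal entries bounded in modulus by $\mu$. Each row of $E_T$ has at most $|T|-1\le K-1$ nonzero entries, so the Gershgorin radius of each disc is at most $(K-1)\mu$; hence every eigenvalue $\lambda$ of $E_T$ satisfies $|\lambda|\le (K-1)\mu$, and every eigenvalue of $G_T=I+E_T$ lies in $[1-(K-1)\mu,\,1+(K-1)\mu]$. Applying the Rayleigh-quotient characterization, $(1-(K-1)\mu)\|\pmb\delta_T\|_2^2 \le \pmb\delta_T^\top G_T\pmb\delta_T \le (1+(K-1)\mu)\|\pmb\delta_T\|_2^2$, and since $\|\pmb\delta_T\|_2=\|\pmb\delta\|_2$ this is exactly the claimed two-sided bound.

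The final conclusions follow immediately: comparing with the definition of the RIP constant as the smallest $\delta_K$ making the inequality hold for all $K$-sparse $\pmb\delta$, we get $\delta_K(\tilde D)\le (K-1)\mu$; and when $(K-1)\mu<1$ the lower constant $1-(K-1)\mu$ is strictly positive (and the upper one is below $2$, in particular finite), so $\delta_K(\tilde D)\in[0,1)$ and $\tilde D$ satisfies the $K$-RIP. One should also note $\delta_K\ge 0$ always, so the interval statement is consistent.

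Honestly there is no serious obstacle here; the only things to be careful about are bookkeeping. First, one must make sure the bound on the number of off-diagonal entries per row is $|T|-1$ rather than $|T|$, which is what produces the sharp factor $K-1$ instead of $K$; this is why the diagonal-is-one normalization matters. Second, one should confirm the reduction $\|\tilde D\pmb\delta\|_2^2=\pmb\delta_T^\top G_T\pmb\delta_T$ holds regardless of whether $|T|$ is exactly $K$ or strictly less, since monotonicity of $(K-1)\mu$ in the support size makes the worst case $|T|=K$. Everything else is the standard Gershgorin/Rayleigh estimate.
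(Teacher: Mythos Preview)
Your proof is correct and complete. The paper takes a slightly different route: rather than bounding the eigenvalues of $G_T$ via Gershgorin discs, it expands the quadratic form directly as
\[
\|\tilde D\pmb\delta\|_2^2=\sum_{i\in T}\delta_i^2+2\sum_{\substack{i<j\\i,j\in T}}\delta_i\delta_j\langle\tilde{\pmb d}_i,\tilde{\pmb d}_j\rangle,
\]
bounds the cross term by $2\mu\sum_{i<j}|\delta_i\delta_j|$, and then invokes the elementary inequality $\bigl(\sum_{i\in T}|\delta_i|\bigr)^2\le |T|\sum_{i\in T}\delta_i^2$ (Cauchy--Schwarz) to obtain $2\sum_{i<j}|\delta_i\delta_j|\le (K-1)\|\pmb\delta\|_2^2$. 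Your Gershgorin argument is the spectral counterpart of exactly this estimate: it bounds $\lambda_{\min}(E_T)$ and $\lambda_{\max}(E_T)$ directly, which is arguably cleaner and makes the eigenvalue interpretation explicit (natural, since the RIP constant is precisely a uniform eigenvalue bound over all $K$-column submatrices). The paper's approach is marginally more elementary in that it requires no named theorem beyond Cauchy--Schwarz, but both routes are standard in the compressed-sensing literature and yield the same constant $(K-1)\mu$ for the same reason---unit diagonal plus at most $K-1$ off-diagonal entries of size at most $\mu$ per row.
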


In other words, coherence provides a computable criterion for verifying the RIP, ensuring that all \(K\)-sparse vectors projected through \(\tilde{\bm D}\) preserve geometric structure, an essential prerequisite in compressed sensing. Nevertheless, the RIP alone does not preclude non-uniqueness \citep{donoho2001uncertainty,candes2005decoding}: even if \((K-1)\mu < 1\) holds, the sparse coefficients associated with representations need not be unique.

\begin{restatable}[Uniqueness and Exact \(\ell_1\!\) Recoverability]{theorem}{UniquenessRecovery}\label{theorem:Uniqueness_of_Sparse_Representation_and_Exact_Recovery}
    Let \(\tilde{\bm D}\in \mathbb{R}^{H\times |D|}\) and define the coherence \(\mu := \max_{i\neq j} |\langle \tilde{\bm d_i}, \tilde{\bm d_j}\rangle| \le \varepsilon\). If \(\mu < \frac{1}{2K-1}\), then for every \(\bm{\delta}\in\mathbb R^{|D|}\) with \(\|\bm{\delta}\|_0 \le K\), the \(K\)-sparse representation determined by \(\tilde{\bm m} = \tilde{\bm D}\bm{\delta}\) is unique; that is, no other \(K\)-sparse vector yields the same \(\tilde{\bm m}\). Moreover, \(\bm{\delta}\) is the unique minimizer of the convex program 
    \begin{equation}
        \min_{\bm{x} \in \mathbb{R}^{|D|}} \|\bm{x}\|_1\quad \text{subject to} \quad\tilde{\bm D}\,\bm{x} = \tilde{\bm m}.
    \end{equation}
\end{restatable}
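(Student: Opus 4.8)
The plan is to establish the two assertions — uniqueness of the $K$-sparse representation and exact recovery via $\ell_1$ minimization — in sequence, both resting on the classical spark/coherence argument from compressed sensing. First I would translate the coherence bound into a statement about the Gram matrix $\tilde{D}^\top\tilde{D}$: its diagonal entries are $\|\tilde{\pmb d}_i\|_2^2$ (which, by the normalization built into the NAIP and Definition~\ref{definition:Atoms}, equal $1$), and its off-diagonal entries are bounded in absolute value by $\mu$. The key intermediate fact is a lower bound on the \emph{spark} of $\tilde{D}$, i.e.\ the smallest number of linearly dependent columns: by a Gershgorin-disc argument, any set of $p$ columns with $p \le 1 + 1/\mu$ has a Gram submatrix that is strictly diagonally dominant, hence nonsingular, so $\mathrm{spark}(\tilde{D}) \ge 1 + 1/\mu$.

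For uniqueness, I would argue by contradiction: if two distinct $K$-sparse vectors $\pmb\delta \ne \pmb\delta'$ satisfy $\tilde{D}\pmb\delta = \tilde{D}\pmb\delta' = \tilde{\pmb m}$, then $\pmb\delta - \pmb\delta'$ is a nonzero vector in the kernel of $\tilde{D}$ with $\|\pmb\delta - \pmb\delta'\|_0 \le 2K$. This would force $\mathrm{spark}(\tilde{D}) \le 2K$, contradicting the spark bound provided $2K < 1 + 1/\mu$, i.e.\ $\mu < \frac{1}{2K-1}$ — exactly the hypothesis. So under this condition no competing $K$-sparse solution exists.

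For exact $\ell_1$ recoverability, I would invoke the standard coherence-based guarantee for basis pursuit: it suffices to show that any $\pmb v \in \ker(\tilde{D})\setminus\{0\}$ cannot be "concentrated" on a set $\Lambda$ of size $K$ in the sense that $\|\pmb v_\Lambda\|_1 \ge \|\pmb v_{\Lambda^c}\|_1$. The mechanism is: for $\pmb v$ in the kernel, $\tilde{D}^\top\tilde{D}\pmb v = 0$, so for each coordinate $i$, $v_i = -\sum_{j\ne i}(\tilde{D}^\top\tilde{D})_{ij} v_j$, giving $|v_i| \le \mu \sum_{j\ne i}|v_j| \le \mu(\|\pmb v\|_1 - |v_i|)$, hence $|v_i| \le \frac{\mu}{1+\mu}\|\pmb v\|_1$. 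Summing over $i\in\Lambda$ yields $\|\pmb v_\Lambda\|_1 \le \frac{K\mu}{1+\mu}\|\pmb v\|_1$, which is strictly less than $\tfrac12\|\pmb v\|_1$ exactly when $\mu < \frac{1}{2K-1}$; the strict null-space property then implies that the true $\pmb\delta$ is the unique $\ell_1$ minimizer over the feasible set $\{\pmb x : \tilde{D}\pmb x = \tilde{\pmb m}\}$, via the usual decomposition $\|\pmb\delta + \pmb v\|_1 - \|\pmb\delta\|_1 \ge \|\pmb v_{\Lambda^c}\|_1 - \|\pmb v_\Lambda\|_1 > 0$ for $\pmb v \ne 0$.

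The main obstacle I anticipate is bookkeeping rather than conceptual: one must be careful that the normalization $\|\tilde{\pmb d}_i\|_2 = 1$ actually holds so that the Gram diagonal is exactly $1$ (this is where Corollary~\ref{corollary:NAIP} and the Remark defining normalized atoms are needed — the constant $c$ must genuinely cancel), and one must handle the subtlety that the support of the true solution need not be exactly size $K$ but at most $K$, which only helps. I would also note that the strict inequality $\mu < \frac{1}{2K-1}$ (as opposed to $\le$) is what upgrades "no smaller-support solution" to "unique $\ell_1$ minimizer," and make sure the kernel-concentration bound is stated with the strict sign throughout. No heavy computation is required; the proof is essentially three short lemmas (Gershgorin $\Rightarrow$ spark bound; spark bound $\Rightarrow$ uniqueness; kernel $\ell_\infty/\ell_1$ bound $\Rightarrow$ null-space property $\Rightarrow$ $\ell_1$ exactness) chained together.
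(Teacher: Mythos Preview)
Your proposal is correct and follows essentially the same classical coherence/NSP argument as the paper: uniqueness via a $2K$-sparse kernel contradiction (you phrase it through Gershgorin/spark, the paper invokes the already-proven RIP bound from Theorem~\ref{theorem:Coherence–RIP_Upper_Bound}, but these are the same diagonal-dominance idea), and $\ell_1$ exactness via the null-space property derived from $G\pmb v=0$ on the Gram matrix. The only cosmetic difference is that you bound each $|v_i|\le \frac{\mu}{1+\mu}\|\pmb v\|_1$ and then sum, whereas the paper sums over $\mathcal S$ first and decomposes the double sum to reach $\|\pmb h_{\mathcal S}\|_1 \le \frac{K\mu}{1-(K-1)\mu}\|\pmb h_{\mathcal S^c}\|_1$; both hit the same threshold $\mu<\frac{1}{2K-1}$, and your flagged normalization concern ($G_{ii}=1$) is exactly the implicit assumption the paper also uses.
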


Crucially, this intrinsically characterizes the monorepresentationality of \(\tilde{\bm D}\): under the condition \(\mu < \tfrac{1}{2K-1}\), any representation formed as a \(K\)-sparse linear combination of representational units in \(\tilde{\bm D}\) has a unique sparse decomposition, with no other combination yielding the same representation.

\begin{restatable}[Monorepresentationality / Injectivity]{corollary}{Monorepresentationality}\label{corollary:Monorepresentationality}
    \ Under the condition \(\mu < \tfrac{1}{2K-1}\) of Theorem~\ref{theorem:Uniqueness_of_Sparse_Representation_and_Exact_Recovery}, define \(\Sigma_K := \{\bm{\delta} \in \mathbb{R}^{|D|} : \|\bm{\delta}\|_0 \le K\}\) and \(\Phi : \Sigma_K \to \mathbb{R}^H\) by \(\Phi(\bm{\delta}) := \tilde{\bm D}\bm{\delta}\). Then \(\Phi\) is injective on \(\Sigma_K\). That is, for any \(\bm{x}, \bm{y} \in \Sigma_K\), if \(\tilde{\bm D}\bm{x} = \tilde{\bm D}\bm{y}\), it follows that \(\bm{x} = \bm{y}\).
\end{restatable}

Within this regime, representational units and their combinations are unambiguous in representation space, i.e., monorepresentationality. By contrast, monosemanticity (\citet{bricken2023monosemanticity}; \citet{templeton2024scaling}) concerns the alignment of a unit with a specific meaning, concept, or function. The former is formally provable, whereas the latter is statistical and interpretive. Monorepresentationality is a prerequisite for monosemanticity, as it provides the structural stability; otherwise, units would be non-unique and interpretation unstable. Without additional semantic anchoring or inductive assumptions, theoretical guarantees primarily concern structural stability induced by monorepresentationality, while monosemanticity must be determined empirically via further semantic alignment experiments.

Sparsity (\(K\)) and separability (\(\mu\)) are thus unified by the condition \(\mu < \frac{1}{2K-1}\), under which stability holds. More practical metrics are discussed in $\S$~\ref{section:Experiments_Neurons_or_Features_as_Ideal_Atoms} and Appendix~\ref{appendix:Atoms_of_LLMs_Evaluation_Details}.

\begin{remark*}
    For intuition, the above result can be viewed as a generalization of the strictly orthogonal case (\(|D| = H\)). In this setting, \(\tilde{ \bm D} \in \mathbb{R}^{H \times H}\) has orthogonal columns (\(\mu = 0\)), i.e., \(\tilde{ \bm D}^\top \tilde{\bm  D} = \bm I\). Any representation \(\tilde{\bm m} \in \mathbb{R}^H\) then admits a unique decomposition \(\bm\delta = \tilde{\bm  D}^\top \tilde{\bm m}\), satisfying \(\tilde{\bm m} = \tilde{ \bm D} \bm\delta\). Here the sparsity level \(K = H = |D|\) satisfies \(\mu < \tfrac{1}{2K-1}\). Thus, under strict orthogonality, representations and atom coefficients are in one-to-one correspondence, yielding a direct and unique identification of atoms.
\end{remark*}

\begin{figure*}
    \centering
    \includegraphics[width=0.98\textwidth]{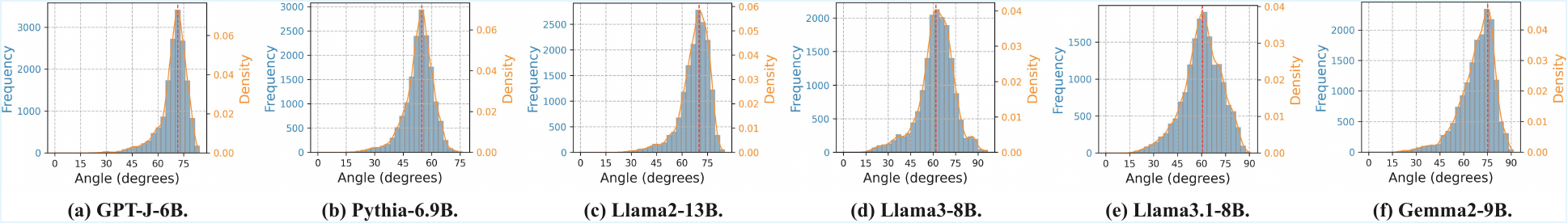}
    \caption{Representation shift at the final layer across multiple LLMs under the Euclidean inner product, with the centroid of pairwise representation angles deviating from \(90^\circ\). See Appendix~\ref{appendix:Representation_Shift} for full results.}
    \vspace{-1pt}
  \label{fig:Representations_Shifting}
\end{figure*}

\begin{figure*}
    \centering
    \includegraphics[width=0.98\textwidth]{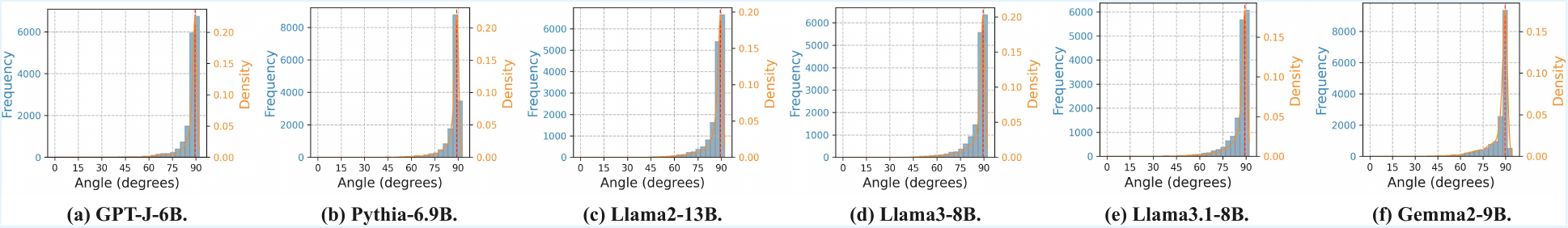}
    \caption{Correction of representation shift at the final layer across multiple LLMs via the atomic inner product, with the centroid of pairwise representation angles consistently approaching \(90^\circ\). See Appendix~\ref{appendix:Representation_Shift} for full results.}
    \vspace{-1pt}
  \label{fig:Correcting_Representations_Shifting}
\end{figure*}

\vspace{-1pt}
\subsection{Identification of Atoms}\label{section:Atom_Theory_Identification_of_Atoms}
\vspace{-1pt}

We have defined atoms and established criteria for ideal atoms. A central question remains whether such atoms can be identified in practice. Since sparse autoencoders (SAEs) are a standard approach for learning disentangled representations \cite{cunningham2023sparse}, we next demonstrate that, under appropriate conditions, SAEs can indeed identify these atoms, rendering the theory practically applicable.\vspace{-1pt}

\begin{restatable}[Identifiability of Threshold-activated SAEs; TSAEs]{theorem}{IdentifiabilitySAEs}\label{theorem:Identifiability_of_SAEs}
    Let \(M=\{\bm m_i \}_{i=1}^{|M|}\subset \mathbb R^H\) with \(\bm m_i = \bm D \bm \delta_i\), where \(\bm D = [\bm d_1,\cdots,\bm d_{|D|}] \in \mathbb{R}^{H\times |D|}\) satisfies \(|\langle \tilde{\bm{ d}_i}, \tilde{\bm{ d}_j} \rangle| \le \epsilon\) for all \(i\ne j\). Suppose each \(\bm \delta_i \in \mathbb{R}^{|D|}\) is \(K\)-sparse, i.e. \(\|\bm \delta_i\|_0 \le K\). Consider the threshold activation function\vspace{-7pt}
    \begin{equation}
        \sigma_\tau(x) =\begin{cases}0 & x < \tau, \\x & x \ge \tau,\end{cases}
        \vspace{-6pt}
    \end{equation}
    with threshold \(\tau > 0\). Assume there exist constants \(0 < \delta_{\min} \le \delta_{\max}\) such that, for each support \(\mathcal S_i = \mathrm{supp}(\bm \delta_i)\), \(\delta_{\min}\le\delta_{ij}\le\delta_{\max},\ \forall j\in \mathcal S_i\). If the amplitude gap and threshold satisfy \(\varepsilon K\delta_{\max} <\tau <\delta_{\min} - \varepsilon (K-1)\delta_{\max}\), which is feasible whenever \(\delta_{\min} > \varepsilon(2K-1)\delta_{\max}\), then setting TSAEs with \(\bm W_{dec}=\bm D\) and \(\bm W_{enc}=\bm D^\top \tilde{ \bm S}\) yields\vspace{-4pt}
    \begin{equation}
        \forall i,\quad \bm W_{dec}\,\sigma_\tau(\bm W_{enc}\bm m_i) = \bm m_i.
        \vspace{-4pt}
    \end{equation}
    This parameterization exactly recover the atom set \(D\).
    \vspace{-2pt}
\end{restatable}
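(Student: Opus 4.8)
The plan is to substitute the prescribed weights, collapse the encoder pre-activation to a near-identity map acting on the sparse code, use the coherence hypothesis to locate the threshold window in which $\sigma_\tau$ isolates the exact support, and then verify that the decoder inverts the support-projected code up to a perturbation that is mean-zero under the probabilistic model accompanying Definition~\ref{definition:Approximately_Orthogonal_Atoms}.

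First I would compute the encoder pre-activation. With $W_{enc}=D^\top\tilde S$, $\tilde S=(DD^\top)^{-1}$, and $\pmb m_i=D\pmb\delta_i$, one has $W_{enc}\pmb m_i=D^\top\tilde S D\pmb\delta_i=\tilde D^\top\tilde D\pmb\delta_i=:G\pmb\delta_i$, where $\tilde D=\tilde S^{1/2}D$ and $G$ is the Gram matrix of the normalized atoms: $G_{jk}=\langle\tilde{\pmb d}_j,\tilde{\pmb d}_k\rangle$, with $G_{jj}=\|\tilde{\pmb d}_j\|^2=1$ by the common-norm normalization and $|G_{jk}|\le\epsilon$ for $j\ne k$ by the approximate-orthogonality hypothesis. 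Writing $\mathcal S_i=\mathrm{supp}(\pmb\delta_i)$, coordinate $j$ of $W_{enc}\pmb m_i$ equals $\delta_{ij}+r_{ij}$ with $r_{ij}=\sum_{k\in\mathcal S_i\setminus\{j\}}G_{jk}\delta_{ik}$; hence $|r_{ij}|\le\epsilon(K-1)\delta_{\max}$ for $j\in\mathcal S_i$, while for $j\notin\mathcal S_i$ the coordinate is $\sum_{k\in\mathcal S_i}G_{jk}\delta_{ik}$, of magnitude at most $\epsilon K\delta_{\max}$.

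Next I would read off the support. For $j\in\mathcal S_i$ the pre-activation is at least $\delta_{\min}-\epsilon(K-1)\delta_{\max}>\tau$ by the upper bound on $\tau$, so $\sigma_\tau$ leaves it unchanged; for $j\notin\mathcal S_i$ it is at most $\epsilon K\delta_{\max}<\tau$ by the lower bound on $\tau$, so $\sigma_\tau$ sets it to $0$ (the two bounds are jointly satisfiable exactly when $\delta_{\min}>\epsilon(2K-1)\delta_{\max}$, obtained by chaining them). Therefore $\sigma_\tau(W_{enc}\pmb m_i)=P_{\mathcal S_i}(G\pmb\delta_i)=\pmb\delta_i+P_{\mathcal S_i}\pmb r_i$, where $P_{\mathcal S_i}$ denotes the coordinate projection onto $\mathcal S_i$ and $\pmb r_i=(r_{ij})_j$; in particular the active set returned by the encoder is exactly $\mathcal S_i$, and the decoder columns actually engaged are the atoms $\pmb d_j$ themselves, which is the identification claim. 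Decoding then gives $W_{dec}\sigma_\tau(W_{enc}\pmb m_i)=D\pmb\delta_i+\sum_{j\in\mathcal S_i}r_{ij}\pmb d_j=\pmb m_i+\pmb e_i$ with $\|\pmb e_i\|_2\le\epsilon(K-1)\delta_{\max}\sum_{j\in\mathcal S_i}\|\pmb d_j\|_2$.

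The residual $\pmb e_i$ is where the ``probabilistic sense'' is needed, and is the main obstacle: for fixed atoms $\pmb e_i$ is only $O(\epsilon)$ rather than $0$, and it genuinely cannot be forced to $0$ in the overcomplete regime $|D|\gg H$, where exact orthogonality is impossible (cf.\ the remark after the Sparsity assumption). To obtain the stated identity I would adopt the model of the remark following Definition~\ref{definition:Approximately_Orthogonal_Atoms}, treating the off-diagonal coherences $\langle\tilde{\pmb d}_j,\tilde{\pmb d}_k\rangle$, $j\ne k$, as independent, mean-zero variables with variance $s^2$: then each $r_{ij}$ has mean $0$, so $\mathbb{E}[\pmb e_i]=\pmb 0$, $\mathbb{E}[W_{dec}\sigma_\tau(W_{enc}\pmb m_i)]=\pmb m_i$, and $\mathbb{E}\|\pmb e_i\|_2^2\le s^2(K-1)\delta_{\max}^2\big(\sum_{j\in\mathcal S_i}\|\pmb d_j\|_2\big)^2\to 0$ as $s\to 0$ — i.e.\ the reconstruction is exact in expectation and concentrates on $\pmb m_i$ as the atom set becomes more orthogonal. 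The delicate choices are the probabilistic model for the coherences and which moment one controls; everything preceding that step is routine coherence bookkeeping.
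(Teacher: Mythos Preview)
Your proposal is correct and follows essentially the same route as the paper: compute $W_{enc}\pmb m_i=G\pmb\delta_i$ with $G=D^\top\tilde S D$, use $G_{jj}=1$ and $|G_{jk}|\le\epsilon$ to bound on- and off-support coordinates, invoke the threshold window for exact support separation, and then handle the residual $e_{ik}=\sum_{j\in\mathcal S_i\setminus\{k\}}G_{kj}\delta_{ij}$ via the probabilistic model of the remark after Definition~\ref{definition:Approximately_Orthogonal_Atoms} to obtain $\mathbb E[e_{ik}]=0$. The only cosmetic difference is that the paper phrases the final step as a law-of-large-numbers statement, whereas you phrase it as mean-zero plus vanishing variance as $s\to 0$; both are legitimate readings of the vague qualifier ``in a probabilistic sense'', and your version is arguably the more precise of the two.
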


Therefore, TSAEs can identify atoms in principle. By contrast, ReLU \cite{templeton2024scaling}, lacking a threshold term, fails to satisfy the support-separation condition and is thus theoretically insufficient. This responds to \citet{o2024compute}: the limitation of SAEs arises not from their linear–nonlinear mechanism, but from the absence of threshold activation, which prevents effective atom identification.\vspace{-1pt}

The threshold activation considered here denotes a class of activation functions rather than a specific instantiation. Existing examples include JumpReLU \citep{erichson2019jumprelu, rajamanoharan2024improving} and Top\(K\) \citep{makhzani2013k, gao2024scaling}. Although Top\(K\) is analogous in some respects, it relies on a fixed \(K\), limiting adaptivity and practical applicability. Moreover, the motivation differs fundamentally: threshold activation is proposed to enable effective atom identification under approximate orthogonality, whereas JumpReLU is introduced to address feature shrinkage \cite{rajamanoharan2024improving}, and Top\(K\) is designed to directly control sparsity \cite{gao2024scaling}.

\begin{remark*} 
    Although the theorem is stated for a uniform scalar threshold \(\tau\), it extends directly to a coordinate-wise threshold vector \(\bm \tau\), without affecting the squeeze condition or the proof. This generalization enlarges the feasible interval when activation magnitudes differ, thereby improving support separation and the robustness of atom identification. 
\end{remark*}

\section{Experiments}\label{section:Experiments}

We next empirically validate and apply Atom Theory. In $\S$~\ref{section:Experiments_Representation_Shift}, we uncover a pervasive representation shift in large language models (LLMs) and show that the atomic inner product (AIP) corrects it, capturing the representational geometry and validating the foundation of Atom Theory. In $\S$~\ref{section:Experiments_Neurons_or_Features_as_Ideal_Atoms}, we use faithfulness and stability to quantitatively reveal the limitations of neurons and features as fundamental representational units (FRUs). Leveraging the identifiability guarantees of threshold-activated SAEs (TSAEs), we establish in $\S$~\ref{section:Experiments_TSAE_Capacity_meet_Data_Scale} the relationship between data scale and TSAE capacity through large-scale experiments. Finally, in $\S$~\ref{section:Experiments_Atoms_of_LLMs}, we identify FRUs across LLMs that exhibit high faithfulness and stability, and demonstrate strong monosemanticity.

\subsection{Representation Shift}\label{section:Experiments_Representation_Shift}

In this section, we uncover a pervasive representation shift in LLMs and show that the AIP corrects it, capturing the representational geometry and grounding Atom Theory.

\vspace{-8pt}
\paragraph{Experimental Setup}

We randomly sample 128 subject entities from WikiData \cite{vrandevcic2014wikidata} and extract the corresponding activations across all layers of multiple LLM families, including GPT-2, GPT-J, Pythia, Llama-2, Llama-3, Llama-3.1, and Gemma2, which serve as the target representations. This sample size suffices to characterize the distribution of pairwise representation angles, and additional samples do not change the overall distribution (see Appendix~\ref{appendix:Representation_Shift}, Figs.~\ref{fig:Representations_Shifting_GPT2-Small_ablation}-\ref{fig:Representations_Shifting_Gemma2-9B_ablation}). It also facilitates ensuring no overlap with activations used in subsequent sampling.

We then collect 100K activations \(\bm{k}\) per layer from Wikipedia corpora (with no overlap with the target representations) and compute \(\mathbb{E}[\bm{k}\bm{k}^\top]\) to estimate the normalized AIP.

To analyze the distribution of representations, we use cosine similarity, which removes scale effects and aligns with the theoretical framework. For representations \(\bm u, \bm v \in \mathbb{R}^H\), the cosine similarity under the Euclidean inner product (EIP) is defined as \(\cos(\bm u,\! \bm v)\! \!=\! \frac{\langle \bm u, \bm v \rangle}{\|\bm u\|_2 \|\bm v\|_2}\). Under the AIP induced by \(\tilde {\bm S}\), the corresponding cosine similarity is \( \cos_{\tilde S}(\bm u,\! \bm v)\! =\! \frac{\langle \bm u, \bm v \rangle_{\tilde S}}{\|\bm u\|_{\tilde S} \|\bm v\|_{\tilde S}} \!=\! \frac{\bm u^\top\! \tilde{\bm  S} \bm v}{\sqrt{\bm u^\top\! \tilde{ \bm S} \bm u} \sqrt{\bm v^\top\! \tilde{ \bm S} \bm v}}\), where \(\tilde{ \bm S}\) is estimated in practice as \((\mathbb{E}[\bm k \bm k\!^\top])^{-1}\). For clarity, cosine similarities are further converted into angles. See Appendix~\ref{appendix:Representation_Shift} for more details.

\vspace{-8pt}
\paragraph{Experimental Results}

When representation angles are computed under the EIP, the centroid of the angular distribution deviates markedly from \(90^\circ\), revealing a representation shift (Fig.~\ref{fig:Representations_Shifting}; full results in Appendix~\ref{appendix:Representation_Shift}, Figs.~\ref{fig:Representations_Shifting_GPT2-Small}-\ref{fig:Representations_Shifting_Gemma2-9B}). This phenomenon consistently appears across all layers of LLM families, regardless of architectures or training corpora, indicating a systematic angular bias among representations. This bias shows that LLM representations are globally attracted toward a dominant direction, such that even unrelated representations exhibit high cosine similarity. As shown in Appendix~\ref{appendix:Representation_Shift} (Figs.~\ref{fig:Representations_Shifting_GPT2-Small_ablation}–\ref{fig:Representations_Shifting_Gemma2-9B_ablation}), increasing the sample size used to characterize the distribution leaves the centroid unchanged, confirming attraction toward the same direction.

In contrast, the AIP corrects this shift, restoring the centroid of the angular distribution to \(90^\circ\) (Fig.~\ref{fig:Correcting_Representations_Shifting}; full results in Appendix~\ref{appendix:Representation_Shift}, Figs.~\ref{fig:Solving_Representations_Shifting_GPT2-Small}-\ref{fig:Solving_Representations_Shifting_Gemma2-9B}). This indicates that the AIP removes the systematic angular bias induced by the EIP, so that angles between representations reflect genuine differences rather than metric-induced artifacts, which is essential for further analysis of LLM representations. Collectively, these results show that the AIP captures the underlying geometry of LLM representations and grounds Atom Theory.\vspace{-1pt}

\begin{figure}
    \centering
    \includegraphics[width=0.50\textwidth]{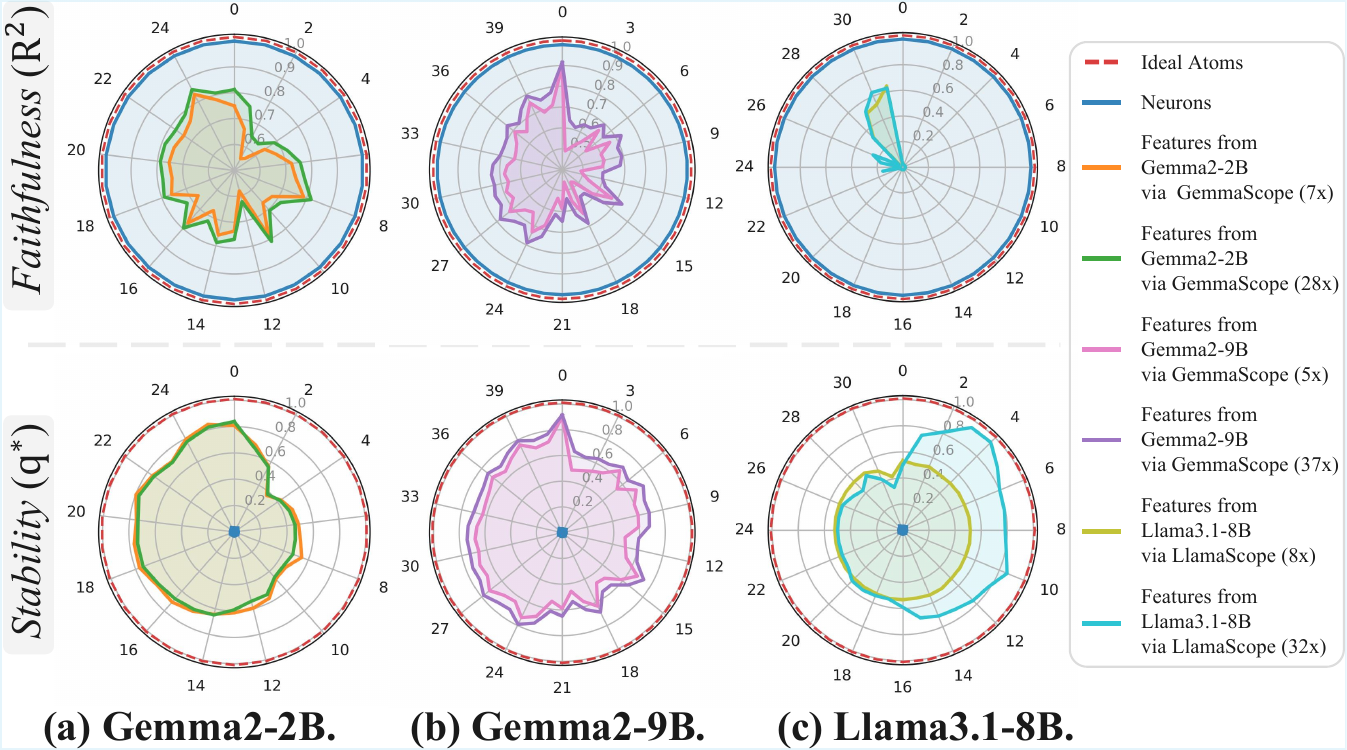}
    \caption{Comparison of neurons, features, and ideal atoms across all layers of different LLMs. Ideal atoms are required to exhibit both high faithfulness and high stability, corresponding to $R^2 = 1$ and $q^* = 1$, respectively. Values of $R^2$ below 0 are clipped to 0.}
    \vspace{-7pt}
  \label{fig:neurons_features_ideal_atoms_radar_chart}
\end{figure}

\vspace{-2pt}
\subsection{Neurons or Features as Ideal Atoms?}\label{section:Experiments_Neurons_or_Features_as_Ideal_Atoms}
\vspace{-2pt}

In this section, we evaluate whether commonly used representational units, including neurons and features, satisfy the criteria of ideal atoms, i.e., faithfulness and stability.

\vspace{-8pt}
\paragraph{Experimental Setup}

We use all 20K subject entities from the CounterFact dataset \cite{meng2022locating}, a subset of WikiData, and extract the corresponding neurons and features (details in Appendix~\ref{appendix:Atoms_of_LLMs_Baselines_Details}) activated on these entities across all layers of Gemma2-2B, Gemma2-9B, and Llama3.1-8B as the target representational units.\vspace{-1pt}

Under Atom Theory, ideal atoms are evaluated along two dimensions. \textbf{Faithfulness} measures how accurately representational units reconstruct the original representations and is quantified by the coefficient of determination \(R^2 := 1 - \frac{\sum_i \|\bm x_i - \hat{\bm x}_i\|^2}{\sum_i \|\bm x_i - \bar{\bm x}\|^2}\), where \(\hat{\bm x}_i\) and \(\bar{\bm x}\) denote the predicted representation and sample mean, respectively. \textbf{Stability} captures structural robustness and is quantified by the maximal quantile \(q^* := \sup\left\{ q \mid \mu_q < \frac{1}{2K_q - 1}\right\}\), where \(\mu_q\) and \(K_q\) denote quantile coherence and quantile sparsity. Ideal atoms satisfy \(R^2 = 1\) and \(q^* = 1\). Full definitions and evaluation details are provided in Appendix~\ref{appendix:Atoms_of_LLMs_Evaluation_Details}. \vspace{-1pt}

\vspace{-8pt}
\paragraph{Experimental Results}
As shown in Fig.~\ref{fig:neurons_features_ideal_atoms_radar_chart}, evaluation is summarized by two radar plots for faithfulness (\(R^2\), upper panel) and stability (\(q^*\), lower panel), whose axes correspond to layers of LLMs. Ideal atoms should approach 1 on each axis in both plots, thereby filling the unit circle in each plot. However, neurons, as the basic computational units of neural networks, exhibit perfect faithfulness (\(\overline{R^2}\! =\! 1\)) but extremely low stability (\(\overline{q^*}\! =\! 0.5\%\)), where \(\overline{\cdot}\) denotes average over layers (and models). Features improve stability (\(\overline{q^*}\! =\! 68.2\%\)) relative to neurons, yet remain unstable and display low faithfulness (\(\overline{R^2}\! = \!48.8\%\)). Theoretically, stability is necessary for monosemanticity ($\S$~\ref{section:Atom_Theory_Evaluation_of_Atoms}). Accordingly, the low stability of neurons implies polysemanticity, consistent with prior findings \citep{olah2020zoom}. Features exhibit higher stability and thus improved monosemanticity \citep{chen2025knowledge}, but still fall short of ideal atoms. Overall, both neurons and features exhibit a clear gap from ideal atoms.\vspace{-1pt}

\begin{figure}
    \centering
    \includegraphics[width=0.48\textwidth]{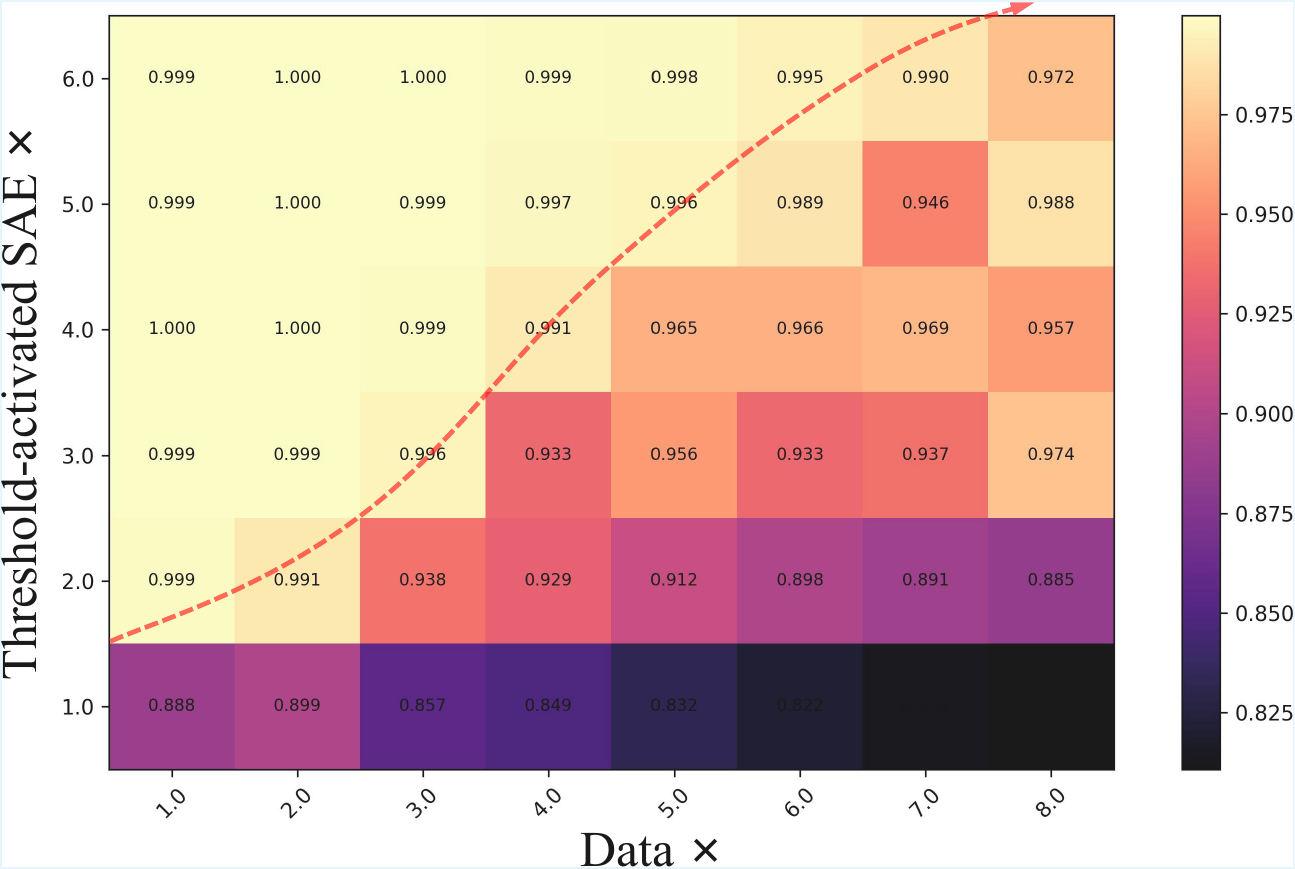}
    \caption{Matching TSAE capacity and data scale on Gemma2-2B (measured by \(R^2\)). Data \(\times\) and TSAE \(\times\) denote data scale and model capacity (interval 9,216). Red dashed lines mark the capacity range enabling reliable atom identification.}
    \vspace{-6pt}
  \label{fig:Scaling_Saes}
\end{figure}

\vspace{-2pt}
\subsection{TSAE Capacity Meet Data Scale}\label{section:Experiments_TSAE_Capacity_meet_Data_Scale}
\vspace{-2pt}

Next, we address the practical problem of identifying the FRUs of LLMs. Theorem~\ref{theorem:Identifiability_of_SAEs} shows that TSAEs are in principle capable of identifying atoms, but it does not specify how to choose the model capacity given the data scale in practice. In particular, we must determine the relationship between the data scale \(|M|\) and the model capacity \(|D|\) of TSAEs in order to reliably identify atoms.\vspace{-1pt}

\vspace{-8pt}
\paragraph{Experimental Setup}

We use subject entities from WikiData and extract the corresponding activations from the first layer of Gemma2-2B, yielding a sufficiently large representation dataset (1.9B samples), which is randomly sampled during training (details in Appendix~\ref{appendix:Atoms_of_LLMs}). Using an interval of 9,216, we evaluate \(R^2\) obtained by training TSAEs across varying data scales and model capacities. Due to the high computational cost of this grid search, experiments are conducted exclusively on Gemma2-2B. As implied by the feasible interval condition in Theorem~\ref{theorem:Identifiability_of_SAEs}, faithfulness is achievable only when stability holds. For brevity, we report faithfulness here, and present extensive empirical results on stability of the identified atoms in $\S$~\ref{section:Experiments_Atoms_of_LLMs}, confirming that the representations indeed exhibit a stable atomic structure.\vspace{-1pt}

\vspace{-8pt}
\paragraph{Experimental Results}

As shown in Fig.~\ref{fig:Scaling_Saes}, we find that high faithfulness occurs only when TSAE capacity exceeds a critical threshold for a given data scale. Intuitively, the data scale determines the scale of FRUs, which in turn determines required TSAE capacity for their identification. This finding also prompts a rethink of current SAE training paradigms, which heuristically choose model capacity and then train on massive activations from large corpora. While broadly applicable across tasks, their limited faithfulness ultimately constrains downstream reliability.\vspace{-1pt}

\vspace{-2pt}
\subsection{Atoms of LLMs}\label{section:Experiments_Atoms_of_LLMs}
\vspace{-2pt}
In this section, we identify widespread atoms of LLMs that exhibit high faithfulness and stability. Further analysis confirms that these atoms align with theoretical expectations and exhibit substantially stronger monosemanticity.\vspace{-1pt}

\vspace{-8pt}
\paragraph{Experimental Setup}

Consistent with $\S$~\ref{section:Experiments_Neurons_or_Features_as_Ideal_Atoms}, we use all subject entities from CounterFact and extract the corresponding activations across all layers of Gemma2-2B, Gemma2-9B, and Llama3.1-8B. Guided by the insights from $\S$~\ref{section:Experiments_TSAE_Capacity_meet_Data_Scale}, we uniformly adopt a \(4\times\) TSAE with JumpReLU activation \citep{erichson2019jumprelu, rajamanoharan2024jumping} to identify atoms. We evaluate the identified units with faithfulness (\(R^2\)) and stability (\(q^*\)). Full details on data, training, and computational costs are provided in Appendix~\ref{appendix:Atoms_of_LLMs}.\vspace{-1pt}

\vspace{-8pt}
\paragraph{Experimental Results}

\begin{table}[t]
\centering
\caption{Faithfulness and stability of identified units across models. Reported values are averaged over all layers. More detailed and supplementary results are provided in Appendix~\ref{appendix:Atoms_of_LLMs_Evaluation_Details} (Tabs.~\ref{tab:atom_quantiles_Gemma2-2B}–\ref{tab:atom_quantiles_Llama3.1-8B}).}
\label{tab:atom_faithfulness_stability}
\begin{tabular}{lcc}
\toprule
\textbf{Model} & \textit{Faithfulness} (\(R^2\)) & \textit{Stability} (\(q^*\)) \\
\midrule
\textsc{Gemma2-2B}   & 99.92\(\%\)  & 99.74\(\%\)  \\
\textsc{Gemma2-9B}   & 99.93\(\%\)  & 99.87\(\%\)  \\
\textsc{Llama3.1-8B} & 99.85\(\%\)  & 99.95\(\%\)  \\
\bottomrule
\end{tabular}
\vspace{-6pt}
\end{table}

Across Gemma2-2B (layers 1–26), Gemma2-9B (layers 1–42), and Llama3.1-8B (layers 1–30), we consistently achieve high faithfulness (\(\overline{R^2}\!=\!99.90\%\)) and stability (\(\overline{q^*}\!=\!99.85\%\)), as shown in Tab.~\ref{tab:atom_faithfulness_stability}. These results indicate that the identified units approach ideal atoms statistically. Further analysis (see Appendix~\ref{appendix:Atoms_of_LLMs_Experimental_Analysis}) reveals that the training process is minimally sensitive to hyperparameters (Fig.~\ref{fig:Robust_Train_Loss}); the encoder and decoder, when randomly initialized and trained independently without weight tying or additional constraints, converge to structures consistent with Theorem~\ref{theorem:Identifiability_of_SAEs} (Figs.~\ref{fig:Encoder_After_Training}-\ref{fig:Encoder_After_Training_Llama3.1-8B}); the identified atoms are approximately orthogonal under AIP, approaching the theoretical Dirac distribution (Figs.~\ref{fig:Naip_Distribution_Gemma2-2B_Full}-\ref{fig:Naip_Distribution_Llama3.1-8B_Full}).\vspace{-3.5pt}

Although stability provides a necessary structural foundation for monosemanticity, we validate this empirically. Specifically, we uniformly sample representational units across layers of models, evaluate monosemanticity using LLM-as-a-Judge with manual verification (see Appendix~\ref{appendix:Atoms_of_LLMs_Monosemanticity_Evaluation} for details and case studies). As shown in Fig.~\ref{fig:monosemanticity_score_across_models}, atoms with high faithfulness and stability consistently exhibit stronger monosemanticity.\vspace{-3.5pt}

\begin{figure}
    \centering
    \includegraphics[width=0.46\textwidth]{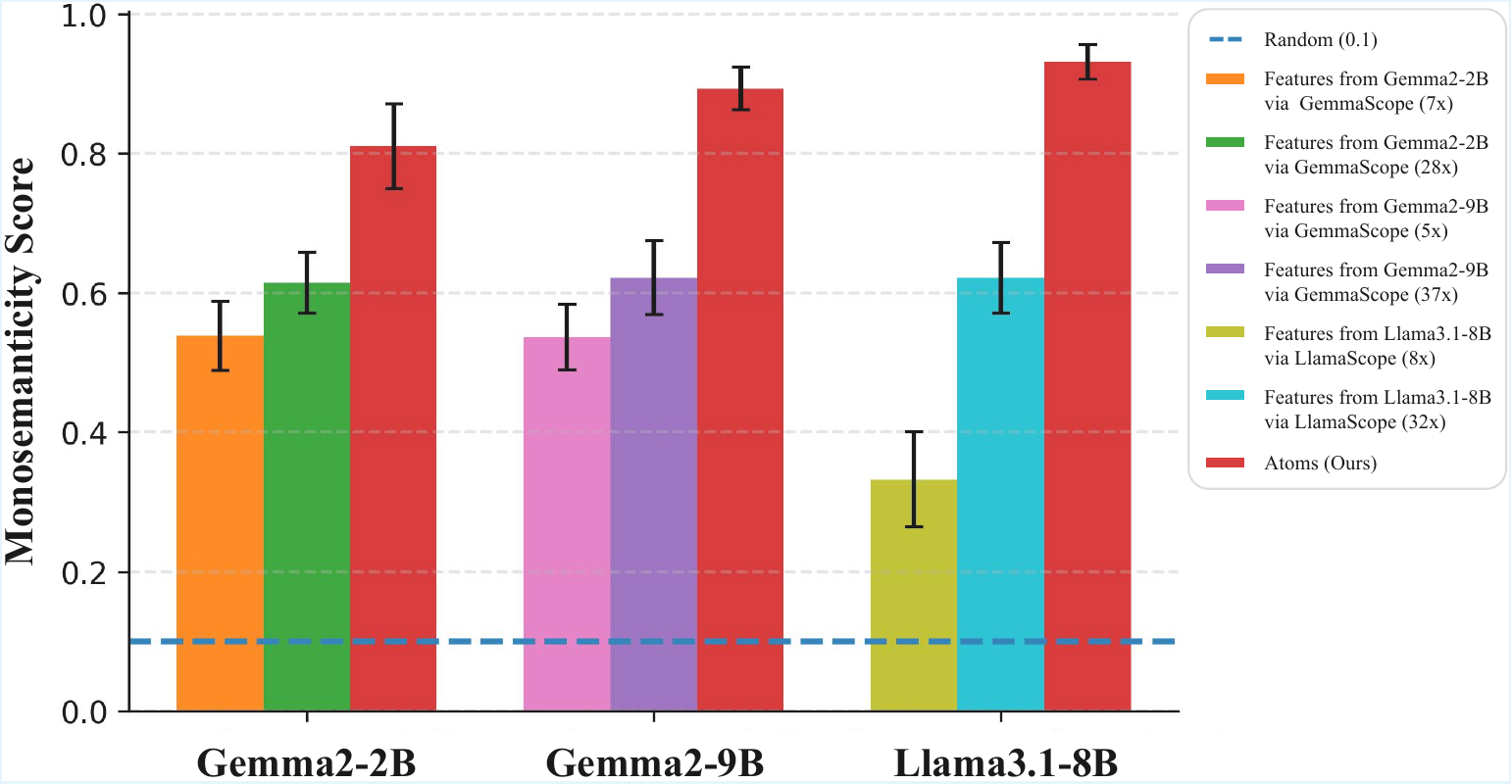}
    \caption{Monosemanticity scores of representational units across models, using GPT-5.2 with manual verification. The blue dashed line indicates random-guess performance (0.1).}
    \vspace{-8pt}
  \label{fig:monosemanticity_score_across_models}
\end{figure}

These findings reveal that LLMs contain FRUs and offer a new perspective on their internal representations.%\vspace{-1pt}

\vspace{-4pt}
\section{Related Work}
\vspace{-2pt}

\textbf{Neurons.} Early interpretability studies regarded neurons, the minimal computational units of neural networks, as the basic units of analysis. Subsequent work sought to ascribe functional interpretations to individual neurons \citep{bills2023language, geva2020transformer}. However, neurons face the polysemantic problem, activating for multiple semantically unrelated patterns \citep{olah2020zoom}, a phenomenon attributed to superposition \citep{elhage2022toy}. These findings indicate that neurons are unsuited as such units, motivating a shift from neurons to features \citep{olah2020zoom}.%\vspace{-2pt}

\textbf{Features.} Although features initially lacked a unified formal definition \citep{elhage2022toy}, they are commonly understood as linear directions with specific meaning \citep{hewitt2019structural, park2023linear, gurnee2023finding, chen2025knowledge}. Sparse autoencoders (SAEs) were introduced to learn such features \citep{cunningham2023sparse} and subsequently scaled to larger settings \citep{gao2024scaling, templeton2024scaling}. \citet{rajamanoharan2024improving} and \citet{rajamanoharan2024jumping} optimized the architecture to mitigating feature shrinkage \citep{wright2024addressing}. Despite widespread adoption \citep{lieberum2024gemma, he2024llama}, existing SAEs remain limited by incomplete reconstruction, with the unreconstructed component termed "dark matter" \citep{engels2024decomposing}, and instability from feature splitting and merging (\citealp{bussmann2025learning}; \citealp{chanin2025feature}), undermining their suitability as FRUs. We therefore propose atoms as FRUs and develop Atom Theory.

\textbf{Other related perspectives.} Our work focuses on neurons and features as two major interpretability paradigms, comparing and unifying them under the criterion of FRUs. Other interpretability studies consider higher-level objects such as attention heads \citep{olsson2022context,geva2023dissecting}, circuits \citep{meng2022locating,yao2024knowledge}, and localized model components \citep{chen2025knowledge,hu2024wilke}. Moreover, broader work has studied AI systems from the perspective of compressed sensing \citep{feng2014compressive,giryes2016deep,papyan2017convolutional,bank2018etf,liang2020image,wang2022overparameterized,li2023provable}, offering a related perspective distinct from our focus on representational units.

\section{Discussion}

%\vspace{-8pt}
\paragraph{Applicability of Atom Theory} 

Although the empirical analysis of Atom Theory in this paper primarily focuses on LLMs, the theory itself is not inherently restricted to LLMs and may be extended to other types of neural networks, such as diffusion models. Specifically, the theory is grounded in representational distinguishability. We regard distinguishability as a first principle for analyzing neural representations: if representations encoding different information cannot be distinguished, then the collapsed representations cannot effectively carry or express the corresponding information. Moreover, further analysis of Eq.~\ref{equation:Reparameterization} suggests that the motivation for introducing the AIP lies in the gauge symmetry of representation space induced by matrix multiplication, where different coordinate parameterizations can represent the same function. This implies that the coordinate parameterization of representations is not unique, and the AIP is introduced to characterize this representational geometry. Since matrix multiplication is fundamental and ubiquitous in neural networks, such gauge symmetry can be expected to arise broadly across diverse neural architectures. This is consistent with our empirical observations on language models, as well as preliminary observations on diffusion models that are not included as formal results in this paper. Therefore, the AIP is not merely applicable to language models, but can serve as a more general tool for analyzing the representational geometry of neural networks. In summary, if neural representations encoding different information are expected to be distinguishable, Atom Theory has broader potential applicability beyond language models.

\paragraph{Training Dynamics and Model Selection}

Theoretically, ideal atoms correspond to representational structures that achieve both high faithfulness and stability. In practice, due to limitations in optimization and data, this ideal state cannot be attained exactly, but can be approached. Consequently, candidate solutions on the Pareto frontier are typically close to the ideal regime. In this regime, the differences among candidates are usually small, and the resulting representational structures are functionally similar. Thus, model selection is not arbitrary or subjective, but rather resembles choosing from an approximate equivalence class of solutions that satisfy the theoretical criteria. This differs substantially from sparse autoencoders trained on activations from large-scale corpora. The Pareto frontiers obtained by such sparse autoencoders are often farther from the ideal regime and typically exhibit a stronger trade-off. We conjecture that this phenomenon is related to the training objective: fitting large-scale corpus activations with a finite-capacity sparse autoencoder often allows the model to capture only the most dominant subset of representational structures. Accordingly, the theoretical characterization of this training paradigm remains an open question.

\vspace{-6pt}
\paragraph{Extended Analysis and Applications}

The primary goal of this paper is to establish a theoretical framework for characterizing, evaluating, and identifying FRUs of LLMs. Accordingly, we focus on three aspects: (i) definitions and theoretical characterization; (ii) the development of evaluation metrics; and (iii) systematic validation across models, rather than optimization for specific downstream applications. Prior work has shown that representation analysis can support interpretability, intervention, behavior localization, and model safety. However, existing analyses often fall short of systematically characterizing how internal representations support complex behaviors in a structured manner. In contrast, FRUs provide a systematic view for understanding neural representations from the perspective of functional organization. Future work may construct large-scale representation repositories to systematically study the mechanisms underlying edge cases, anomalous behaviors, and safety-sensitive scenarios. From this perspective, developing scalable methods to incrementally construct such repositories remains an important direction.

\section{Conclusion}
\vspace{-2pt}

This paper introduces and validates Atom Theory for characterizing the fundamental representational units (FRUs) of large language models (LLMs). We formalize atoms, establish criteria for ideal atoms, and prove their identifiability via threshold-activated sparse autoencoders (TSAEs). Empirically, we show that LLMs widely contain FRUs with high faithfulness and stability, exhibiting strong monosemanticity. Overall, these results elevate interpretability from heuristic analysis to a principled theory of FRUs in LLMs.

\section*{Acknowledgements}

This work was supported by the National Natural Science Foundation of China (No.U24A20335, No.62406321), the independent research project of the Key Laboratory of Cognition and Decision Intelligence for Complex Systems, and CIPS-SMP-Zhipu Large Model Fund.

\section*{Impact Statement}

This work contributes to the interpretability of large language models by introducing Atom Theory to identify fundamental representational units. Improved understanding of internal representations may enhance transparency, safety, and reliability of AI systems, supporting model auditing and alignment. We expect the net impact to be positive.

% In the unusual situation where you want a paper to appear in the
% references without citing it in the main text, use \nocite
%\nocite{langley00}

\bibliography{example_paper}
\bibliographystyle{icml2026}

%%%%%%%%%%%%%%%%%%%%%%%%%%%%%%%%%%%%%%%%%%%%%%%%%%%%%%%%%%%%%%%%%%%%%%%%%%%%%%%
%%%%%%%%%%%%%%%%%%%%%%%%%%%%%%%%%%%%%%%%%%%%%%%%%%%%%%%%%%%%%%%%%%%%%%%%%%%%%%%
% APPENDIX
%%%%%%%%%%%%%%%%%%%%%%%%%%%%%%%%%%%%%%%%%%%%%%%%%%%%%%%%%%%%%%%%%%%%%%%%%%%%%%%
%%%%%%%%%%%%%%%%%%%%%%%%%%%%%%%%%%%%%%%%%%%%%%%%%%%%%%%%%%%%%%%%%%%%%%%%%%%%%%%
\newpage
\appendix
\onecolumn
\section{Proofs}\label{appendix:Proofs}

\subsection{Explanation for Equation~\ref{equation:Reparameterization}}\label{appendix:Proofs_Explanation}

For
\begin{equation}
\tag{\ref{equation:Reparameterization}}
    {W_U}' \leftarrow A^{-\top} W_U + \bm b \, \bm 1^\top,\ \bm h'^{L} \leftarrow A \bm h^L,
\end{equation}
we provide a simple derivation as follows:
\begin{align}
    {W_U'}^\top \bm h' &= (A^{-\top} W_U + \bm b \, \bm 1^\top)^\top (A \bm h) \\&= W_U^\top (A^{-1} A) \bm h + \bm 1 (\bm b^\top A \bm h) \\&= W_U^\top \bm h + c(\bm h) \bm 1,
\end{align}
where \(c(\bm h) = \bm b^\top A \bm h \in \mathbb{R}\) is a scalar. Using the translation invariance property of softmax, \(\mathrm{Softmax}(\bm z + c \bm 1) = \mathrm{Softmax}(\bm z)\), the result follows.

It is important to note that this is the only form, meaning that \(W_U\) can only be identified up to an invertible transformation plus a bias, and \(\bm h\) can only be identified up to an invertible transformation.

\subsection{Proof of Theorem~\ref{theorem:Explicit_Form}}
\ExplicitForm*
\begin{proof}
    Since \(\langle \cdot, \cdot \rangle_S\) is atomic inner product, for any pair of atoms \(\bm d_i\) and \(\bm d_j\) we have
    \begin{equation}
        \bm d_i^{\top} S \bm d_j \;=\;\langle \bm d_i, \bm d_j \rangle_{S}=\begin{cases}0 & i \ne j, \\[4pt]c^2 & i = j .\end{cases}
    \end{equation}
    Applying this property to the atom set \(D=[\bm d_1,\bm d_2,\cdots,\bm d_{|D|}]\) yields
    \begin{equation}
        c^2 I = D^\top S D .
    \end{equation}
    Since \(\{\bm d_1,\bm d_2,\cdots,\bm d_{|D|}\}\) spans the atomic space \(\mathcal D\), and \(\mathcal D \simeq \mathbb R^H\), it follows that \(\mathrm{rank}(D)=H\). Let \(S^{1/2}\) denote the symmetric positive-definite square root of \(S\). Then
    \begin{equation}
        D^\top S D = (S^{1/2} D)^\top (S^{1/2} D),
    \end{equation}
    which implies \(\mathrm{rank}(I) = \mathrm{rank}(D^\top S D) = \mathrm{rank}(S^{1/2} D) = \mathrm{rank}(D)\). Therefore, \(\mathrm{rank}(D) = |D| \leq H\). Combining this with the earlier condition gives \(|D| = \mathrm{rank}(D) = H\), which shows that \(D\) is invertible. Consequently,
    \begin{equation}
        S = c^2 (D D^\top)^{-1}.
    \end{equation}
\end{proof}

\subsection{Proof of Corollary~\ref{corollary:NAIP}}
\NAIP*
\begin{proof}
    Since \(\langle \bm{d}_i, \bm{d}_j \rangle_S = 0\) for \(i \neq j\), and \(\|\bm{d}_i\|_S = \|\bm{d}_j\|_S = c > 0\), it follows that \(D^\top S D = c^2 I\). Therefore, \(\tilde{S} = \frac{1}{c^2} S\) satisfies \(D^\top \tilde{S} D = I\), which implies that the atoms are orthonormal. Since $D$ is invertible, we also have \(\tilde{S} = D^{-\top} D^{-1} = (D D^\top)^{-1}\), and thus \(\langle \cdot, \cdot \rangle_{\tilde{S}}\) is a symmetric positive-definite inner product.
\end{proof}

\subsection{Proof of Theorem~\ref{theorem:Coherence–RIP_Upper_Bound}}
\RIPUpperBound*
\begin{proof}
    Let \(\mathrm{supp}(\boldsymbol \delta) = \mathcal{S} \subseteq [|D|]\) and \(|\mathcal{S}| \leq K\). Then, we have
    \begin{equation}
        \|\tilde{D} \bm{\delta}\|_2^2 = \bm{\delta}^\top (\tilde{D}^\top \tilde{D}) \bm{\delta} = \sum_{i \in \mathcal{S}} \delta_i^2 + 2 \sum_{\substack{i < j \\ i, j \in \mathcal{S}}} \delta_i \delta_j \langle \tilde{\bm{d}_i}, \tilde{\bm{d}_j} \rangle.
    \end{equation}
    By the fact that \(|\langle \tilde{\bm{d}_i}, \tilde{\bm{d}_j} \rangle| \leq \mu\) and applying the triangle inequality, we obtain:
    \begin{align}
        \|\tilde{D} \bm{\delta}\|_2^2 & \geq \sum_{i \in \mathcal{S}} \delta_i^2 - 2 \mu \sum_{i < j} |\delta_i \delta_j|, \\
        \|\tilde{D} \bm{\delta}\|_2^2 & \leq \sum_{i \in \mathcal{S}} \delta_i^2 + 2 \mu \sum_{i < j} |\delta_i \delta_j|.
    \end{align}
    Next, we observe that:
    \begin{equation}
        \left( \sum_{i \in \mathcal{S}} |\delta_i| \right)^2 = \sum_{i \in \mathcal{S}} \delta_i^2 + 2 \sum_{i < j} |\delta_i \delta_j| \leq |\mathcal{S}| \sum_{i \in \mathcal{S}} \delta_i^2 \leq K \sum_{i \in \mathcal{S}} \delta_i^2.
    \end{equation}
    Thus, we have \(2 \sum_{i < j} |\delta_i \delta_j| \leq (K - 1) \sum_{i \in \mathcal{S}} \delta_i^2\). Substituting this back, we conclude the proof.
\end{proof}

\subsection{Proof of Theorem~\ref{theorem:Uniqueness_of_Sparse_Representation_and_Exact_Recovery}}
\UniquenessRecovery*
\begin{proof}
    We first prove that under the condition $\mu < \tfrac{1}{2K-1}$, the $K$-sparse representation is unique.

    Suppose there exist two distinct \(K\)-sparse coefficient vectors \(\bm \delta, \bm \delta'\) such that \(\tilde D \bm \delta = \tilde D \bm \delta'\). Let \(\bm h = \bm \delta - \bm \delta' \neq \bm 0\). Then \(\tilde D \bm h = 0\) and \(\|\bm h\|_0 \leq 2K\). By Theorem~\ref{theorem:Coherence–RIP_Upper_Bound} (applied with \(K\) replaced by \(2K\)), we have
    \begin{equation}
        \big(1 - (2K-1)\mu \big)\,\|\bm h\|_2^2\ \leq\ \|\tilde D \bm h\|_2^2\ =\ 0.
    \end{equation}
    If $\mu < \tfrac{1}{2K-1}$, then the prefactor on the left is strictly positive, which forces $\|\bm h\|_2=0$. This contradicts $\bm h \neq \bm 0$. Hence uniqueness holds.

    Next, we prove that under the same condition $\mu < \tfrac{1}{2K-1}$, the sparse vector $\bm \delta$ is also the unique solution of the convex optimization problem
    \begin{equation}
        \min_{\bm x \in \mathbb{R}^{|D|}} \|\bm x\|_1\quad \text{s.t.} \quad\tilde D \bm x = \tilde{\bm m}.
    \end{equation}
    The overall strategy is as follows: (i) show that the null space property of order $K$ (NSP$_K$) holds under the assumption $\mu < \frac{1}{2K-1}$; (ii) recall the equivalence NSP$_K$ $\iff$ exact and unique recovery of any $K$-sparse solution via noiseless $\ell_1$-minimization.

    Formally, the null space property of order $K$ (NSP$_K$) is defined as
    \begin{equation}
        \forall \bm h \in \ker(\tilde D) \setminus \{\bm 0\},\ \forall \mathcal S \subseteq [n],\ |\mathcal S| \le K:\quad\boxed{\,\|\bm h_\mathcal S\|_1 < \|\bm h_{\mathcal S^c}\|_1\,},
    \end{equation}
    where $\ker(\tilde D) = \{ \bm h : \tilde D \bm h = \bm0 \}$, $\mathcal S \subseteq [n]$ is an index set with $[n]=\{1,\dots,n\}$, $\bm h_\mathcal S$ denotes the restriction of $\bm h$ to the coordinates in $\mathcal S$ (with other entries set to zero), and \(\mathcal S^c=[n]\setminus \mathcal S\) is a complementary set.

    \paragraph{Step (i): Proof of NSP$_K$.} Let $G = \tilde D^\top \tilde D$ denote the Gram matrix. Since each column of $\tilde D$ is normalized, we have $G_{jj} = 1$ and $|G_{ij}| \leq \mu$ for $i \neq j$. Take any $\bm h \in \ker(\tilde D)\setminus\{\bm 0\}$ and any index set $\mathcal S$ with $|\mathcal S| = K$. 
    
    Since $\tilde D \bm h = 0$, we have $G \bm h = 0$. For any $j$,
    \begin{equation}
        0 = (G \bm h)_j = \sum_i G_{ji} h_i= G_{jj} h_j + \sum_{i \neq j} G_{ji} h_i\quad \Rightarrow \quad h_j = -\sum_{i \neq j} G_{ji} h_i.
    \end{equation}
    
    Taking absolute values and using $|G_{ji}| \leq \mu$, we obtain
    \begin{equation}
        |h_j| \leq \mu \sum_{i \neq j} |h_i|.
    \end{equation}
    
    Summing over $j \in \mathcal S$ gives
    \begin{equation}
        \sum_{j \in \mathcal S} |h_j|\ \leq \ \mu \sum_{j \in \mathcal S} \sum_{i \neq j} |h_i|.
    \end{equation}
    
    The inner summation can be decomposed into contributions from $i \in \mathcal S \setminus \{j\}$ and $i \in \mathcal S^c$:
    \begin{equation}
        \sum_{j \in \mathcal S}\sum_{i \neq j}|h_i|= \underbrace{\sum_{j \in \mathcal S}\sum_{\substack{i \in \mathcal S \\ i \neq j}} |h_i|}_{\text{each }i \in \mathcal S \text{ counted }K-1\text{ times}}+ \underbrace{\sum_{j \in \mathcal S}\sum_{i \in \mathcal S^c} |h_i|}_{\text{each }i \in \mathcal S^c \text{ counted }K\text{ times}}.
    \end{equation}
    
    Hence,
    \begin{equation}
        \|\bm h_\mathcal S\|_1 \ \leq\ \mu \big( (K-1)\|\bm h_\mathcal S\|_1 + K \|\bm h_{\mathcal S^c}\|_1 \big).
    \end{equation}
    
    Rearranging,
    \begin{equation}
        \big(1 - (K-1)\mu \big)\|\bm h_\mathcal S\|_1 \ \leq\ K\mu\,\|\bm h_{\mathcal S^c}\|_1.
    \end{equation}
    
    Dividing through by the positive factor $1 - (K-1)\mu$, define
    \begin{equation}
        \alpha := \frac{K\mu}{1 - (K-1)\mu}.
    \end{equation}
    
    When $\mu < \tfrac{1}{2K-1}$, we have $\alpha < 1$. Since $\bm h \neq 0$ and $\tilde D \bm h = 0$, it is impossible for $\|\bm h_{\mathcal S^c}\|_1 = 0$ (otherwise both terms would vanish, forcing $\bm h=0$, a contradiction). Therefore,
    \begin{equation}
        \|\bm h_\mathcal S\|_1 \ \leq\ \alpha \|\bm h_{\mathcal S^c}\|_1 \ < \ \|\bm h_{\mathcal S^c}\|_1.
    \end{equation}
    
    Take any $\mathcal S_0$ with $|\mathcal S_0| = k \leq K$, and extend it to a superset $\mathcal S \supseteq \mathcal S_0$ such that $|\mathcal S| = K$. Then,
    \begin{equation}
        \|\bm h_{\mathcal S_0}\|_1 \ \leq\ \|\bm h_{\mathcal S}\|_1, \qquad \|\bm h_{\mathcal S_0^c}\|_1 \ \geq\ \|\bm h_{\mathcal S^c}\|_1.
    \end{equation}
    
    If we know that $\|\bm h_\mathcal S\|_1 < \|\bm h_{\mathcal S^c}\|_1$ holds for all $\mathcal S$ of size $K$, then it follows that
    \begin{equation}
        \|\bm h_{\mathcal S_0}\|_1 \ \leq\ \|\bm h_{\mathcal S}\|_1 \ <\ \|\bm h_{\mathcal S^c}\|_1 \ \leq\ \|\bm h_{\mathcal S_0^c}\|_1.
    \end{equation}
    
    Thus, the inequality also holds for any $\mathcal S_0$ with $|\mathcal S_0| \leq K$, which establishes NSP$_K$.

    \paragraph{Step (ii): Equivalence between NSP$_K$ and $\ell_1$ recovery.}

    \paragraph{NSP$_K$ $\Rightarrow$ unique $\ell_1$ recovery:} Suppose $\hat{\bm x}$ is another feasible solution such that $\tilde D \hat{\bm x} = \tilde D \bm \delta$. Let $\bm h = \hat{\bm x} - \bm \delta \in \ker(\tilde D) \setminus \{\bm 0\}$, and let $\mathcal S = \mathrm{supp}(\bm \delta)$ with $|\mathcal S|\le K$. Then
    \begin{equation}
        \|\hat{\bm x}\|_1 = \|\bm \delta + \bm h\|_1= \|\bm \delta_\mathcal S + \bm h_\mathcal S\|_1 + \|\bm h_{\mathcal S^c}\|_1\ \geq\ \|\bm \delta_\mathcal S\|_1 - \|\bm h_\mathcal S\|_1 + \|\bm h_{\mathcal S^c}\|_1\ >\ \|\bm \delta_\mathcal S\|_1= \|\bm \delta\|_1,
    \end{equation}
    where the strict inequality follows from NSP$_K$. Hence, $\bm \delta$ is the unique minimizer of the $\ell_1$ problem.

    \paragraph{Unique $\ell_1$ recovery $\Rightarrow$ NSP$_K$:} We argue by contradiction. If NSP$_K$ does not hold, then there exists $\bm h \in \ker(\tilde D)\setminus\{\bm 0\}$ and some $\mathcal S$ with $|\mathcal S|\le K$ such that $\|\bm h_\mathcal S\|_1 \geq \|\bm h_{\mathcal S^c}\|_1$. Take any nonzero $K$-sparse $\bm \delta$ with $\mathrm{supp}(\bm \delta) = \mathcal S$, and choose $\delta_j = \alpha_j \operatorname{sgn}(h_j)$ with $\alpha_j \geq |h_j|$ coordinate-wise. Consider $\hat{\bm x} = \bm \delta - \bm h$. Since $\tilde D \bm h = \bm 0$, both $\bm \delta$ and $\hat{\bm x}$ are feasible, and
    \begin{equation}
        \|\hat{\bm x}\|_1 = \|\bm \delta_\mathcal S - \bm h_\mathcal S\|_1 + \|\bm h_{\mathcal S^c}\|_1= \|\bm \delta_\mathcal S\|_1 - \|\bm h_\mathcal S\|_1 + \|\bm h_{\mathcal S^c}\|_1\ \leq \ \|\bm \delta_\mathcal S\|_1= \|\bm \delta\|_1.
    \end{equation}
    
    Thus, $\bm \delta$ is not the unique minimizer of the $\ell_1$ problem (and may even fail to be a minimizer). This contradicts the uniqueness assumption. Therefore, NSP$_K$ must hold.
\end{proof}

\subsection{Proof of Corollary~\ref{corollary:Monorepresentationality}}
\Monorepresentationality*
\begin{proof}
    Take arbitrary \(\bm{x}, \bm{y} \in \Sigma_K\) such that \(\tilde{D}\bm{x} = \tilde{D}\bm{y}\). Let \(\tilde{\bm{m}} := \tilde{D}\bm{x}\), then clearly \(\tilde{\bm{m}} = \tilde{D}\bm{y}\), with \(\|\bm{x}\|_0 \le K\) and \(\|\bm{y}\|_0 \le K\).
    
    By Theorem~\ref{theorem:Uniqueness_of_Sparse_Representation_and_Exact_Recovery}, under the condition \(\mu < \tfrac{1}{2K-1}\), the \(K\)-sparse representation \(\bm{\delta}\) determined by the equation \(\tilde{\bm{m}} = \tilde{D}\bm{\delta}\) is unique: among all coefficient vectors satisfying \(\|\bm{\delta}\|_0 \le K\), there exists only one that generates \(\tilde{\bm{m}}\).
    
    Since both \(\bm{x}\) and \(\bm{y}\) are feasible solutions satisfying \(\|\bm{\delta}\|_0 \le K\) and \(\tilde{D}\bm{\delta} = \tilde{\bm{m}}\), uniqueness forces \(\bm{x} = \bm{y}\). Hence, \(\Phi\) is injective on \(\Sigma_K\).
\end{proof}

\subsection{Proof of Theorem~\ref{theorem:Identifiability_of_SAEs}}
\IdentifiabilitySAEs*
\begin{proof}
    Consider a single-layer linear–nonlinear encoder of the form $W_{\mathrm{dec}}\,\sigma_\tau(W_{\mathrm{enc}}\bm m_i)$, with training objective
    \begin{equation}
        W_{\mathrm{dec}}\,\sigma_\tau(W_{\mathrm{enc}}\bm m_i) = \bm m_i, \quad \forall i.
    \end{equation}
    
    Set $W_{\mathrm{dec}} = D$ and $W_{\mathrm{enc}} = D^\top \tilde S$. Denote $\mathcal S_i = \mathrm{supp}(\bm \delta_i)$. Then

    \begin{align}
        D^\top \tilde S \bm m_i 
        &= 
        \begin{bmatrix}
        \bm d_1^\top \\ \bm d_2^\top \\ \vdots \\ \bm d_{|D|}^\top
        \end{bmatrix} 
        \tilde S
        \begin{bmatrix}
        \bm d_1 & \bm d_2 & \cdots & \bm d_{|D|}
        \end{bmatrix}\bm \delta_i \\
        &= 
        \begin{bmatrix}
        \bm d_1^\top \tilde S \bm d_1 & \cdots & \bm d_1^\top \tilde S \bm d_{|D|}\\
        \bm d_2^\top \tilde S \bm d_1 & \cdots & \bm d_2^\top \tilde S \bm d_{|D|}\\
        \vdots & \ddots & \vdots \\
        \bm d_{|D|}^\top \tilde S \bm d_1 & \cdots & \bm d_{|D|}^\top \tilde S \bm d_{|D|}
        \end{bmatrix}\bm \delta_i \\
        &= G\,\bm \delta_i,\quad G := D^\top \tilde S D .
    \end{align}
    
    By NAIP, we have $G_{kk} = 1$ and for $k \neq j$, $|G_{kj}| = |\langle \tilde{\bm d_k}, \tilde{\bm d_j} \rangle| \leq \varepsilon$. Thus, for any index $k$,
    \begin{equation}
        (G\bm \delta_i)_k =\begin{cases}\delta_{ik} + \underbrace{\sum_{j \in \mathcal S_i \setminus \{k\}} \delta_{ij} G_{kj}}_{=:e_{ik}}, & k \in \mathcal S_i, \\[6pt]\underbrace{\sum_{j \in \mathcal S_i} \delta_{ij} G_{kj}}_{=:e_{ik}}, & k \notin \mathcal S_i.\end{cases}
    \end{equation}
    
    Using the coherence bound, we obtain the deterministic perturbation estimate
    \begin{equation}
        \begin{cases}(G\bm \delta_i)_k \ \geq\ \delta_{ik} - \varepsilon (K-1)\delta_{\max}, & k \in \mathcal S_i, \\[4pt](G\bm \delta_i)_k \ \leq\ \varepsilon K \delta_{\max}, & k \notin \mathcal S_i.\end{cases}
    \end{equation}
    
    Choose a threshold $\tau$ such that
    \begin{equation}
        \varepsilon K \delta_{\max} \ <\ \tau\ <\ \delta_{\min} - \varepsilon (K-1)\delta_{\max}.
    \end{equation}
    
    This ensures support separation
    \begin{equation}
        \begin{cases}(G\bm \delta_i)_k > \tau, & k \in \mathcal S_i, \\[4pt](G\bm \delta_i)_k < \tau, & k \notin \mathcal S_i.\end{cases}
    \end{equation}

    Therefore, the coordinate-wise nonlinearity
    \begin{equation}
        \sigma_\tau(x) := \begin{cases}0 & x < \tau, \\x & x \geq \tau\end{cases}
    \end{equation}
    produces activations $\bm z_i := \sigma_\tau(G\bm \delta_i)$, with $\mathrm{supp}(\bm z_i) = \mathcal S_i$. For $k \in \mathcal S_i$,
    \begin{equation}
        z_{ik} = (G\bm \delta_i)_k = \delta_{ik} + e_{ik}.
    \end{equation}
    
    Since for any $j \neq k$, $G_{kj} = \bm d_k^\top \tilde S \bm d_j$ is distributed approximately as $\mathcal N(0, s^2)$ with small variance $s$, it follows that
    \begin{equation}
        \mathbb E[e_{ik}] = \mathbb E\!\left[\sum_{j \in \mathcal S_i \setminus \{k\}} \delta_{ij} G_{kj}\right] = \sum_{j \in \mathcal S_i \setminus \{k\}} \delta_{ij}\, \mathbb E[G_{kj}] = 0.
    \end{equation}
    
    Since $\mathbb E[e_{ik}] = 0$ for all $i, k$, the law of large numbers implies that, in probability,
    \begin{equation}
        D\,\sigma_\tau(D^\top \tilde S \bm m_i) = \bm m_i, \quad \forall i.
    \end{equation}
    
    Thus, under this parametrization, the SAE recovers the target atom set $D$.
\end{proof}

\section{Representation Shift}\label{appendix:Representation_Shift}

In this section, we provide further details on representation shift, including the experimental setup, ablation studies, and additional analyses, to enable a comprehensive understanding of this phenomenon.

Specifically, we randomly sample 128 subject entities from the WikiData dataset \citep{vrandevcic2014wikidata} and extract the corresponding activations across all layers of multiple language model families, including GPT-2 (GPT2-Small, GPT2-Medium, GPT2-Large), GPT-J (GPT-J-6B), Pythia (Pythia-1B, Pythia-1.4B, Pythia-2.8B, Pythia-6.9B), Llama-2 (Llama2-7B, Llama2-13B), Llama-3 (Llama3-8B), Llama-3.1 (Llama3.1-8B), and Gemma-2 (Gemma2-2B, Gemma2-9B), which serve as the target representations for our analysis.

For each entity, we extract activations at the position of its final token, which has been empirically identified via causal tracing as the key site of knowledge extraction in language models \citep{meng2022locating}. Thus, for each layer we obtain 128 representations, yielding \(128 \times 128 = 16{,}384\) representation pairs for analyzing angular distributions. This sample size is sufficient to capture the overall distributional characteristics, as additional samples do not alter the distribution (Figs.~\ref{fig:Representations_Shifting_GPT2-Small_ablation}–\ref{fig:Representations_Shifting_Gemma2-9B_ablation}), while also facilitating non-overlap with activations used in subsequent sampling.

We first compute the angles between representation pairs using the Euclidean inner product. Specifically, for representations \(\bm u, \bm v \in \mathbb{R}^H\), the Euclidean cosine similarity is defined as \(\cos(\bm u, \bm v) = \frac{\langle \bm u, \bm v \rangle}{\|\bm u\|_2 \|\bm v\|_2}\), which we then convert to angles for clearer visualization.

The full results (Figs.~\ref{fig:Representations_Shifting_GPT2-Small}–\ref{fig:Representations_Shifting_Gemma2-9B}) show that when angles are computed under the Euclidean inner product, the centroid of the angular distribution deviates markedly from \(90^\circ\), indicating a representation shift. This phenomenon consistently appears across all layers and model families, independent of model architectures and training corpora, and thus reflects a systematic non-uniformity in representation distributions. Such anisotropy distorts the underlying geometry of representations, which should be isotropic for random epresentations. In LLMs, this anisotropy is nearly indiscriminate: representations are globally attracted toward a dominant direction, yielding high cosine similarity even for unrelated samples. Direct evidence is provided in Figs.~\ref{fig:Representations_Shifting_GPT2-Small_ablation}–\ref{fig:Representations_Shifting_Gemma2-9B_ablation}: increasing the sample size used to estimate the distribution leaves the centroid unchanged, confirming that representations are indeed attracted toward the same direction.

We then collect 100K activations \(\bm k\) per layer for each model from Wikipedia (manually verified to have no overlap with the target representations) and compute \(\mathbb{E}[\bm k \bm k^\top]\) to estimate the normalized atomic inner product. Under the inner-product space induced by \(\tilde S\), the cosine similarity is defined as \(\cos_{\tilde S}(\bm u,\bm v) = \frac{\langle \bm u, \bm v\rangle_{\tilde S}}{\|\bm u\|_{\tilde S}\|\bm v\|_{\tilde S}}= \frac{\bm u^\top \tilde S \bm v}{\sqrt{\bm u^\top \tilde S \bm u}\sqrt{\bm v^\top \tilde S \bm v}}\), where \(\tilde S\) is estimated in practice as (\(\mathbb{E}[\bm k \bm k^\top])^{-1}\). As before, cosine similarities are converted to angles for clearer visualization.

As shown in Figs.~\ref{fig:Solving_Representations_Shifting_GPT2-Small}–\ref{fig:Solving_Representations_Shifting_Gemma2-9B}, the atomic inner product corrects the representation shift, restoring the centroid of the angular distribution to \(90^\circ\) and thereby capturing the underlying representational geometry, aligned with theoretical expectations. This effect holds consistently across all layers and model families. Further inspection shows that the long tail of the distribution (high-similarity pairs) corresponds to highly related samples, indicating that LLM representations are intrinsically isotropic when measured with the appropriate metric.

Although we identify the correct inner product for LLM representations and verify that their global geometry matches theoretical expectations, substantial superposition remains pervasive \citep{hu2025knowledge}. For example, Fig.~\ref{fig:Superposition_Gemma2-2B} shows that activations in Gemma2-2B still exhibit widespread superposition, indicating that these representations are not fully disentangled. The persistence of superposition suggests that raw activations are not the most suitable fundamental representational units. We therefore decompose high-dimensional representations into atoms that better satisfy the criteria for FRUs; the resulting decomposition (Fig.~\ref{fig:Solving_Superposition_Gemma2-2B}) demonstrates that the identified atoms effectively disentangle representations and resolve superposition.

\begin{figure}
    \centering
    \includegraphics[width=0.96\textwidth]{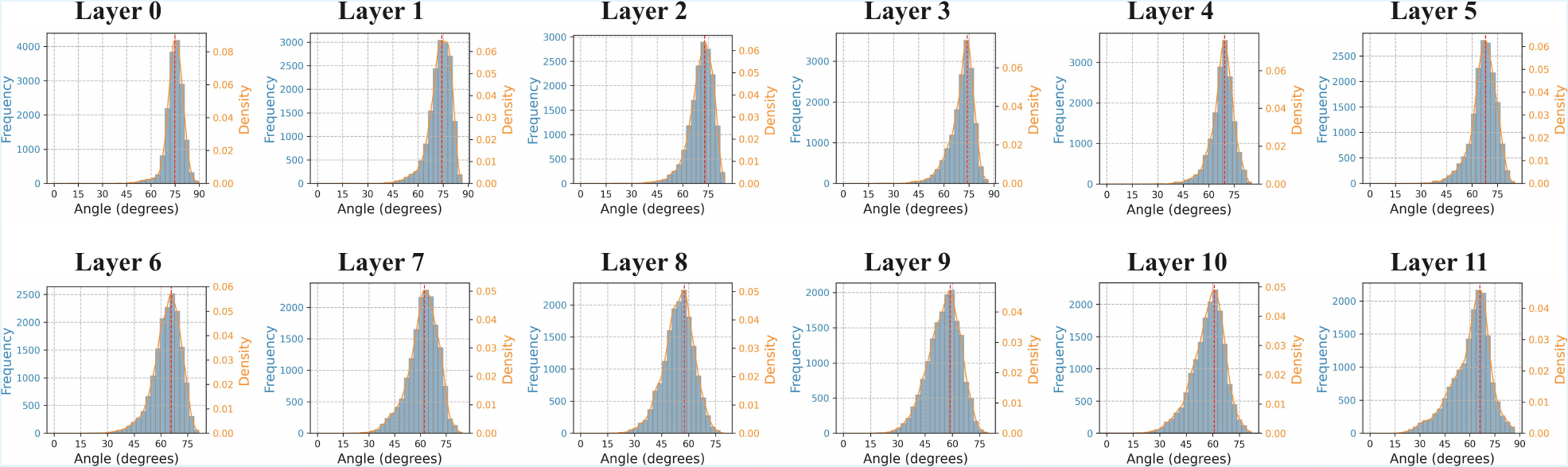}
    \caption{Representation shift of GPT2-Small.}
  \label{fig:Representations_Shifting_GPT2-Small}
\end{figure}

\begin{figure}
    \centering
    \includegraphics[width=0.96\textwidth]{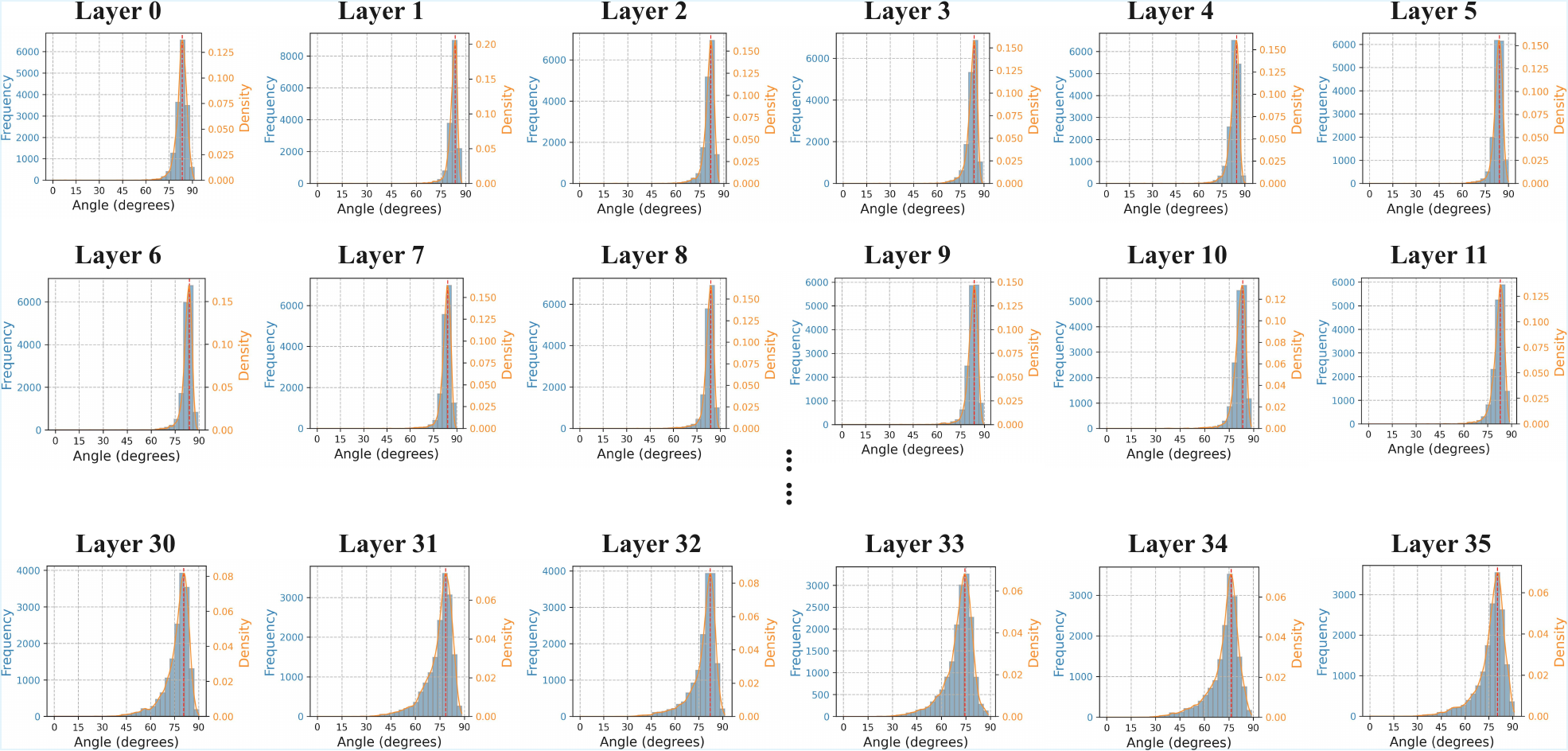}
    \caption{Representation shift of GPT2-Large.}
  \label{fig:Representations_Shifting_GPT2-Large}
\end{figure}

\begin{figure}
    \centering
    \includegraphics[width=0.96\textwidth]{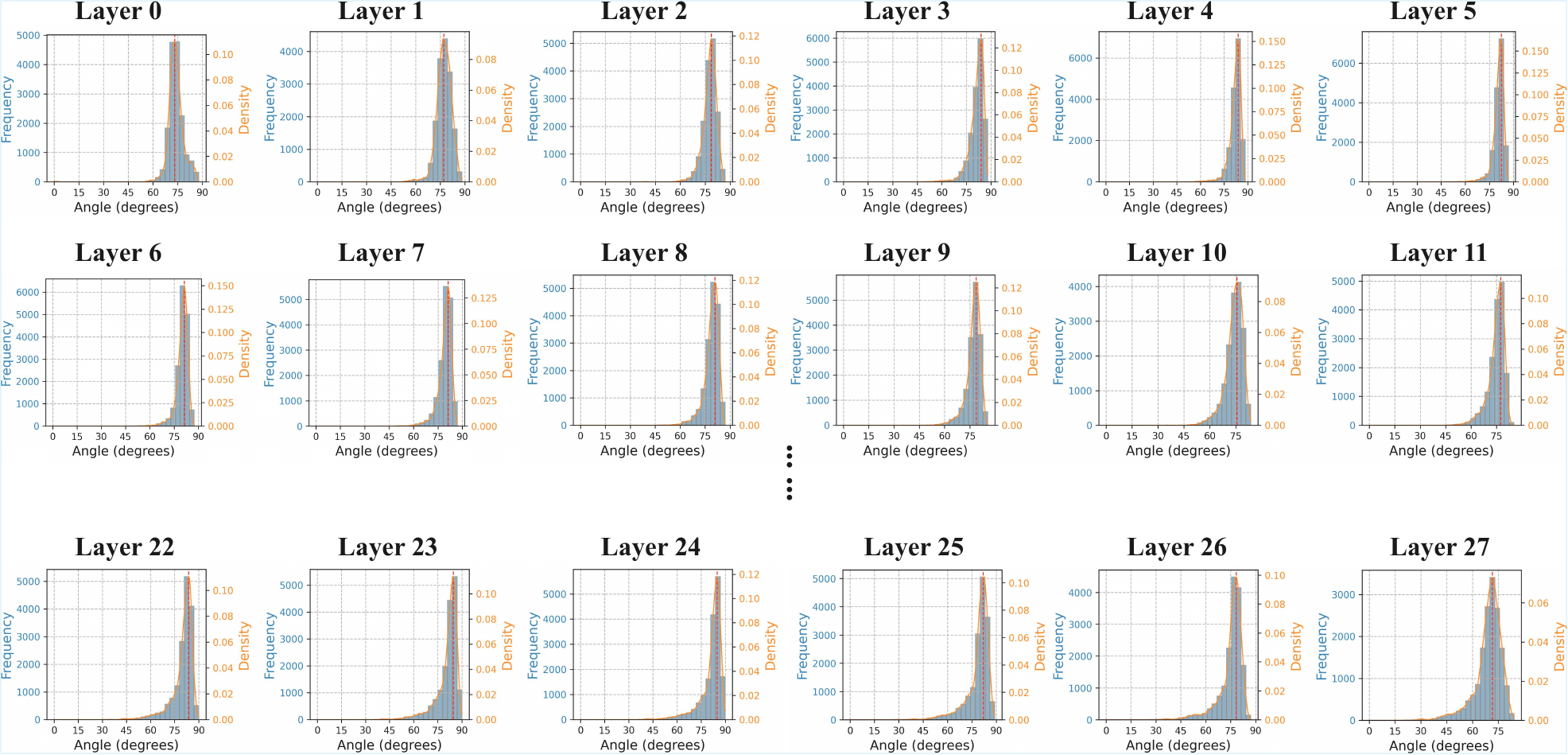}
    \caption{Representation shift of GPT-J-6B.}
  \label{fig:Representations_Shifting_GPT-J-6B}
\end{figure}

\begin{figure}
    \centering
    \includegraphics[width=0.96\textwidth]{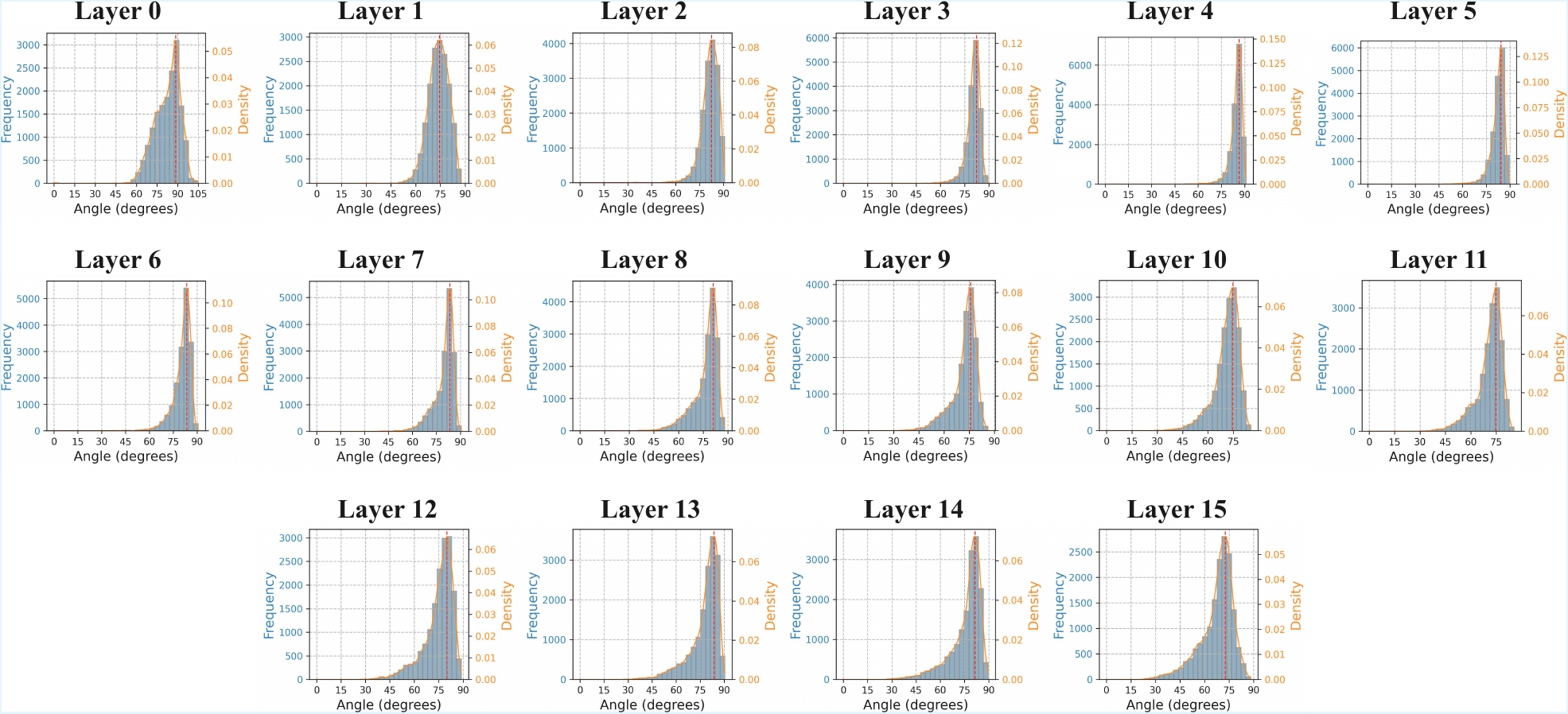}
    \caption{Representation shift of Pythia-1B.}
  \label{fig:Representations_Shifting_Pythia-1B}
\end{figure}

\begin{figure}
    \centering
    \includegraphics[width=0.96\textwidth]{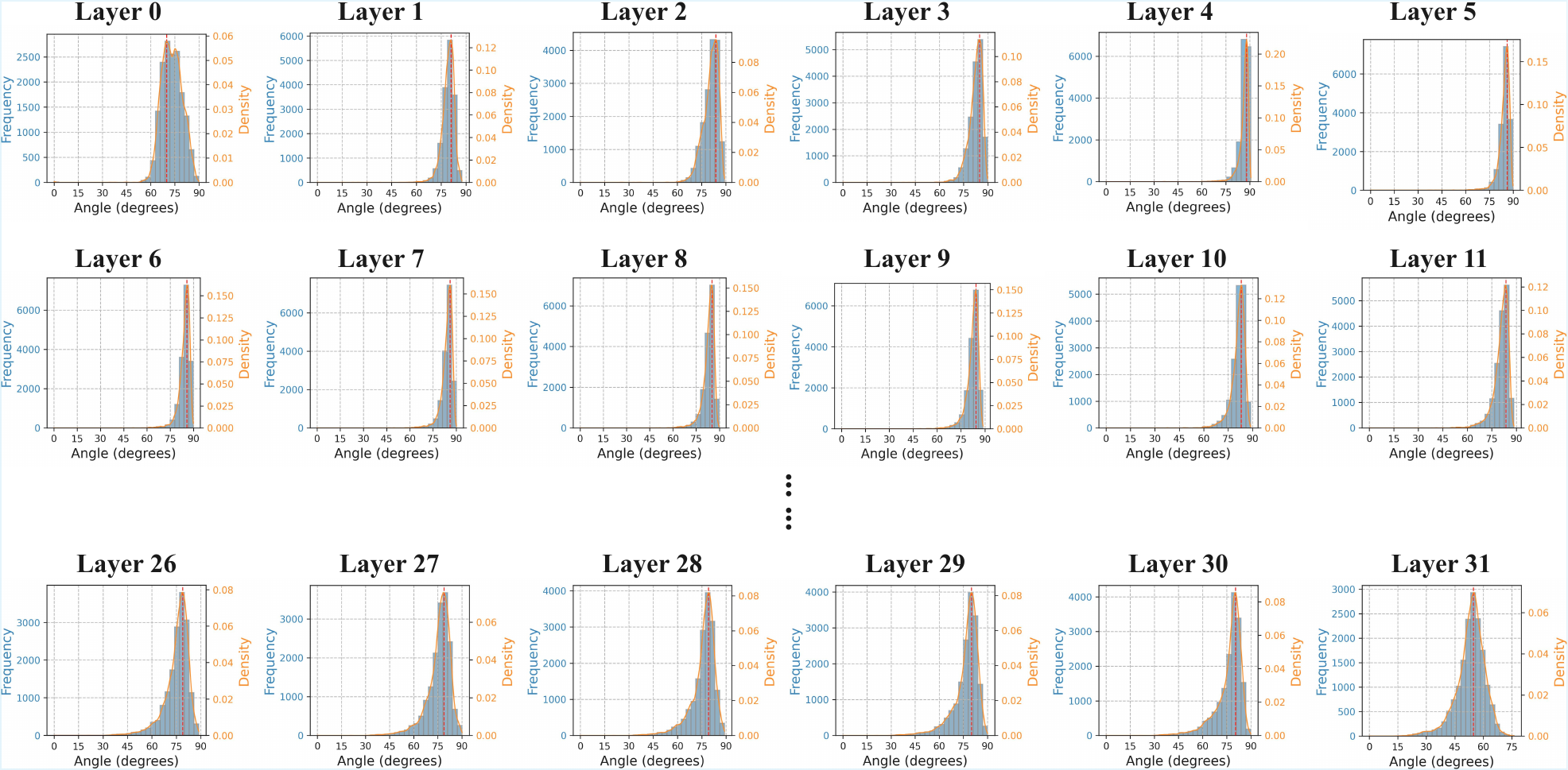}
    \caption{Representation shift of Pythia-6.9B.}
  \label{fig:Representations_Shifting_Pythia-6.9B}
\end{figure}

\begin{figure}
    \centering
    \includegraphics[width=0.96\textwidth]{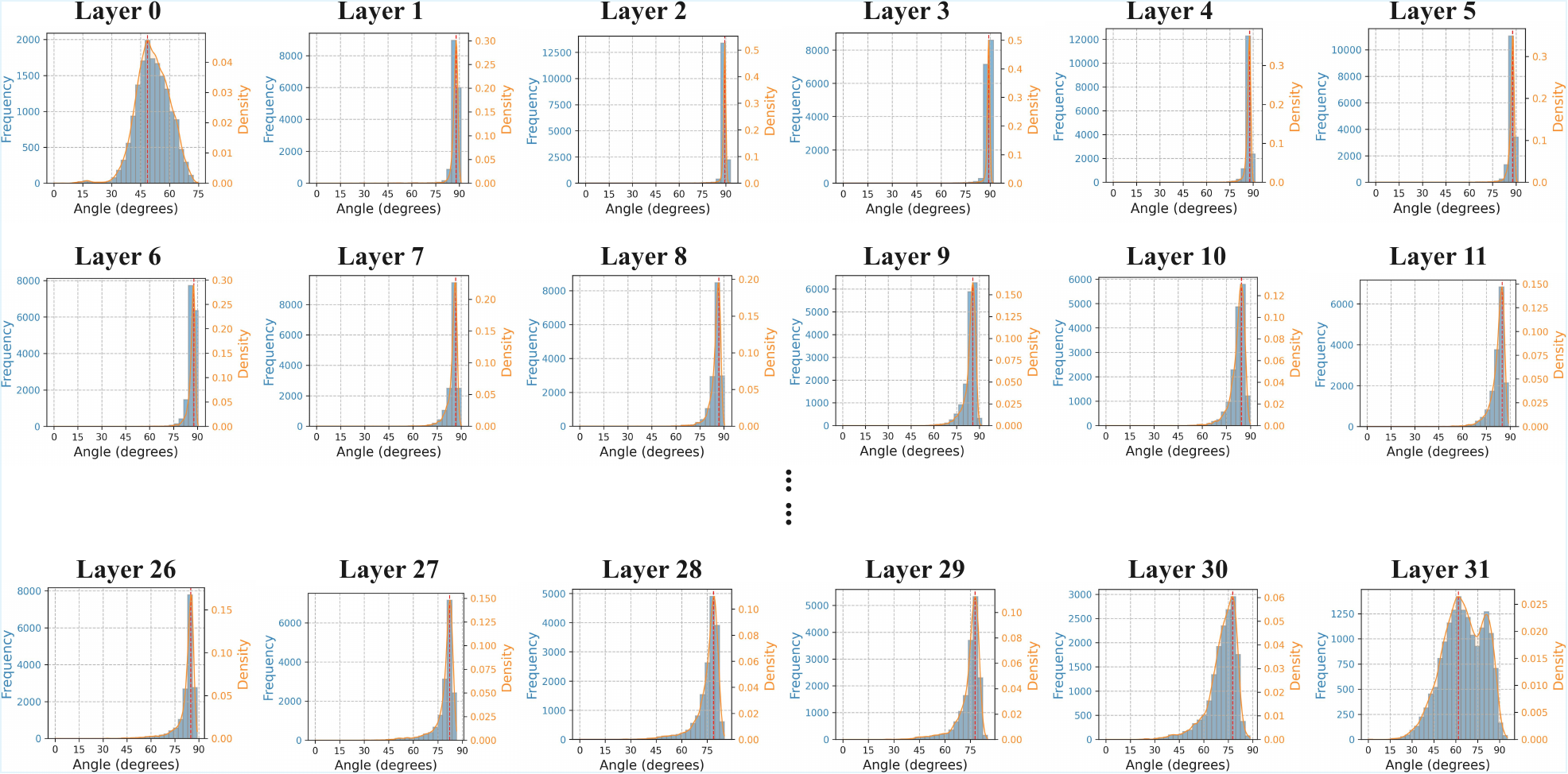}
    \caption{Representation shift of Llama2-7B.}
  \label{fig:Representations_Shifting_Llama2-7B}
\end{figure}

\begin{figure}
    \centering
    \includegraphics[width=0.96\textwidth]{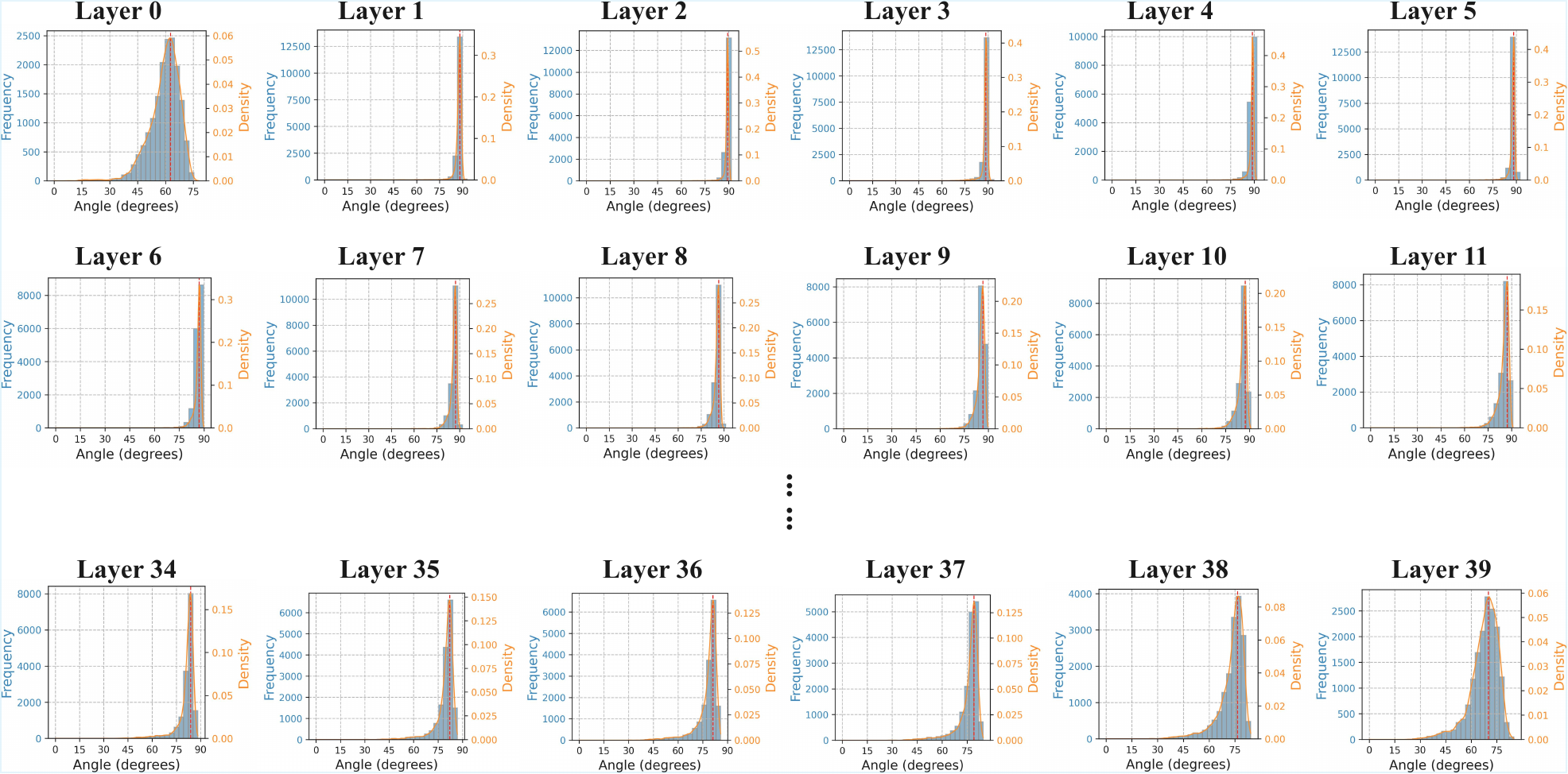}
    \caption{Representation shift of Llama2-13B.}
  \label{fig:Representations_Shifting_Llama2-13B}
\end{figure}

\begin{figure}
    \centering
    \includegraphics[width=0.96\textwidth]{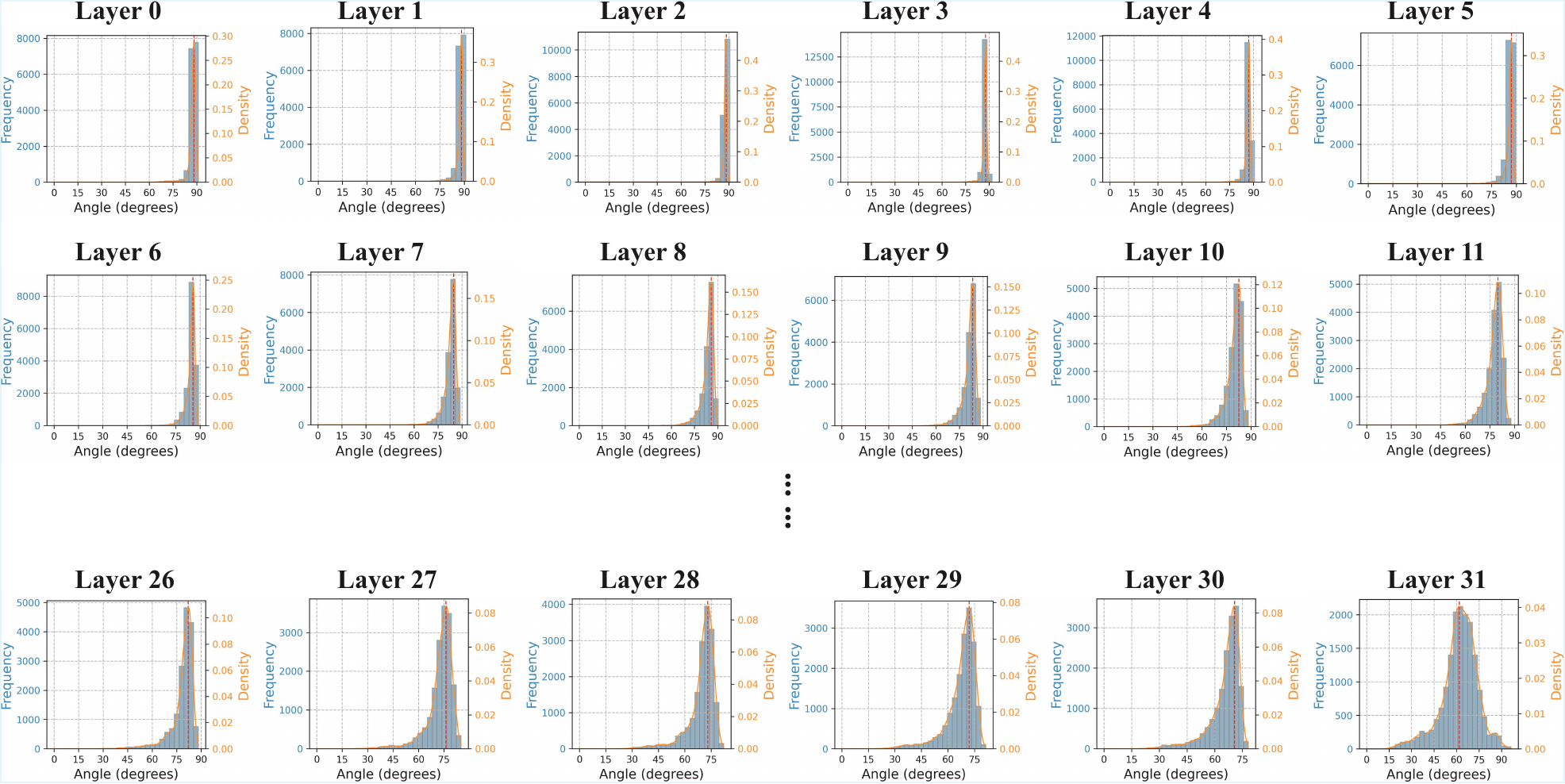}
    \caption{Representation shift of Llama3-8B.}
  \label{fig:Representations_Shifting_Llama3-8B}
\end{figure}

\begin{figure}
    \centering
    \includegraphics[width=0.96\textwidth]{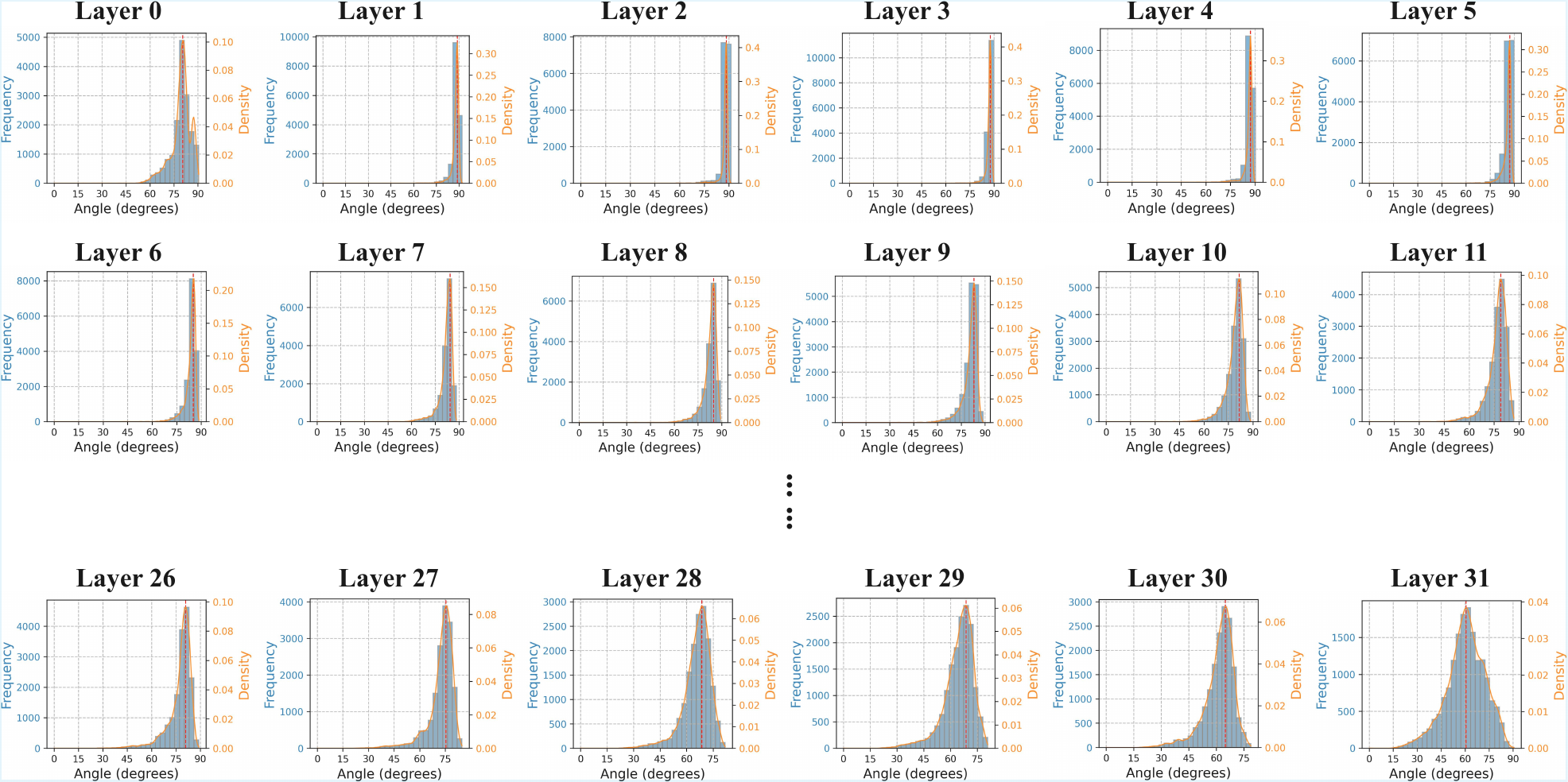}
    \caption{Representation shift of Llama3.1-8B.}
  \label{fig:Representations_Shifting_Llama3.1-8B}
\end{figure}

\begin{figure}
    \centering
    \includegraphics[width=0.96\textwidth]{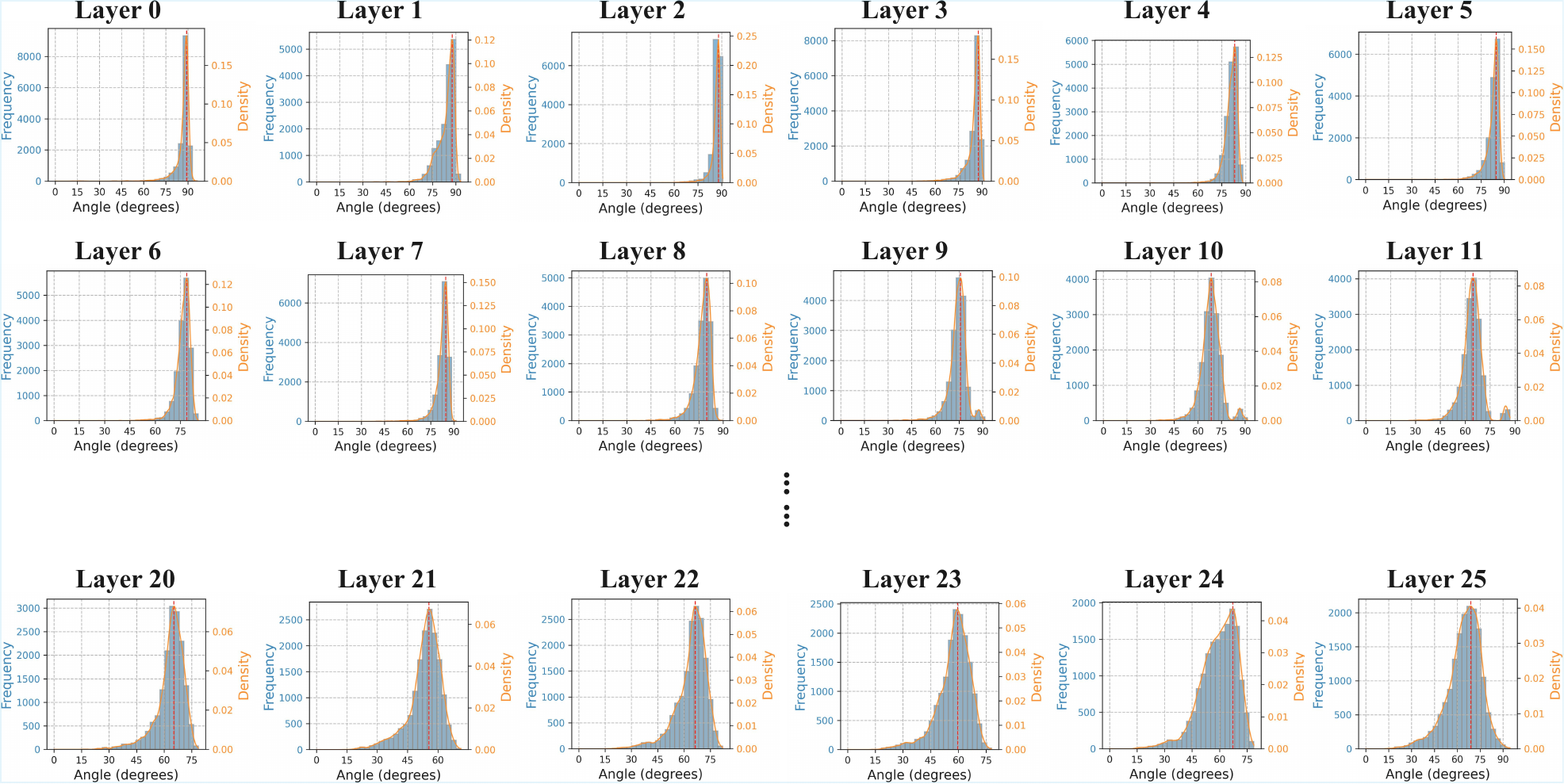}
    \caption{Representation shift of Gemma2-2B.}
  \label{fig:Representations_Shifting_Gemma2-2B}
\end{figure}

\begin{figure}
    \centering
    \includegraphics[width=0.96\textwidth]{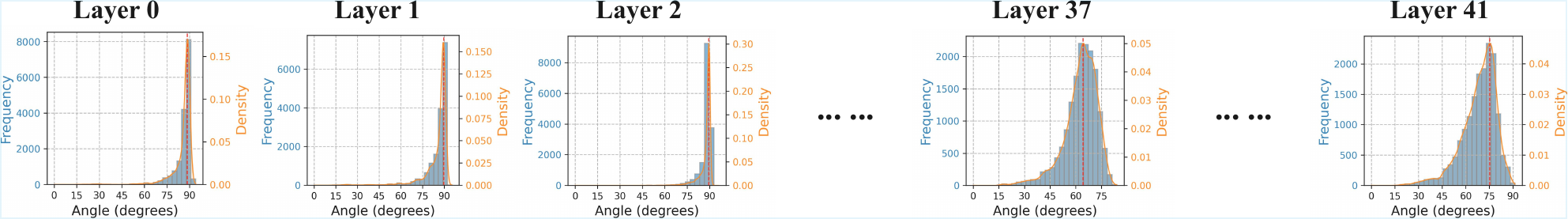}
    \caption{Representation shift of Gemma2-9B.}
  \label{fig:Representations_Shifting_Gemma2-9B}
\end{figure}

%%%%%%%%%%%%%%%%%%%%%%%%%%%%%%%%%%%%%%%%%%%%%%%%%%%%%%%%%%%%%%%%%%%%%%%%

\begin{figure}
    \centering
    \includegraphics[width=0.96\textwidth]{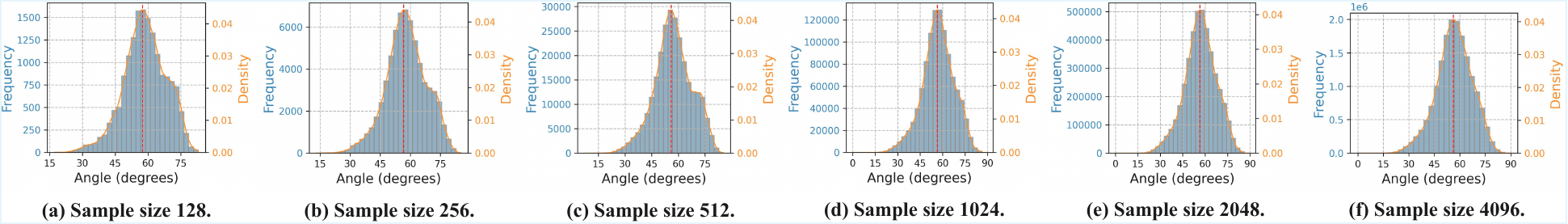}
    \caption{Representation shift of GPT2-Small under increasing sample sizes.}
  \label{fig:Representations_Shifting_GPT2-Small_ablation}
\end{figure}

\begin{figure}
    \centering
    \includegraphics[width=0.96\textwidth]{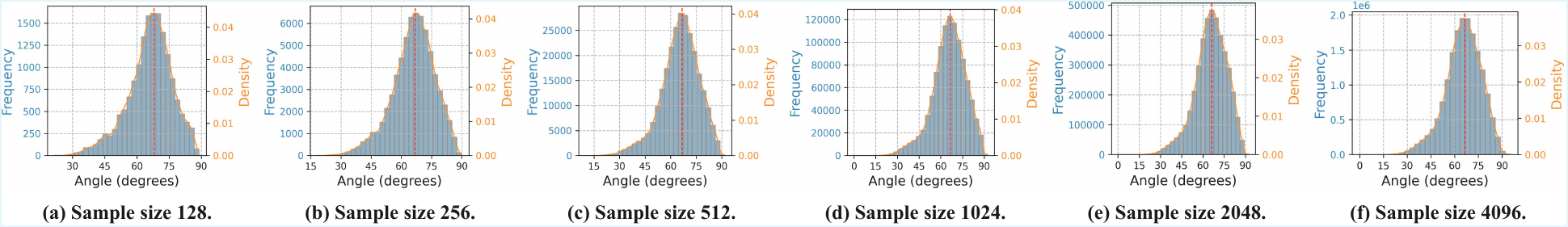}
    \caption{Representation shift of GPT2-Medium under increasing sample sizes.}
  \label{fig:Representations_Shifting_GPT2-Medium_ablation}
\end{figure}

\begin{figure}
    \centering
    \includegraphics[width=0.96\textwidth]{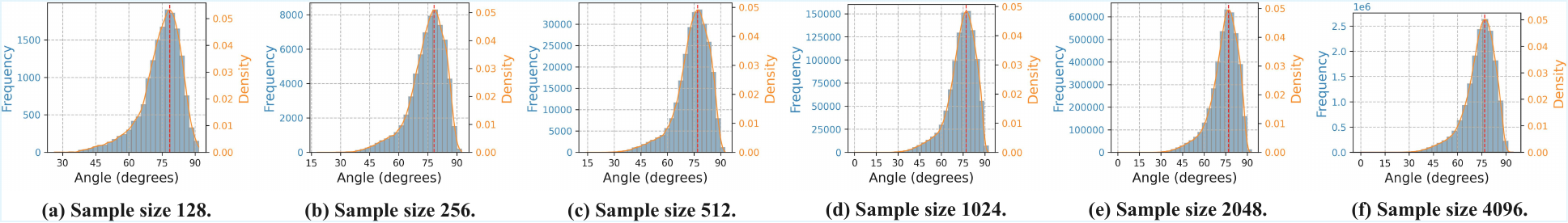}
    \caption{Representation shift of GPT2-Large under increasing sample sizes.}
  \label{fig:Representations_Shifting_GPT2-Large_ablation}
\end{figure}

\begin{figure}
    \centering
    \includegraphics[width=0.96\textwidth]{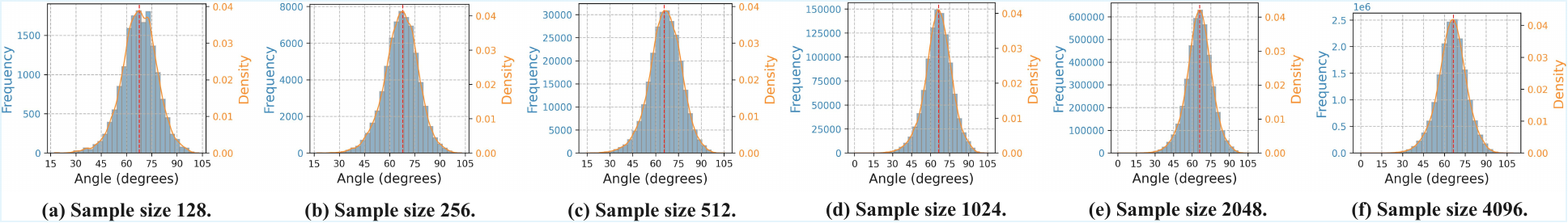}
    \caption{Representation shift of Gemma2-2B under increasing sample sizes.}
  \label{fig:Representations_Shifting_Gemma2-2B_ablation}
\end{figure}

\begin{figure}
    \centering
    \includegraphics[width=0.96\textwidth]{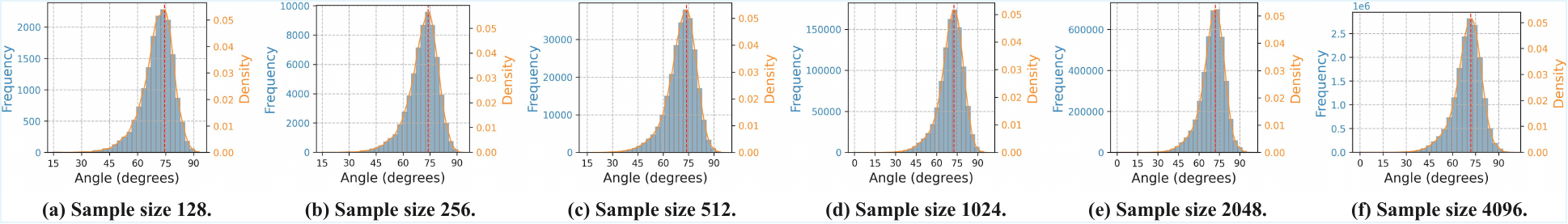}
    \caption{Representation shift of Gemma2-9B under increasing sample sizes.}
  \label{fig:Representations_Shifting_Gemma2-9B_ablation}
\end{figure}

%%%%%%%%%%%%%%%%%%%%%%%%%%%%%%%%%%%%%%%%%%%%%%%%%%%%%%%%%%%%%%%%%%%%%%%%%%%%%%

\begin{figure}
    \centering
    \includegraphics[width=0.96\textwidth]{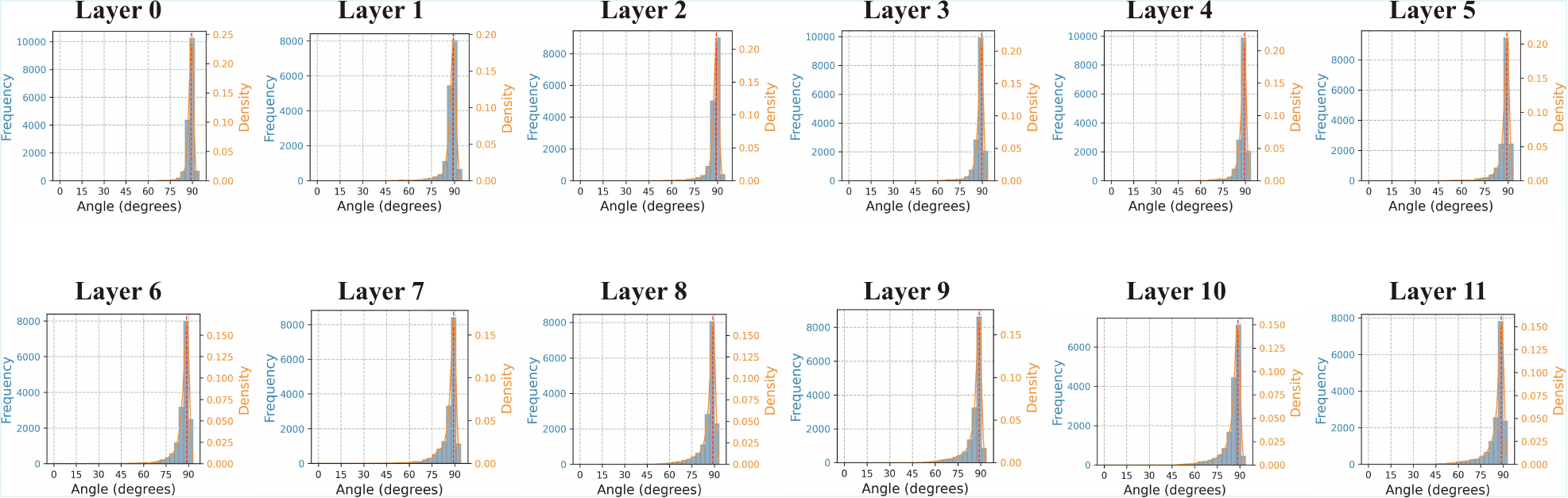}
    \caption{Correction of representation shift on GPT2-Small.}
  \label{fig:Solving_Representations_Shifting_GPT2-Small}
\end{figure}

\begin{figure}
    \centering
    \includegraphics[width=0.96\textwidth]{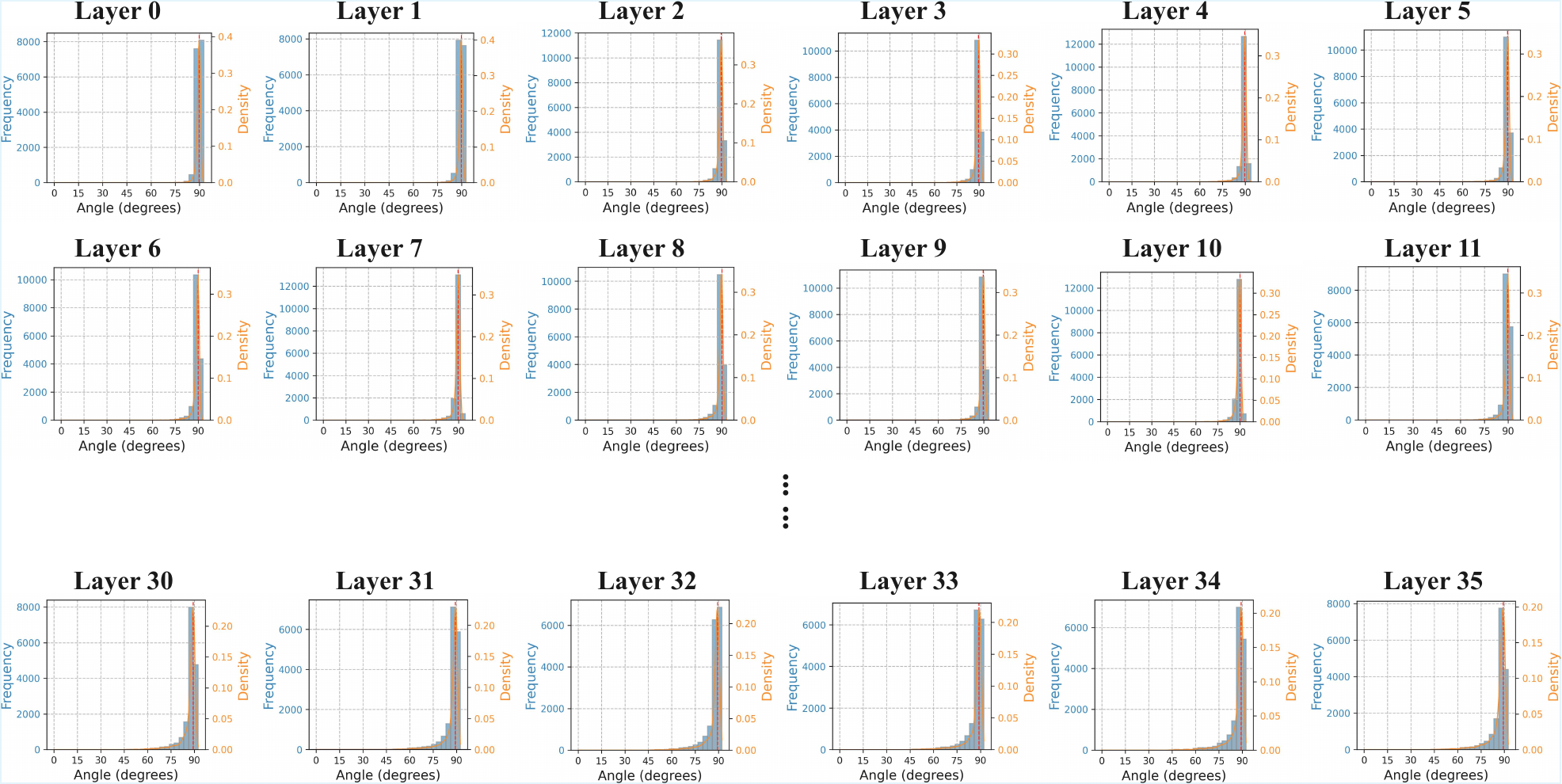}
    \caption{Correction of representation shift on GPT2-Large.}
  \label{fig:Solving_Representations_Shifting_GPT2-Large}
\end{figure}

\begin{figure}
    \centering
    \includegraphics[width=0.96\textwidth]{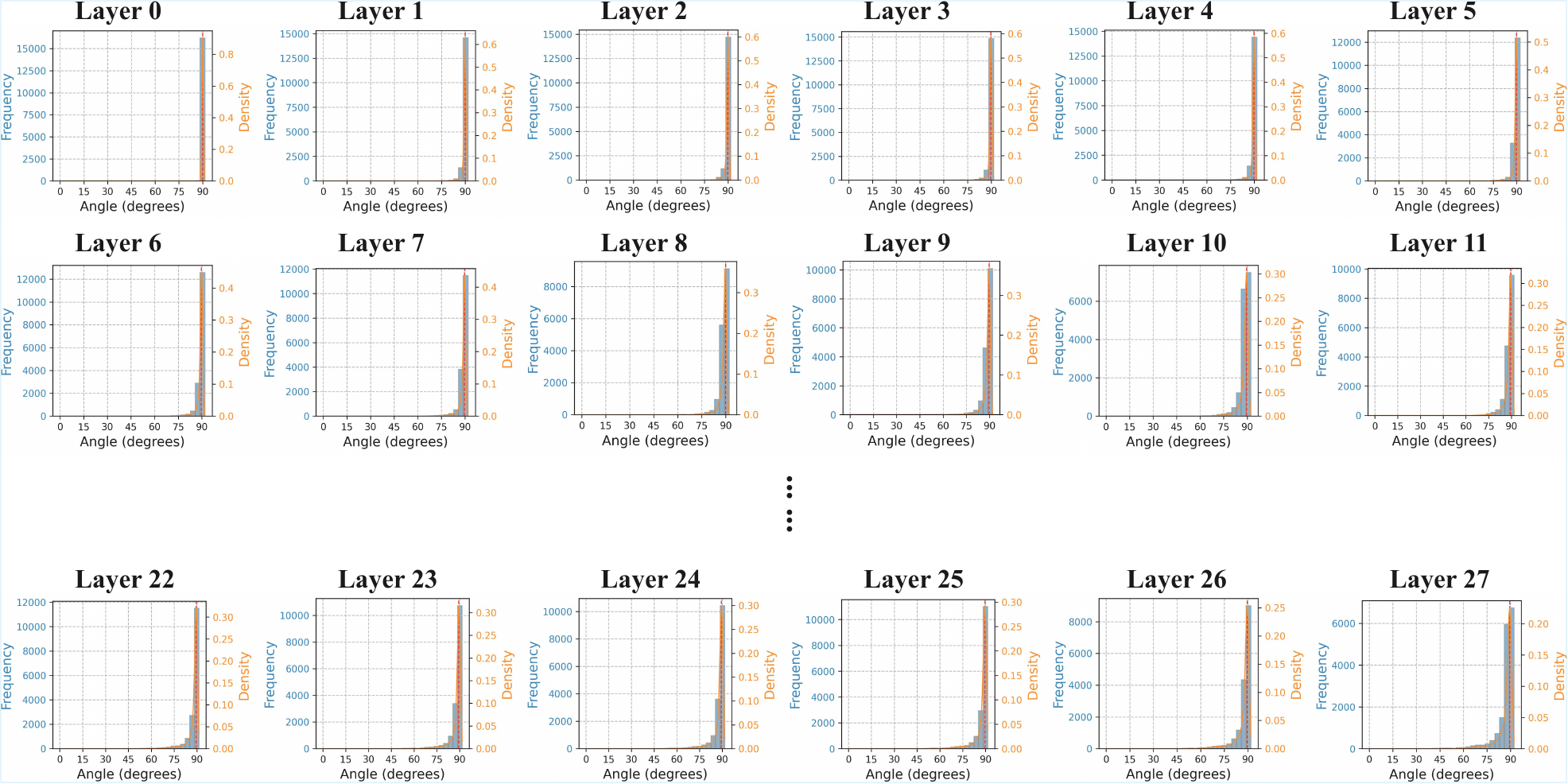}
    \caption{Correction of representation shift on GPT-J-6B.}
  \label{fig:Solving_Representations_Shifting_GPT-J-6B}
\end{figure}

\begin{figure}
    \centering
    \includegraphics[width=0.96\textwidth]{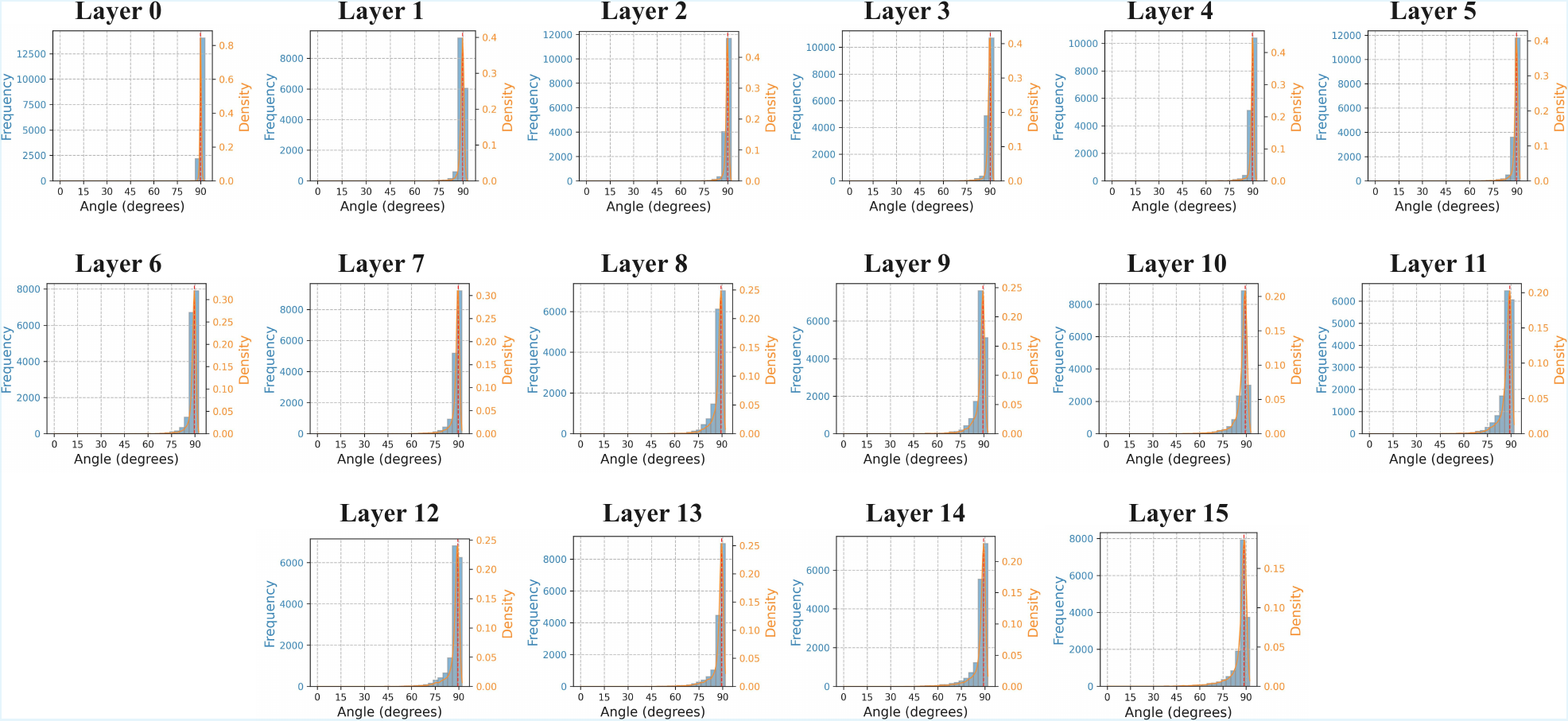}
    \caption{Correction of representation shift on Pythia-1B.}
  \label{fig:Solving_Representations_Shifting_Pythia-1B}
\end{figure}

\begin{figure}
    \centering
    \includegraphics[width=0.96\textwidth]{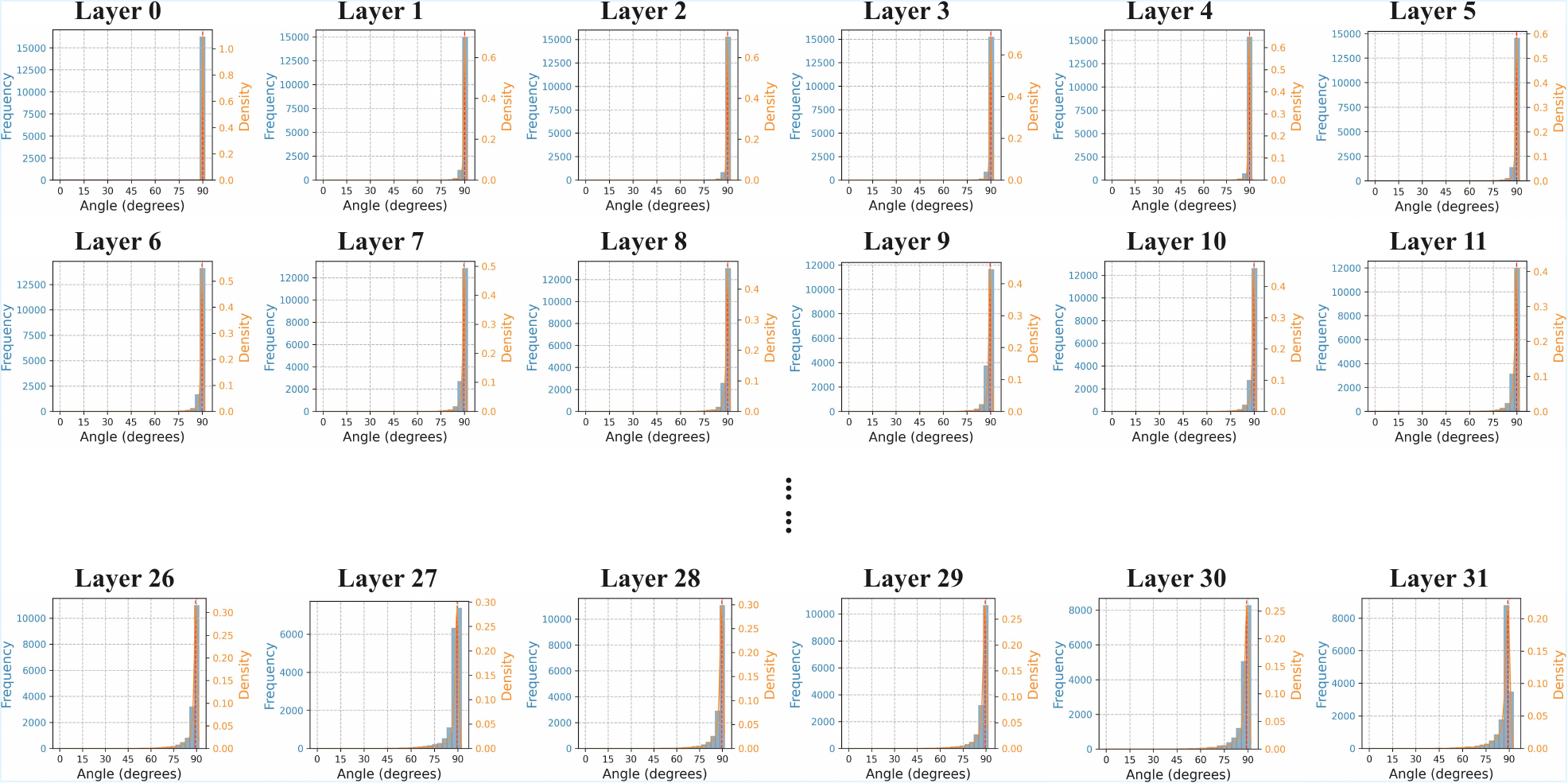}
    \caption{Correction of representation shift on Pythia-6.9B.}
  \label{fig:Solving_Representations_Shifting_Pythia-6.9B}
\end{figure}

\begin{figure}
    \centering
    \includegraphics[width=0.96\textwidth]{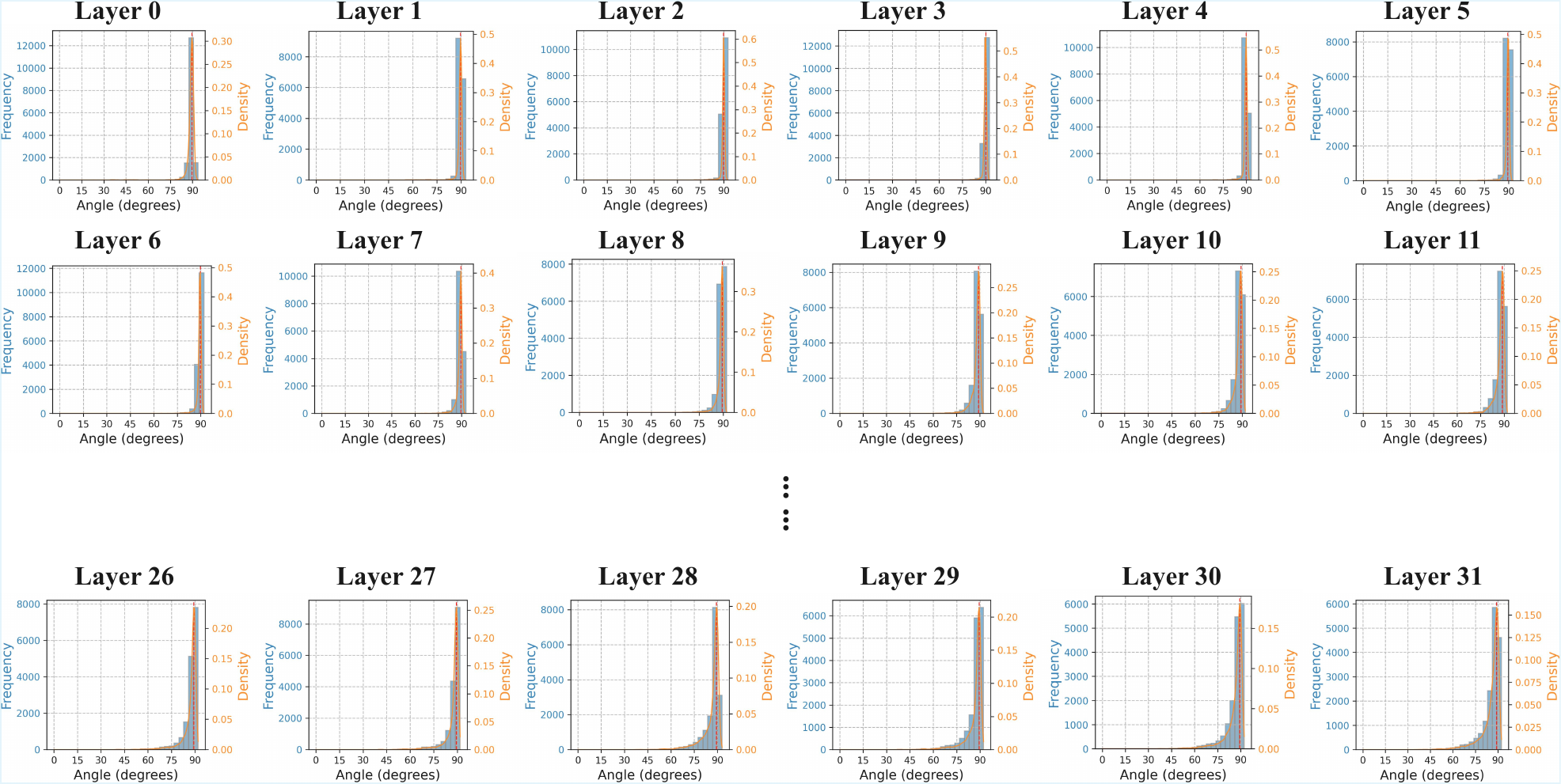}
    \caption{Correction of representation shift on Llama2-7B.}
  \label{fig:Solving_Representations_Shifting_Llama2-7B}
\end{figure}

\begin{figure}
    \centering
    \includegraphics[width=0.96\textwidth]{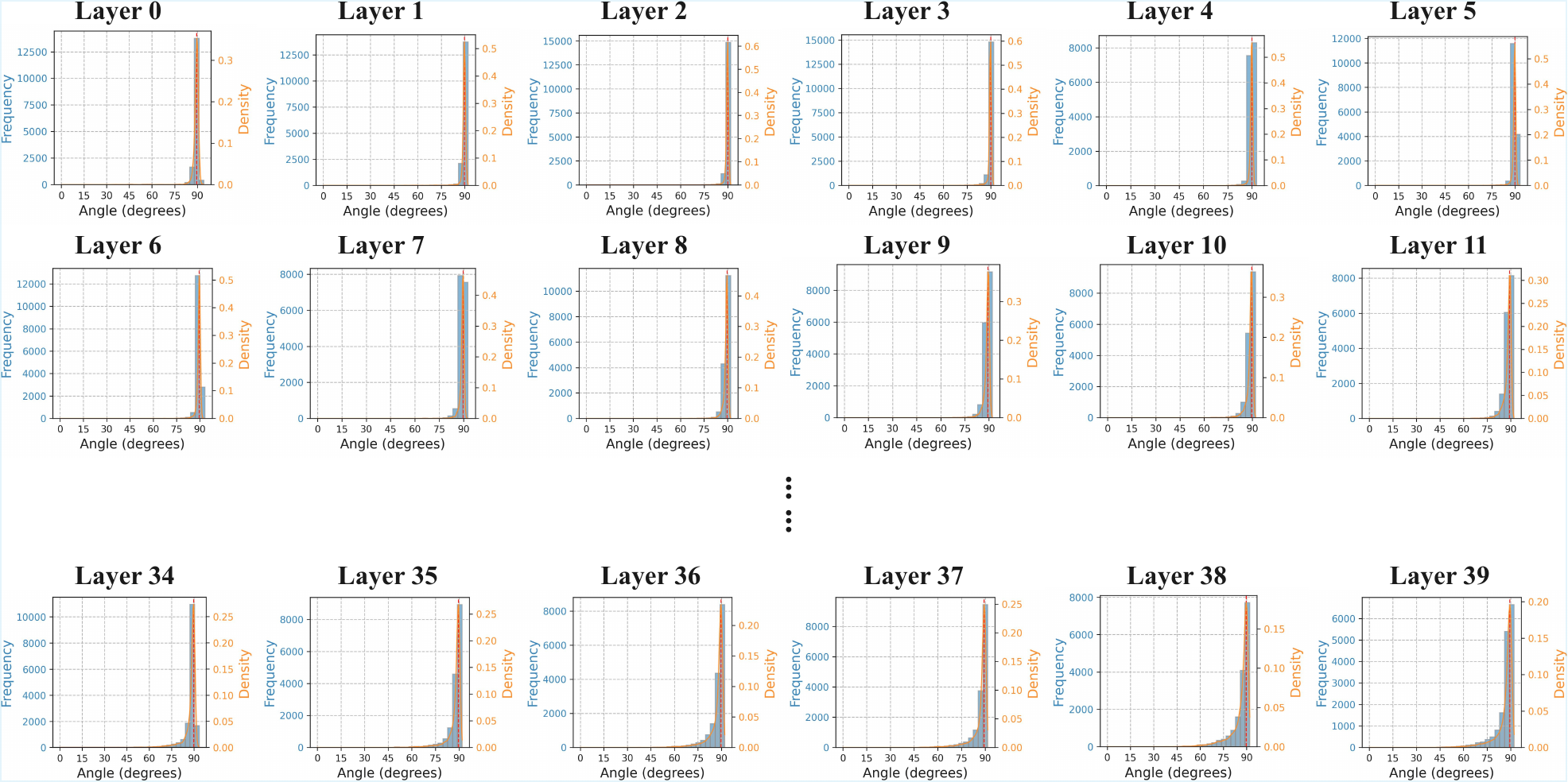}
    \caption{Correction of representation shift on Llama2-13B.}
  \label{fig:Solving_Representations_Shifting_Llama2-13B}
\end{figure}

\begin{figure}
    \centering
    \includegraphics[width=0.96\textwidth]{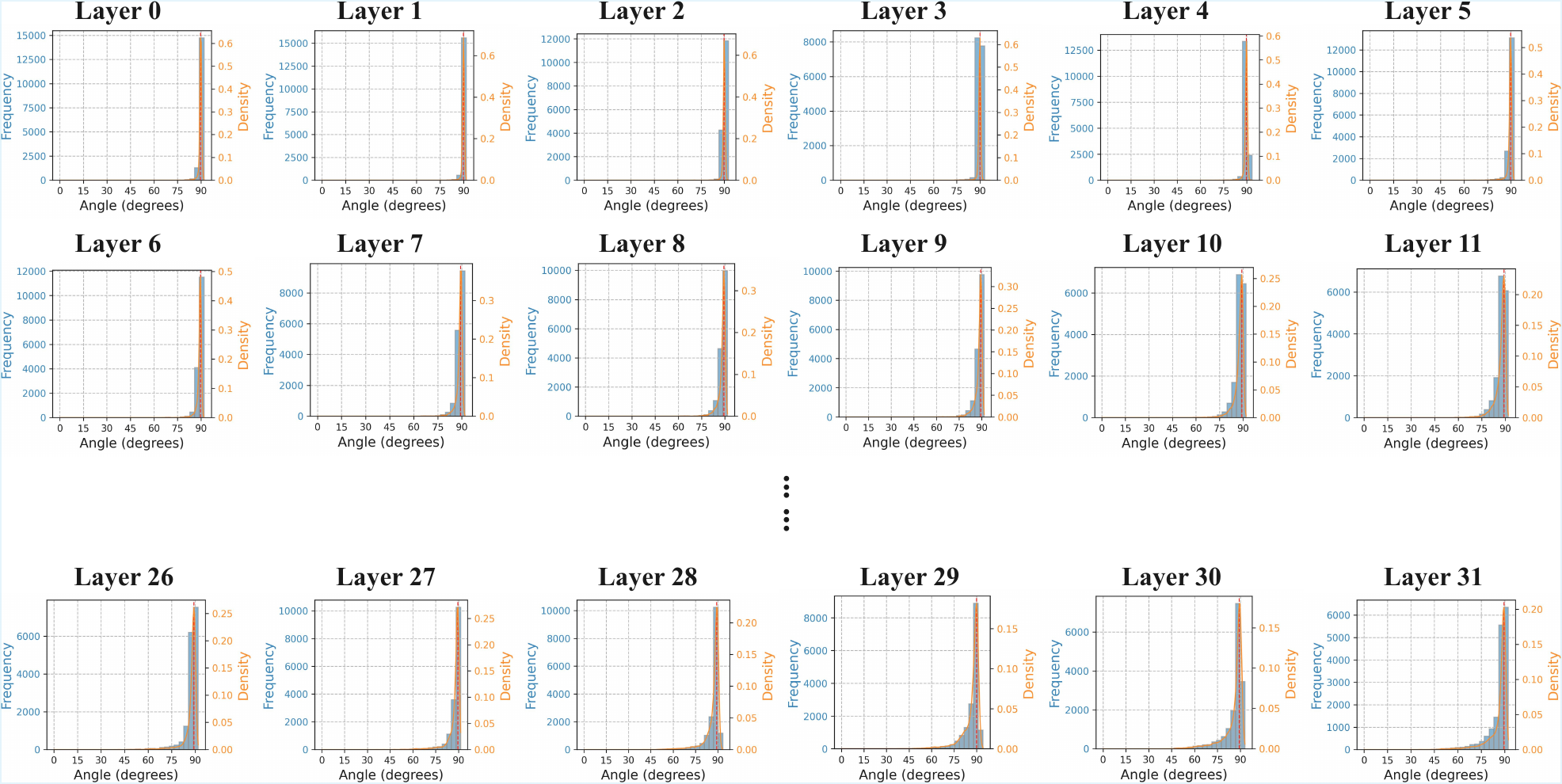}
    \caption{Correction of representation shift on Llama3-8B.}
  \label{fig:Solving_Representations_Shifting_Llama3-8B}
\end{figure}

\begin{figure}
    \centering
    \includegraphics[width=0.96\textwidth]{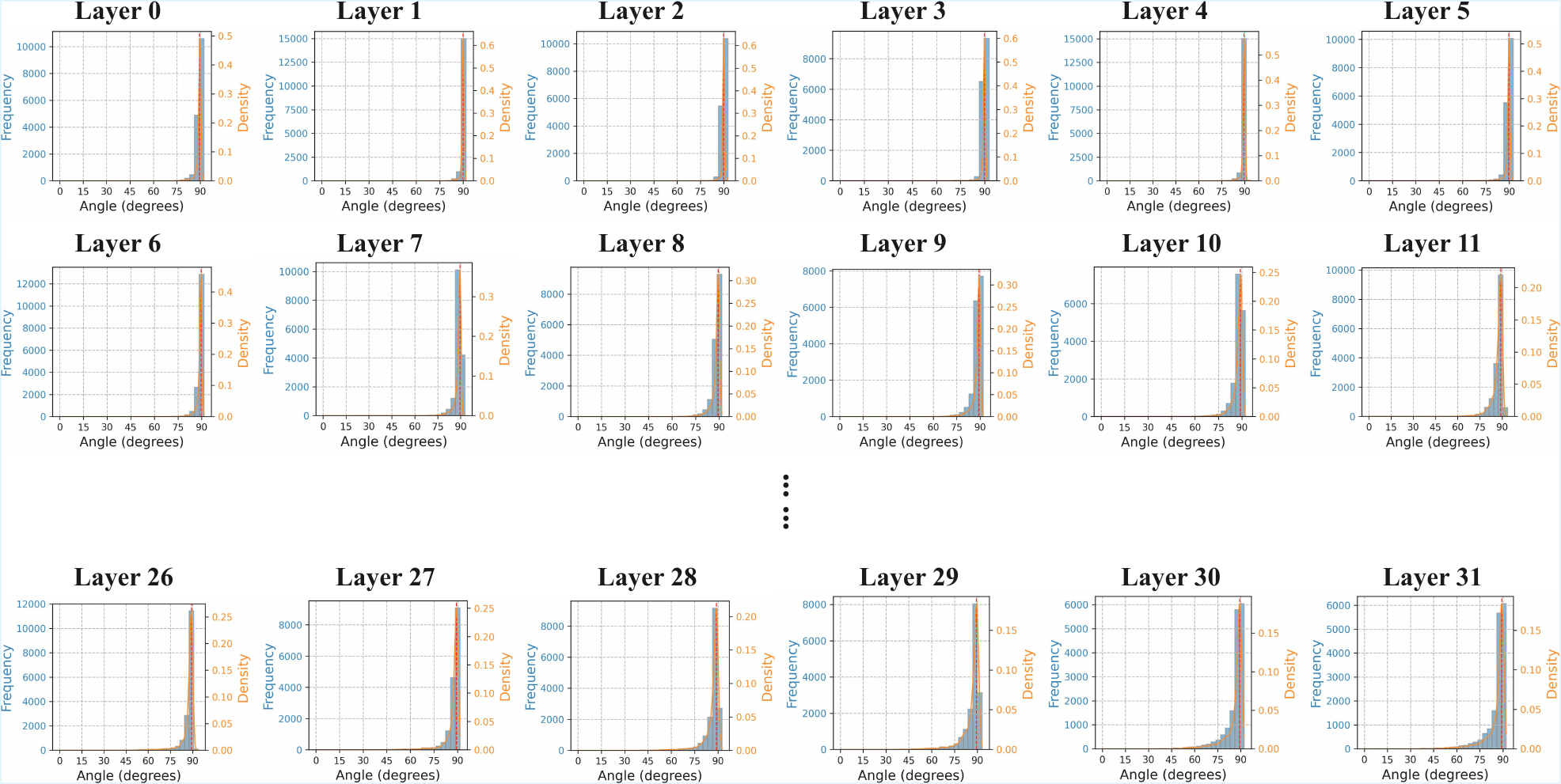}
    \caption{Correction of representation shift on Llama3.1-8B.}
  \label{fig:Solving_Representations_Shifting_Llama3.1-8B}
\end{figure}

\begin{figure}
    \centering
    \includegraphics[width=0.96\textwidth]{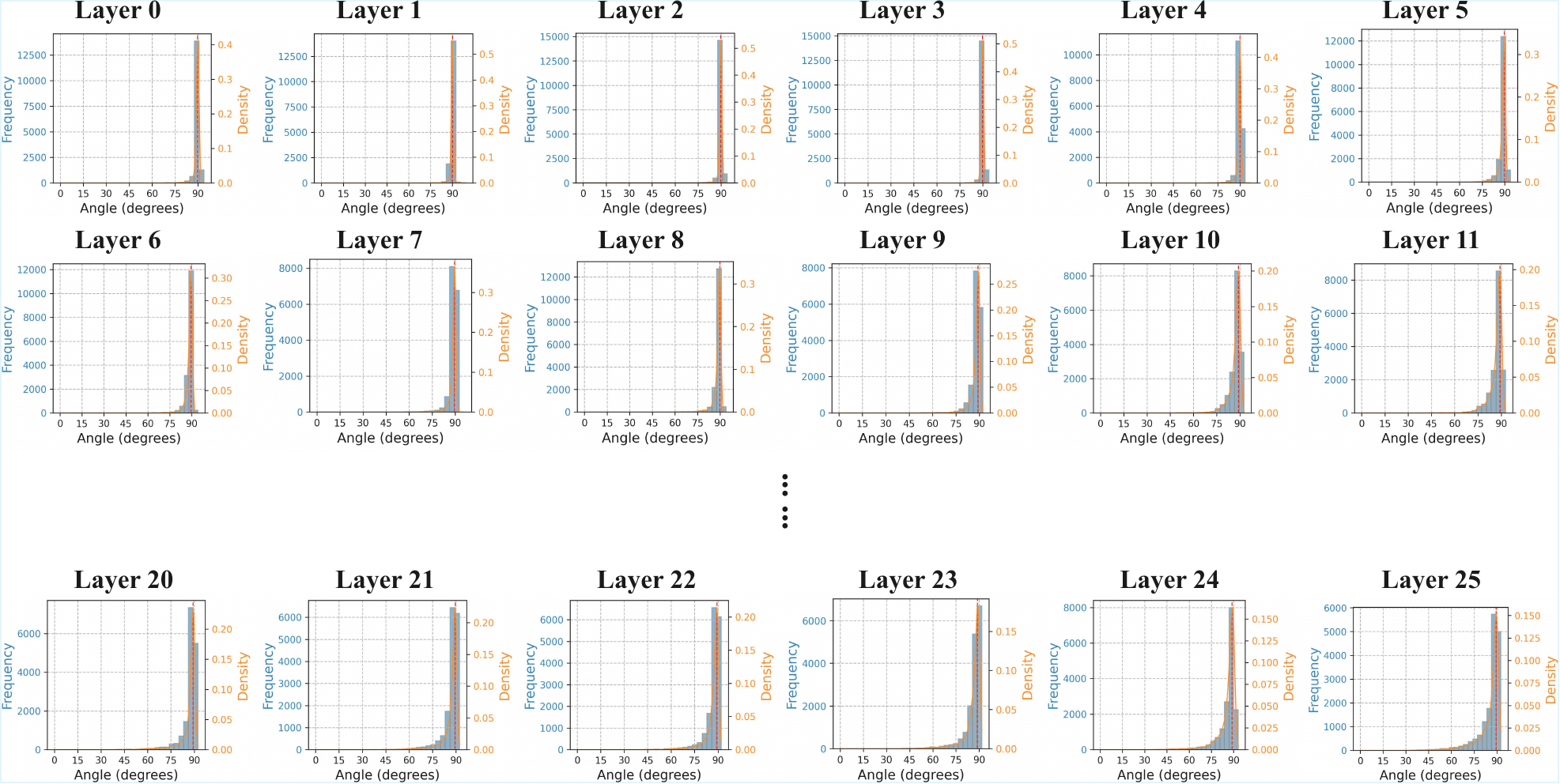}
    \caption{Correction of representation shift on Gemma2-2B.}
  \label{fig:Solving_Representations_Shifting_Gemma2-2B}
\end{figure}

\begin{figure}
    \centering
    \includegraphics[width=0.96\textwidth]{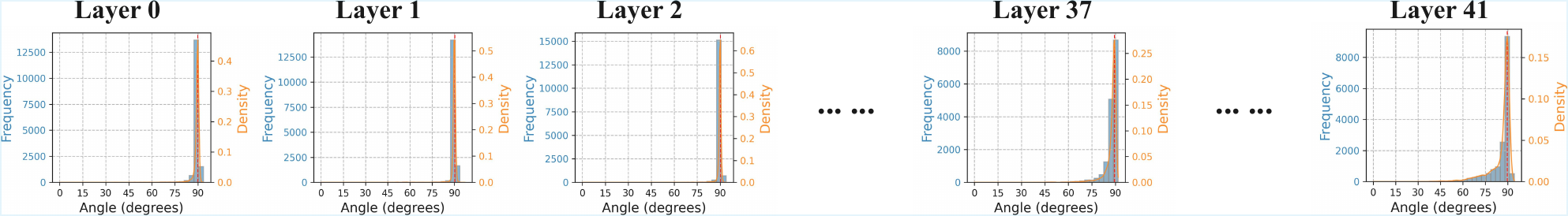}
    \caption{Correction of representation shift on Gemma2-9B.}
  \label{fig:Solving_Representations_Shifting_Gemma2-9B}
\end{figure}

%%%%%%%%%%%%%%%%%%%%%%%%%%%%%%%%%%%%%%%%%%%%%%%%%%%%%%%%%%%%%%%

\begin{figure}
    \centering
    \includegraphics[width=0.96\textwidth]{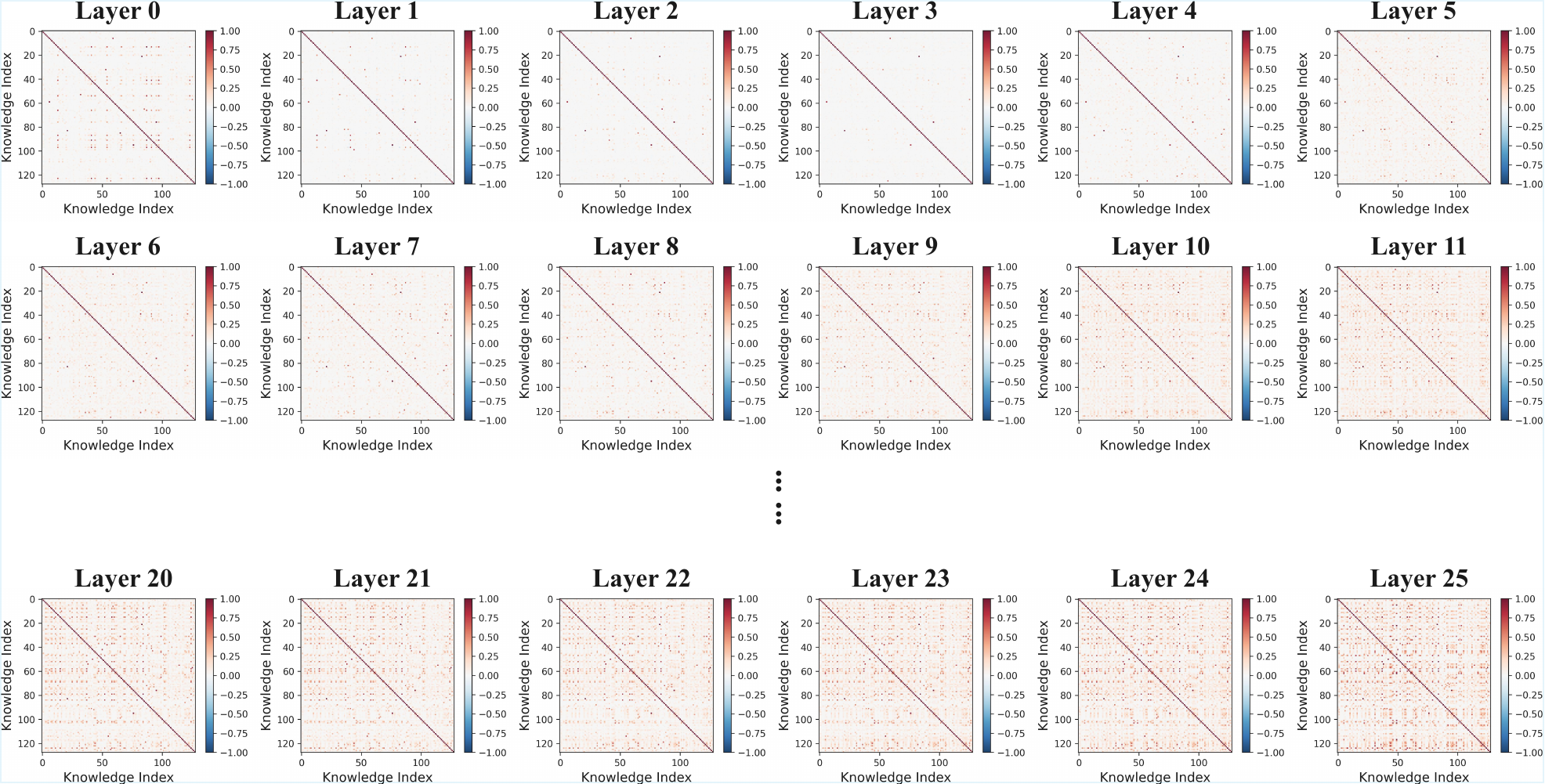}
    \caption{Superposition of activations on Gemma2-2B.}
  \label{fig:Superposition_Gemma2-2B}
\end{figure}

\begin{figure}
    \centering
    \includegraphics[width=0.96\textwidth]{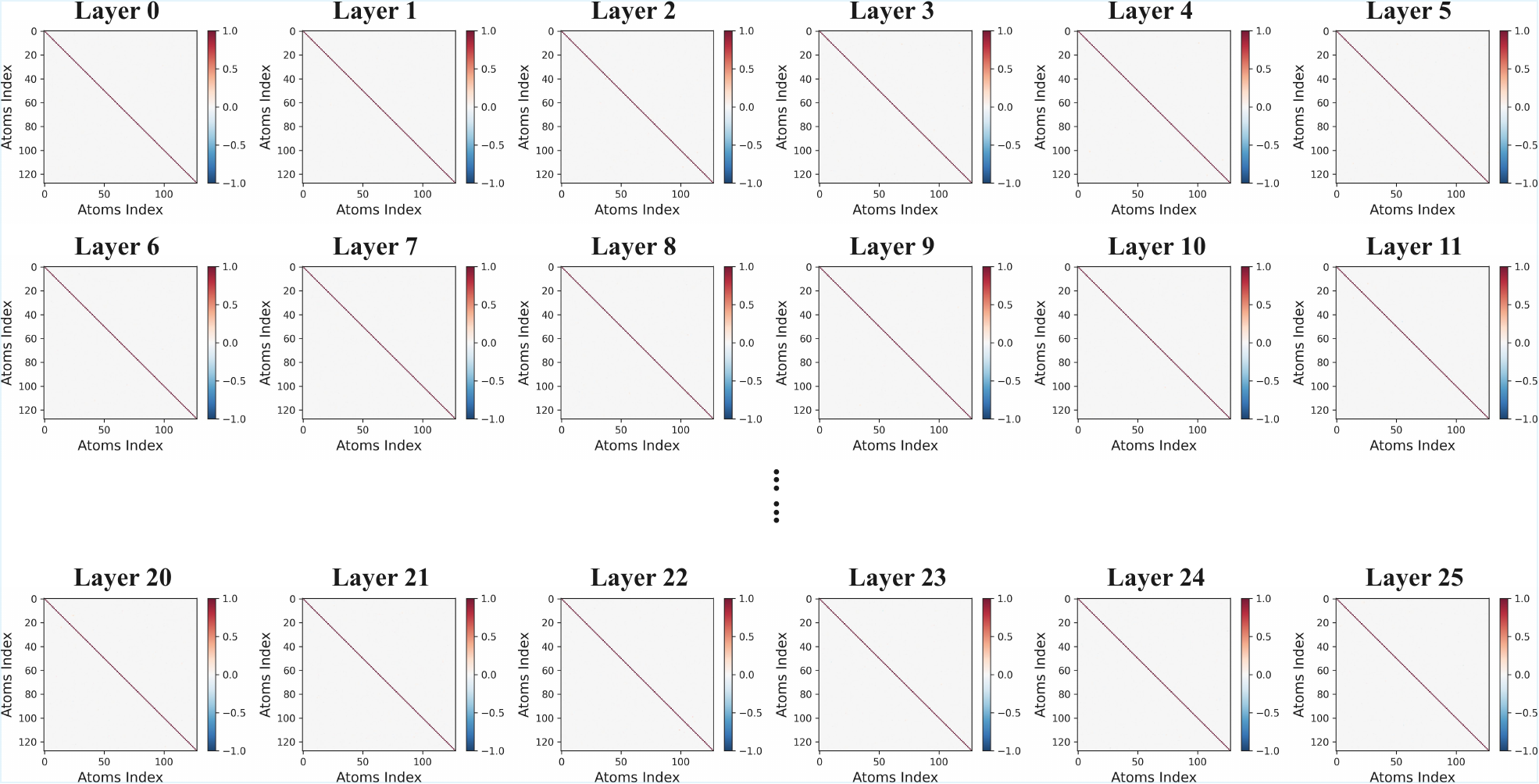}
    \caption{Solving superposition on Gemma2-2B.}
  \label{fig:Solving_Superposition_Gemma2-2B}
\end{figure}

\section{Atoms of LLMs}\label{appendix:Atoms_of_LLMs}

\subsection{Training Paradigm}\label{appendix:Atoms_of_LLMs_Training_Paradigm}

We train threshold-activated sparse autoencoders (TSAEs) on activations extracted by entity knowledge, a setting we term the knowledge atomization task. Unlike the common practice of training on activations from natural corpora, this formulation enables precise control and quantification of data scale, facilitating scalable and systematic study across model and dataset scales. Moreover, entity-induced activations exhibit higher normalized rank (Fig.~\ref{fig:Cumulative_Normalized_Rank}), spanning a broader set of representational dimensions and thus providing richer information.

We also conduct task-level ablations. Specifically, we train TSAEs on activations extracted from natural text (Wikipedia \citep{in2001wikipedia}) and complex mathematical reasoning data (MATH500 \citep{hendrycks2021measuring}), following the same training pipeline as knowledge atomization while matching model and data scales. As shown in Tab.~\ref{tab:task_ablation}, representations obtained from different sources yield consistent results across language models, demonstrating the generality of our findings.

\begin{figure}
    \centering
    \includegraphics[width=0.78\textwidth]{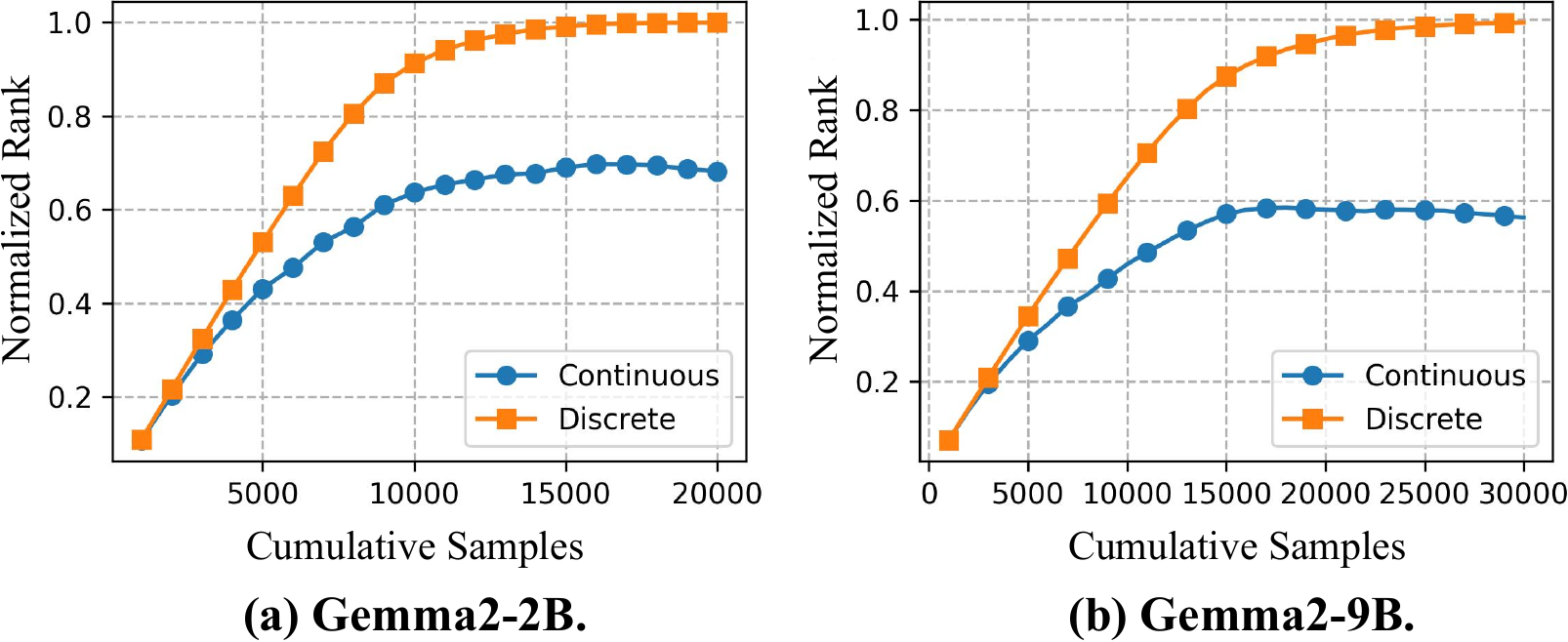}
    \caption{Cumulative normalized rank of (a) Gemma2-2B and (b) Gemma2-9B. Each data point corresponds to the ratio between the rank of the accumulated activation matrix (formed by stacking samples up to that point) and the total dimensionality (i.e., the theoretical maximum rank). Here we illustrate this for randomly selected early layers of Gemma2-2B and Gemma2-9B.}
  \label{fig:Cumulative_Normalized_Rank}
\end{figure}

\begin{table}[t]
\centering
\caption{Comparison of reconstruction quality and sparsity under different training data sources.}
\label{tab:task_ablation}
\begin{tabular}{lcc}
\toprule
Training Data Source & \(R^2\) & \(L_0\) \\
\midrule
General corpora (Wikipedia)       & 99.84\% & 9.36 \\
Complex Reasoning (MATH500)      & 99.86\% & 9.56 \\
Knowledge Atomization (WikiData) & 99.89\% & 8.32 \\
\bottomrule
\end{tabular}
\end{table}

\subsection{Data Collection}\label{appendix:Atoms_of_LLMs_Data_Collection}

In $\S$~\ref{section:Experiments_Representation_Shift} and $\S$~\ref{section:Experiments_TSAE_Capacity_meet_Data_Scale}, we use the WikiData dataset \citep{vrandevcic2014wikidata}, while in $\S$~\ref{section:Experiments_Neurons_or_Features_as_Ideal_Atoms} and $\S$~\ref{section:Experiments_Atoms_of_LLMs} we adopt the CounterFact dataset \citep{meng2022locating}.

Specifically, we collect activations from every layer of Gemma2-2B, Gemma2-9B, and Llama3.1-8B using the subject entities in the corresponding datasets (e.g., “Danielle Darrieux,” “Edwin of Northumbria,” and “Toko Yasuda”).

Activations are collected in a uniform manner: each subject name is used as a prompt, and hooks record activations at the final token of the subject mention, a position previously identified as critical for knowledge recall in language models \citep{meng2022locating}. The resulting activations are aggregated as static training data.

\subsection{Training Details}\label{appendix:Atoms_of_LLMs_Training_Details}

We employ single-layer SAEs with threshold activation, denoted as \(f:\bm{x}\mapsto \hat{\bm{x}} = W_{\mathrm{dec}}\,\sigma\!\left(W_{\mathrm{enc}}\bm{x}\right)\), and train it by minimizing a joint reconstruction–sparsity objective
\begin{equation}\label{equation:loss}
    \mathcal{L}(\bm{x})= \underbrace{\|\bm{x}-\hat{\bm{x}}\|_2^2}_{\mathcal{L}_{\mathrm{reconstruct}}}+ \lambda\,\underbrace{\|\sigma(\bm{z})\|_1}_{\mathcal{L}_{\mathrm{sparsity}}},
\end{equation}
where \(\bm{z}=W_{\mathrm{enc}}\bm{x}\), and \(\sigma\) is coordinate-wise JumpReLU activation \citep{rajamanoharan2024jumping},
\begin{equation}
    \big(\sigma(\bm{z})\big)_{i} = \begin{cases} 0, & z_i < \tau_i, \\z_i, & z_i \ge \tau_i,\end{cases}\quad \boldsymbol{\tau} = (\tau_i)_i.
\end{equation}

The key hyperparameters are the sparsity coefficient \(\lambda\) in the loss function (Eq.~\ref{equation:loss}) and the threshold initialization. We fix \(\lambda = 0.1\) (later shown to be insensitive) and initialize the threshold at 0.001 (or 0.0001), which provides a good trade-off between training efficiency and effectiveness: smaller initial thresholds facilitate satisfying the support-separation condition but substantially increase training time, whereas 0.001 serves as a stable and reliable default in our experiments. During training, we employ the straight-through estimator \citep{rajamanoharan2024jumping} to approximate gradients at the non-differentiable threshold.

We select the final model as the checkpoint on the Pareto front that optimally balances reconstruction error and sparsity. Fig.~\ref{fig:Pareto_Front_Gemma2-2B} illustrates the Pareto front for Gemma2-2B.

The specific computational cost is as follows:
\begin{itemize}
    \item Gemma2-2B (per layer): $\sim$24 GPU-hours on RTX 3090-24G (on average);
    \item Gemma2-9B (per layer): $\sim$56 GPU-hours on A100-80G (on average);
    \item Llama3.1-8B (per layer): $\sim$58 GPU-hours on A100-80G (on average);
    \item Largest TSAE trained in this work (Fig.~\ref{fig:Scaling_Saes}, top-right): $\sim$135 GPU-hours on A100-80G.
\end{itemize}

A minor training issue was observed in layers 30 and 31 of Llama 3.1-8B, where unusually large activations caused optimization to fail. Consequently, these layers are omitted from the reported results. This behavior is likely related to their proximity to the output, where activations may drive next-token prediction rather than encode entity-specific information. Further analysis indicates that the failure is mainly due to abnormally high data variance in these layers; normalizing the training data by its distributional standard deviation, followed by a simple reparameterization of the trained model, largely resolves the issue. By contrast, Gemma 2-2B and Gemma 2-9B did not exhibit this problem, possibly because their extensive use of RMSNorm mitigates such activation outliers.

\begin{figure}
    \centering
    \includegraphics[width=0.96\textwidth]{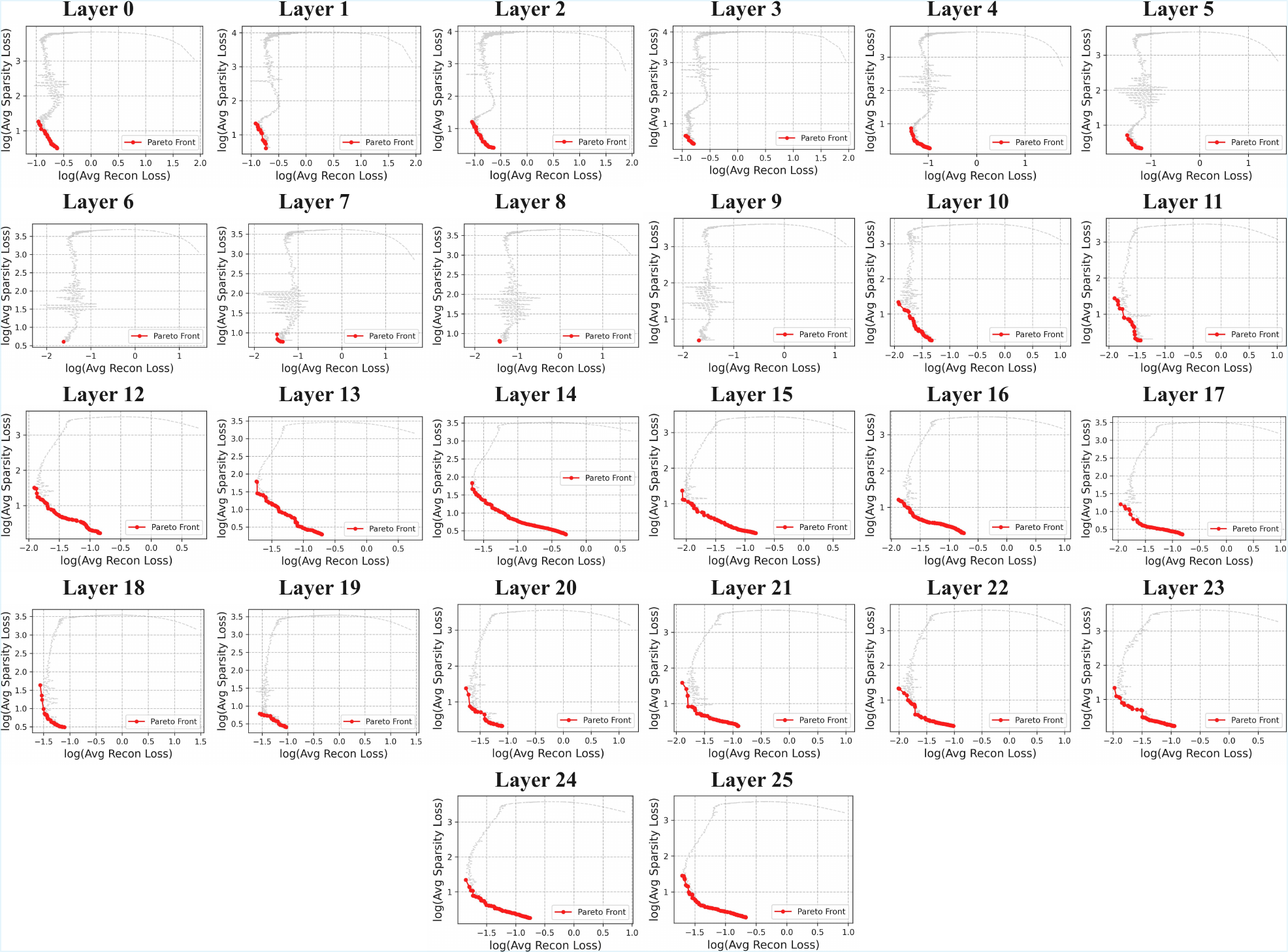}
    \caption{Pareto front during training on Gemma2-2B.}
  \label{fig:Pareto_Front_Gemma2-2B}
\end{figure}

\subsection{Baseline Details}\label{appendix:Atoms_of_LLMs_Baselines_Details}

The primary baselines used in this work are GemmaScope and LlamaScope. GemmaScope provides SAEs of widths 16k and 65k trained on the MLP layers of Gemma2-2B, as well as SAEs of widths 16k and 131k trained on the MLP layers of Gemma2-9B. LlamaScope offers SAEs with expansion factors of 8× and 32× trained on the MLP layers of Llama3.1-8B. Both GemmaScope and LlamaScope are widely regarded as open-source tools for feature extraction.

It is important to emphasize that these models are trained on activations derived from continuous text corpora. We use them as baselines not to demonstrate superior performance of our SAEs, but to highlight that feature-based reconstruction of raw activations remains unreliable in practice, whereas our results show that internal representations of language models can be reconstructed with high fidelity.

\subsection{Evaluation Details}\label{appendix:Atoms_of_LLMs_Evaluation_Details}

To practically assess the stability of identified representational units, we introduce quantile-based statistics that correspond to the prior conditions in Theorem~\ref{theorem:Uniqueness_of_Sparse_Representation_and_Exact_Recovery}. Specifically, we define two statistics:
\begin{itemize}
    \item \textbf{Quantile sparsity \(K_q\)}: The quantile of sparsity \(K_q\) is defined as
    \begin{equation}
        K_q = \inf \left\{k \in \mathbb{N} : \mathbb{P}_{\bm \delta \sim \mathcal{P}_{\Delta}}\left(K \le k\right) \ge q \right\},
    \end{equation}
    where \(\bm \delta\) is a coefficient vector sampled from the distribution \(\mathcal{P}_{\Delta}\), and the random variable \(K := \|\bm \delta\|_0\) represents the sparsity of the sampled coefficient vector. In simple terms, the quantile sparsity \(K_q\) indicates that at least \(q\) of the samples have sparsity no greater than \(K_q\).
    \item \textbf{Quantile coherence \(\mu_q\)}: Similarly, the quantile of coherence \(\mu_q\) is defined as
    \begin{equation}
        \mu_q = \inf \left\{\mu \ge 0 : \mathbb{P}_{(\mathcal{I}, \mathcal{J})}\left(C \le \mu | \tilde{D}\right) \ge q \right\},
    \end{equation}
    where \((\mathcal{I}, \mathcal{J})\) is uniformly sampled from all unordered pairs of indices (\(\mathcal{I} \neq \mathcal{J}\)), and the random variable \(C := |\langle \tilde{\boldsymbol{d}}_\mathcal{I}, \tilde{\boldsymbol{d}}_\mathcal{J} \rangle|\) represents the coherence between two atoms. In simple terms, this means that the probability of randomly selecting a pair of different atoms with coherence no greater than \(\mu_q\) is at least \(q\).
\end{itemize}
Based on these definitions, for the supports of most samples, if the condition \(\mu_q < \frac{1}{2K_q-1}\) holds, we can conclude that at least \(q\) proportion of the samples satisfy the sufficient conditions for uniqueness and recoverability.

To determine the maximal quantile \(q^*\) satisfying the theoretical criterion, we perform a binary search over the interval \([0,0.999999]\) for the quantile parameter \(\alpha\). At each iteration we compute the linear quantiles
\begin{equation}
    \mu_\alpha := \operatorname{Quantile}(\{\mu\}, \alpha),\quad K_\alpha := \operatorname{Quantile}(\{K\}, \alpha),
\end{equation}
and test whether \(\mu_\alpha < \frac{1}{2K_\alpha - 1}\) holds. If the condition is satisfied, the lower bound of the search interval is updated to \(\alpha\); otherwise the upper bound is reduced. Upon convergence, the maximal \(\alpha\) obtained is taken as the desired quantile \(q^*\), together with the corresponding values of \(\mu_\alpha\) and \(K_\alpha\).

Note that verifying Theorem~\ref{theorem:Uniqueness_of_Sparse_Representation_and_Exact_Recovery} requires the equality \(\tilde{D}\bm x=\tilde{\bm m}\). However, as shown in Fig.~\ref{fig:neurons_features_ideal_atoms_radar_chart}, features generally fail to achieve reliable reconstruction, so the quantile \(q\) obtained from the condition \(\mu_q < \frac{1}{2K_q-1}\) serves only as an ideal upper bound. In contrast, the learned atoms satisfy reliable reconstruction, and 99.85\% of atoms meet \(\mu_q < \frac{1}{2K_q-1}\) on average, confirming their favorable properties.

For further detail, Tabs.~\ref{tab:atom_quantiles_Gemma2-2B}-\ref{tab:atom_quantiles_Llama3.1-8B} report the corresponding values of \(R^2\) and \(q^*\) for identified units of Gemma2-2B, Gemma2-9B and Llama3.1-8B. In all three models, we primarily use TSAEs with JumpReLU activations \citep{erichson2019jumprelu,rajamanoharan2024jumping} to identify atoms. For comparison, on Gemma2-2B we also train standard SAEs with ReLU activations \citep{cunningham2023sparse} and find that they fail to identify units that satisfy the criteria of ideal atoms as fundamental representational units. Moreover, control experiments with ReLU SAEs of varying capacities show that increasing capacity does not improve performance (Tab.~\ref{tab:relu_sae_capacity}), which is also consistent with our theoretical expectation.

\begin{table}[htbp]
\centering
\caption{Faithfulness and stability across layers on Gemma2-2B.}
\begin{tabular}{c cc cc}
\toprule
& \multicolumn{2}{c}{TSAEs with JumpReLU} & \multicolumn{2}{c}{SAEs with ReLU} \\
\cmidrule(lr){2-3}\cmidrule(lr){4-5}
Layer 
& $R^2$ & $q^*$
& $R^2$ & $q^*$\\
\midrule
0  & 0.9986 & 0.9974& -  &  -   \\
1  & 0.9984 & 0.9978& -  &  -   \\
2  & 0.9987 & 0.9978& -  &  -   \\
3  & 0.9992 & 0.9988& -  &  -   \\
4  & 0.9994 & 0.9991& -  &  -   \\
5  & 0.9996 & 0.9983& 0.9680  &  0.8263   \\
6  & 0.9995 & 0.9983& 0.9510  &  0.6740   \\
7  & 0.9993 & 0.9994& 0.9624  &  0.6712   \\
8  & 0.9996 & 0.9976& 0.9650  &  0.6437   \\
9  & 0.9995 & 0.9987& 0.9383  &  0.5648   \\
10 & 0.9992 & 0.9983& 0.9133  &  0.4662   \\
11 & 0.9992 & 0.9979& 0.9179  &  0.4366   \\
12 & 0.9991 & 0.9911& 0.9129  &  0.4516   \\
13 & 0.9973 & 0.9960& 0.9104  &  0.4167   \\
14 & 0.9989 & 0.9993& -  &  -   \\
15 & 0.9988 & 0.9989& -  &  -   \\
16 & 0.9992 & 0.9972& -  &  -   \\
17 & 0.9994 & 0.9991& -  &  -   \\
18 & 0.9996 & 0.9945& -  &  -   \\
19 & 0.9997 & 0.9980& -  &  -   \\
20 & 0.9989 & 0.9989& -  &  -   \\
21 & 0.9997 & 0.9964& -  &  -   \\
22 & 0.9995 & 0.9922& -  &  -   \\
23 & 0.9994 & 0.9979& -  &  -   \\
24 & 0.9994 & 0.9982& -  &  -   \\
25 & 0.9993 & 0.9943& -  &  -   \\
\bottomrule
\end{tabular}
\label{tab:atom_quantiles_Gemma2-2B}
\end{table}

\begin{table}[htbp]
\centering
\caption{Faithfulness and stability across layers on Gemma2-9B.}
\begin{tabular}{c cc}
\toprule
& \multicolumn{2}{c}{TSAEs with JumpReLU} \\
\cmidrule(lr){2-3}
Layer 
& $R^2$ & $q^*$\\
\midrule
0  & 0.9996  &  0.9915  \\
1  & 0.9993  &  0.9992  \\
2  & 0.9995  &  0.9981  \\
3  & 0.9996  &  0.9975  \\
4  & 0.9996  &  0.9985  \\
5  & 0.9997  &  0.9999  \\
6  & 0.9993  &  0.9961  \\
7  & 0.9996  &  0.9995  \\
8  & 0.9996  &  0.9996  \\
9  & 0.9996  &  0.9997  \\
10 & 0.9997  &  0.9997  \\
11 & 0.9996  &  0.9994  \\
12 & 0.9994  &  0.9996  \\
13 & 0.9993  &  0.9991  \\
14 & 0.9989  &  0.9991  \\
15 & 0.9992  &  0.9997  \\
16 & 0.9990  &  0.9997  \\
17 & 0.9994  &  0.9996  \\
18 & 0.9995  &  0.9995  \\
19 & 0.9990  &  0.9996  \\
20 & 0.9991  &  0.9995  \\
21 & 0.9991  &  0.9994  \\
22 & 0.9992  &  0.9993  \\
23 & 0.9993  &  0.9991  \\
24 & 0.9993  &  0.9990  \\
25 & 0.9994  &  0.9988  \\
26 & 0.9964  &  0.9974  \\
27 & 0.9997  &  0.9980  \\
28 & 0.9994  &  0.9997  \\
29 & 0.9993  &  0.9993  \\
30 & 0.9997  &  0.9982  \\
31 & 0.9995  &  0.9982  \\
32 & 0.9996  &  0.9982  \\
33 & 0.9998  &  0.9986  \\
34 & 0.9998  &  0.9987  \\
35 & 0.9997  &  0.9997  \\
36 & 0.9997  &  0.9991  \\
37 & 0.9995  &  0.9992  \\
38 & 0.9993  &  0.9996  \\
39 & 0.9990  &  0.9998  \\
40 & 0.9988  &  0.9999  \\
41 & 0.9995  &  0.9951  \\
\bottomrule
\end{tabular}
\label{tab:atom_quantiles_Gemma2-9B}
\end{table}

\begin{table}[htbp]
\centering
\caption{Faithfulness and stability across layers on Llama3.1-8B.}
\begin{tabular}{c cc}
\toprule
& \multicolumn{2}{c}{TSAEs with JumpReLU} \\
\cmidrule(lr){2-3}
Layer 
& $R^2$ & $q^*$\\
\midrule
0  & 0.9985   &  0.9968   \\
1  & 0.9998   &  0.9996   \\
2  & 0.9930   &  0.9998   \\
3  & 0.9945   &  0.9999   \\
4  & 0.9992   &  0.9999   \\
5  & 0.9992   &  0.9998   \\
6  & 0.9971   &  0.9999   \\
7  & 0.9961   &  0.9999   \\
8  & 0.9992   &  0.9999   \\
9  & 0.9988   &  0.9999   \\
10 & 0.9989   &  0.9998   \\
11 & 0.9987   &  0.9999   \\
12 & 0.9993   &  0.9997   \\
13 & 0.9970   &  0.9999   \\
14 & 0.9992   &  0.9999   \\
15 & 0.9986   &  0.9999   \\
16 & 0.9989   &  0.9999   \\
17 & 0.9992   &  0.9998   \\
18 & 0.9992   &  0.9994   \\
19 & 0.9993   &  0.9998   \\
20 & 0.9991   &  0.9998   \\
21 & 0.9993   &  0.9996   \\
22 & 0.9997   &  0.9992   \\
23 & 0.9995   &  0.9996   \\
24 & 0.9996   &  0.9993   \\
25 & 0.9990   &  0.9999   \\
26 & 0.9995   &  0.9997   \\
27 & 0.9993   &  0.9999   \\
28 & 0.9982   &  0.9993   \\
29 & 0.9979   &  0.9940   \\
\bottomrule
\end{tabular}
\label{tab:atom_quantiles_Llama3.1-8B}
\end{table}

\begin{table}[t]
\centering
\caption{Performance of ReLU SAEs with different capacities on Gemma2-2B at layer 10.}
\label{tab:relu_sae_capacity}
\begin{tabular}{lcccc}
\toprule
Capacity & 4$\times$ & 5$\times$ & 6$\times$ & 7$\times$ \\
\midrule
$R^2$ & 0.9183 & 0.9173 & 0.9150 & 0.9143 \\
$q^*$ & 0.4768 & 0.4655 & 0.4514 & 0.4532 \\
\bottomrule
\end{tabular}
\end{table}

\subsection{Experimental Analysis}\label{appendix:Atoms_of_LLMs_Experimental_Analysis}

Notably, the training process is largely insensitive to hyperparameters: using sparsity coefficients \(\lambda \in \{0.01, 0.1, 1\}\) yields nearly identical learning curves (Fig.~\ref{fig:Robust_Train_Loss}), indicating strong robustness. This suggests that high-fidelity reconstruction primarily reflects the intrinsic sparsifiability of the representations, rather than careful hyperparameter tuning.

The encoder and decoder of SAEs converge to alignment under atomic inner product, namely parameterization of \(W_{\!dec}\!\!=\!\!D\) and \(W_{\!enc}\!\!=\!\!D^\top\! \tilde S\), consistent with Theorem~\ref{theorem:Identifiability_of_SAEs}, as shown in Figs.~\ref{fig:Encoder_After_Training}-\ref{fig:Encoder_After_Training_Llama3.1-8B} for Gemma2-2B, Gemma2-9B and Llama3.1-8B.

By Definition~\ref{definition:Atoms}, atoms must satisfy approximate orthogonality under the normalized atomic inner product (NAIP), ensuring their mutual distinguishability. The NAIP among all atoms can be computed by directly evaluating the matrix \(G = \tilde{D}^\top \tilde{D}\), with a more practical procedure, similar to Corollary~\ref{corollary:NAIP}, given by
\begin{equation}
    G = \frac{D^\top S D}{\sqrt{\text{diag}(D^\top S D)} \times \sqrt{\text{diag}(D^\top S D)}},
\end{equation}
where \(S = (DD^\top)^{-1}\), \(\text{diag}(D^\top S D)\) denotes the diagonal of \(D^\top S D\), \(\sqrt{\text{diag}(D^\top S D)}\) denotes its element-wise square root, and \(\times\) indicates the outer product. If the vectors learned by the SAEs exhibit atomicity, the off-diagonal elements \(G_{ij} = \langle \tilde{\bm{d}_i}, \tilde{\bm{d}_j} \rangle\) should cluster near zero with very small variance, demonstrating approximate orthogonality, while the diagonal entries are normalized. 

As shown in Figs.~\ref{fig:Naip_Distribution_Gemma2-2B_Full}-\ref{fig:Naip_Distribution_Llama3.1-8B_Full}, across all layers of Gemma2-2B, Gemma2-9B, and Llama3.1-8B, the off-diagonal elements of the matrices \(D\) are tightly concentrated near zero, closely matching the theoretical Dirac delta distribution. This accords with Definition~\ref{definition:Approximately_Orthogonal_Atoms}: although strict orthogonality is unattainable, sparsity drive convergence to approximately orthogonal atoms.

\begin{figure}
    \centering
    \includegraphics[width=0.88\textwidth]{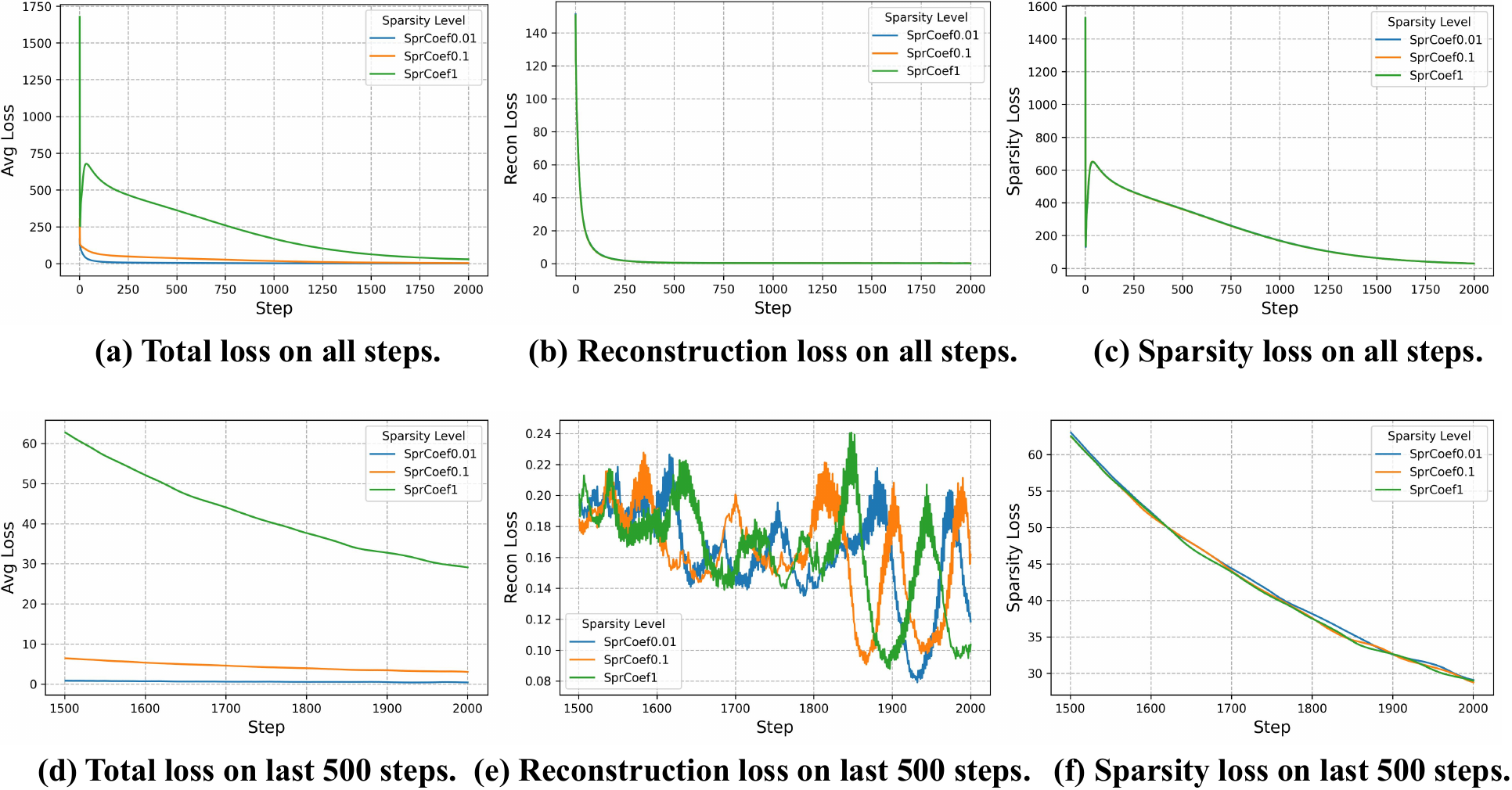}
    \caption{Training loss is robust to hyperparameter selection on \(\lambda\), maintaining stable performance across different configurations.}
  \label{fig:Robust_Train_Loss}
\end{figure}

%%%%%%%%%%%%%%%%%%%%%%%%%%%%%%%%%%%%%%%%%%%%%%%%%%%%%%%%%%%%%%%%%%

\begin{figure}
    \centering
    \includegraphics[width=0.88\textwidth]{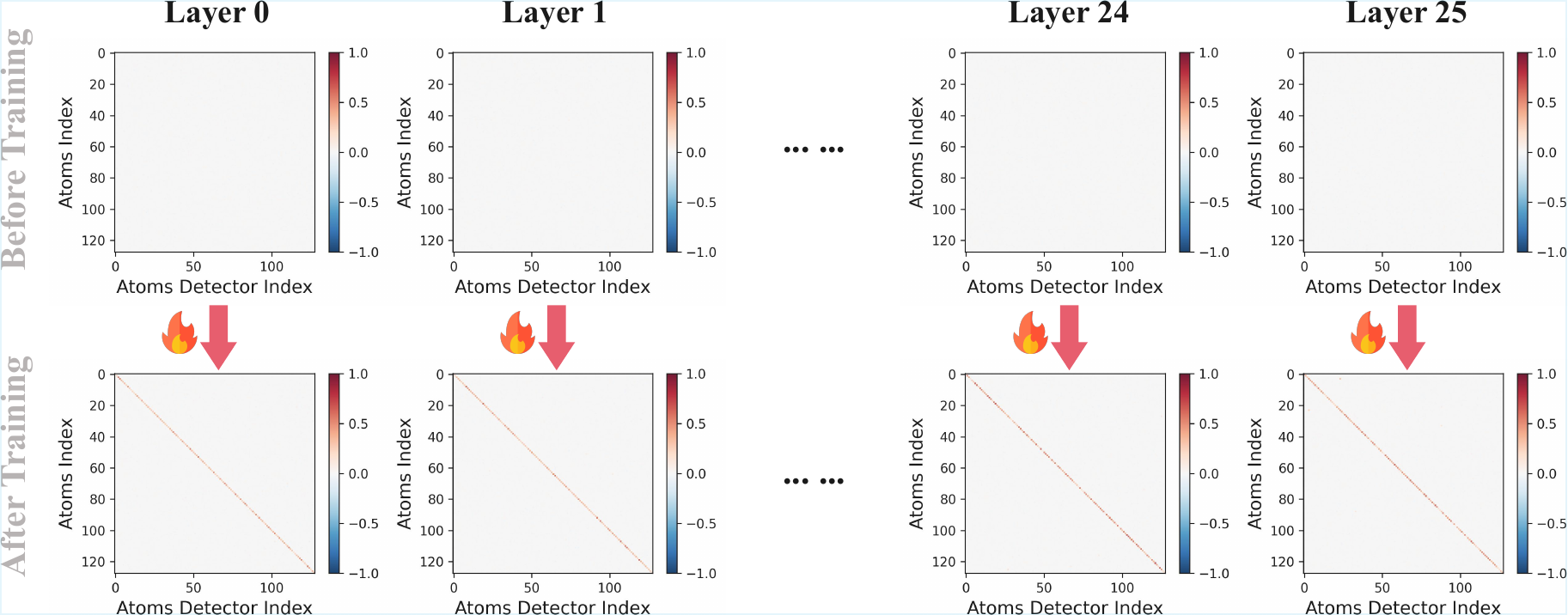}
    \caption{Spontaneous alignment between the encoder and decoder during training on Gemma2-2B.}
  \label{fig:Encoder_After_Training}
\end{figure}

\begin{figure}
    \centering
    \includegraphics[width=0.88\textwidth]{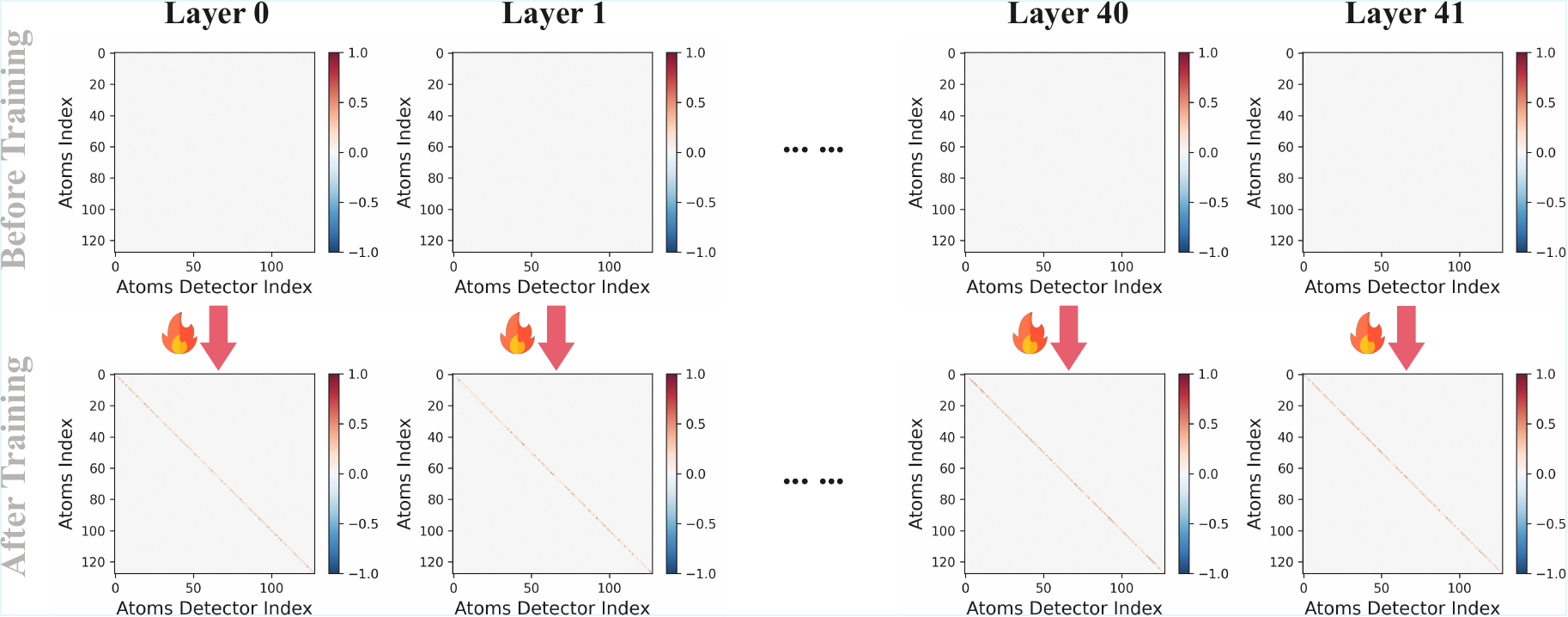}
    \caption{Spontaneous alignment between the encoder and decoder during training on Gemma2-9B.}
  \label{fig:Encoder_After_Training_Gemma2-9B}
\end{figure}

\begin{figure}
    \centering
    \includegraphics[width=0.88\textwidth]{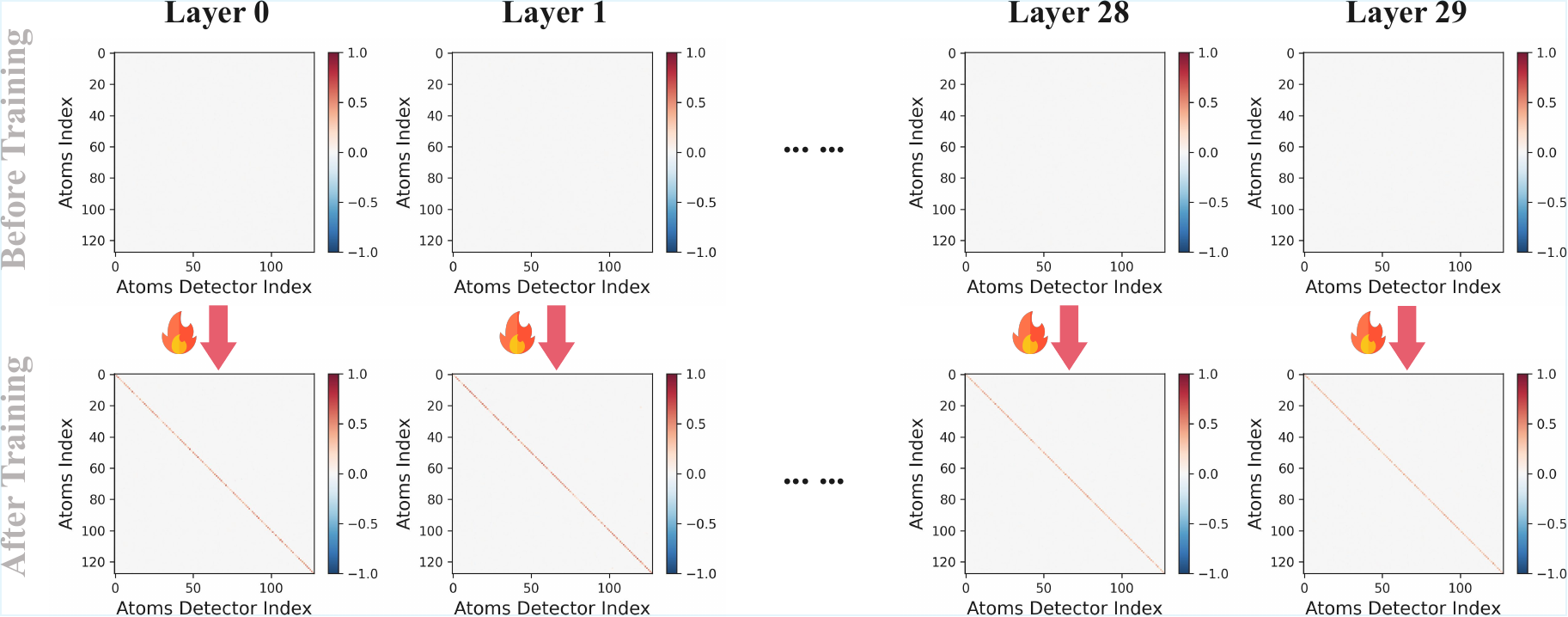}
    \caption{Spontaneous alignment between the encoder and decoder during training on Llama3.1-8B.}
  \label{fig:Encoder_After_Training_Llama3.1-8B}
\end{figure}

%%%%%%%%%%%%%%%%%%%%%%%%%%%%%%%%%%%

\begin{figure}
    \centering
    \includegraphics[width=0.88\textwidth]{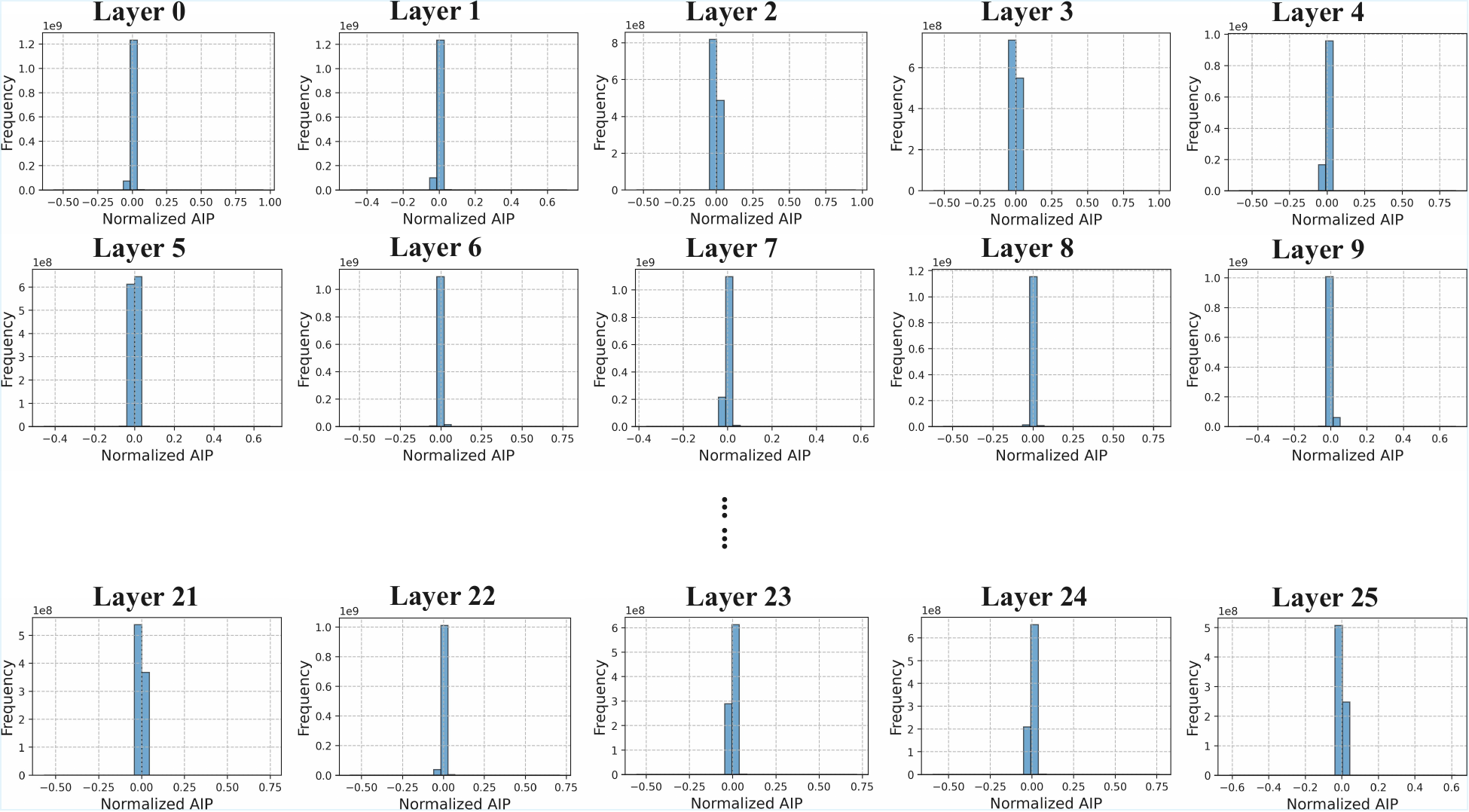}
    \caption{NAIP distribution of atoms across all layers of the Gemma2-2B.}
  \label{fig:Naip_Distribution_Gemma2-2B_Full}
\end{figure}

\begin{figure}
    \centering
    \includegraphics[width=0.88\textwidth]{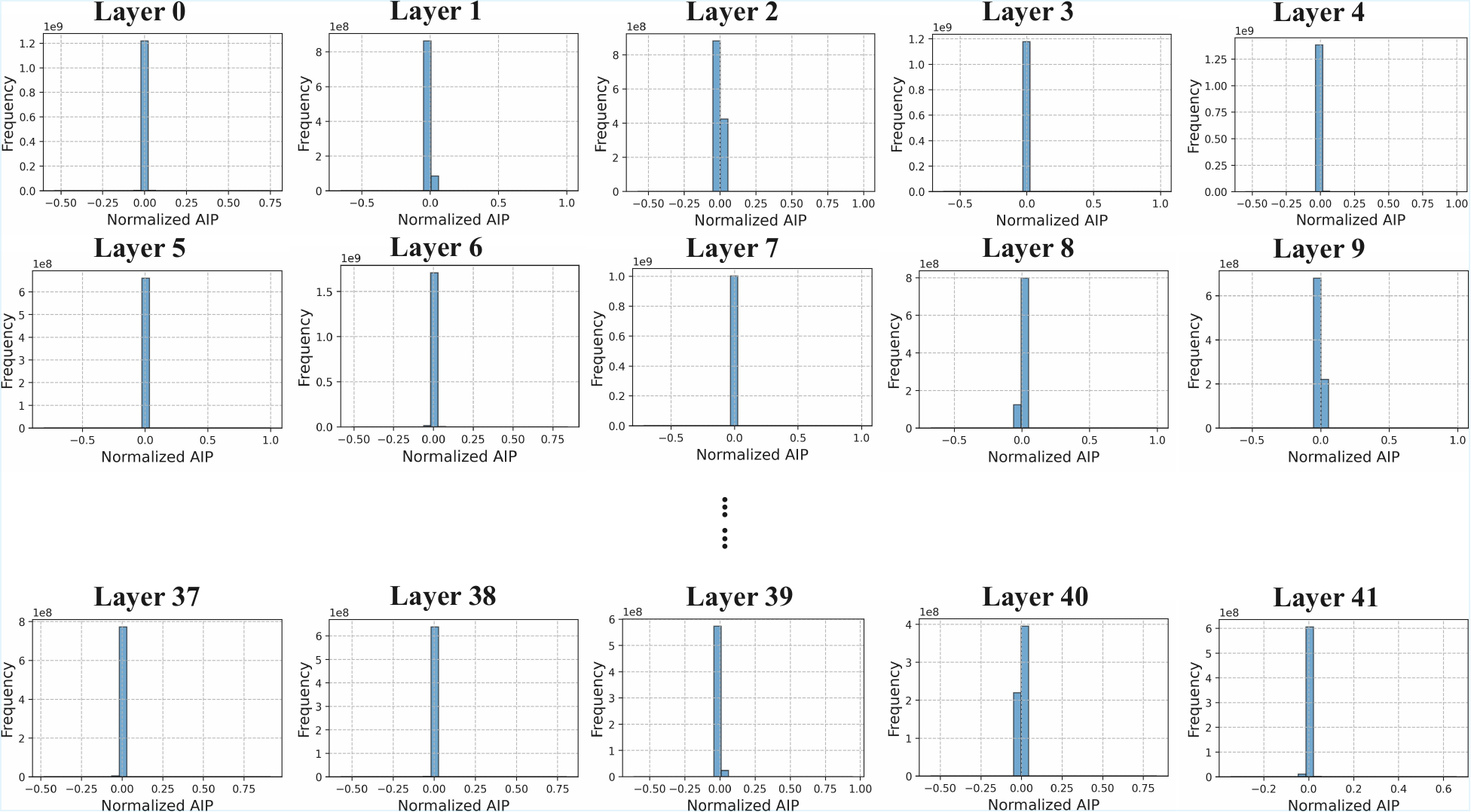}
    \caption{NAIP distribution of atoms across all layers of the Gemma2-9B.}
  \label{fig:Naip_Distribution_Gemma2-9B_Full}
\end{figure}

\begin{figure}
    \centering
    \includegraphics[width=0.88\textwidth]{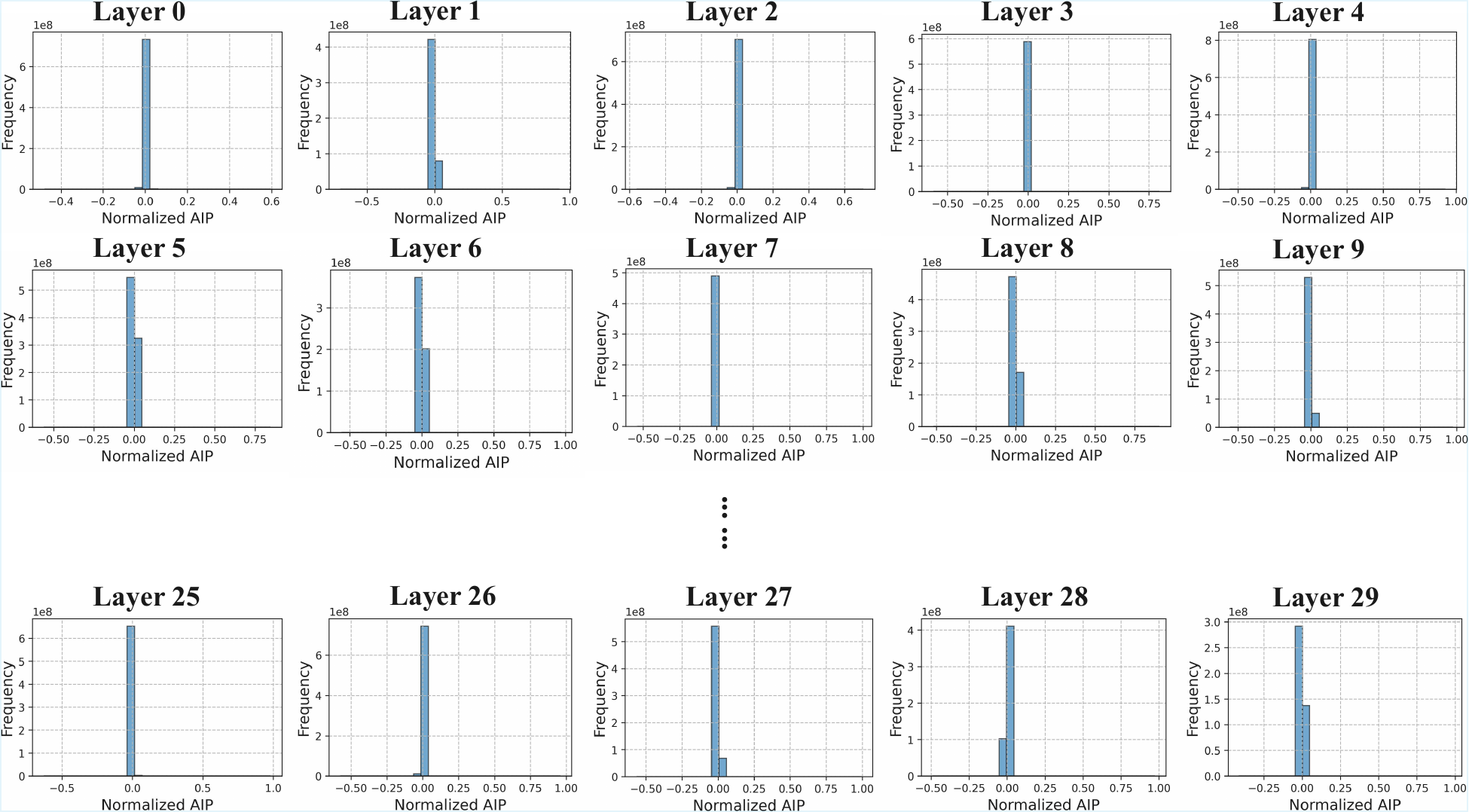}
    \caption{NAIP distribution of atoms across all layers of the Llama3.1-8B.}
  \label{fig:Naip_Distribution_Llama3.1-8B_Full}
\end{figure}

\subsection{Monosemanticity Evaluation}\label{appendix:Atoms_of_LLMs_Monosemanticity_Evaluation}

To evaluate the monosemanticity of the representational units, we adopt LLM-as-a-judge.

Specifically, to ensure diversity, we first manually select a set of heterogeneous entities (e.g., "United Kingdom", "Google Maps", "Suzuki GSX-R750", "Windows Vista", "Intel 80286", "Beijing", "Hawaii", "Tim Duncan", "Microsoft Word", "Vladimir Putin", "Apple Watch", "Chrome OS"). We then collect the representational units activated by these entities and aggregate them into a candidate pool. From this pool, we randomly sample ten units per selected layer: Gemma2-2B (layers 0, 5, 10, 15, 20, 25), Gemma2-9B (layers 0, 7, 14, 21, 28, 35, 41), and Llama3.1-8B (layers 0, 6, 12, 18, 24, 29). For each sampled unit, we retrieve all entities that activate it and use GPT-5.2 to assess its monosemanticity. The evaluation prompt is provided below:

\begin{tcolorbox}[
    colback=gray!8,
    colframe=gray!40,
    boxrule=0.5pt,
    arc=2pt,
    left=6pt,
    right=6pt,
    top=6pt,
    bottom=6pt,
    breakable
]
You are given a list of entities enclosed in square brackets [ ].

Inside the brackets, each entity is separated by a semicolon (;).

Your task is to analyze the entities and determine how many of them belong to the same semantic category (i.e., refer to the same type of real-world concept).

Important instructions:

- You should identify the largest group of entities that belong to the same category.

- Only count entities that clearly belong to the same category.

- Your answer must be a single integer.

You must provide your final answer strictly inside a box using the following format:

\texttt{\textbackslash box\{NUMBER\}}

Here is the list of entities:
\texttt{[\{entities\}]}
\end{tcolorbox}

We then compute, for each representational unit, the proportion of activated entities that are monosemantic, and report the mean and standard error of the mean (SEM) over units sampled at selected layers. Full results are shown in Fig.~\ref{fig:monosemanticity_score_across_models_and_layers}, revealing that monosemanticity increases with model scale and is generally higher in deeper layers than in shallower ones. Aggregating across layers within each model yields the results presented in Fig.~\ref{fig:monosemanticity_score_across_models}.

For example, in Gemma2-9B, an atom (ID 11346) in layer 28 is activated by entities including “Honolulu,” “aloha,” “Mufi Hannemann,” “Kirk Caldwell,” “Hawaii,” “Hawaiian Islands,” “Mauna Kea,” “USS Honolulu,” and “Aloha Stadium.” Notably, Mufi Hannemann was born in Honolulu and served as its mayor; Kirk Caldwell is a former Hawaii state representative and former mayor of Honolulu; and Mauna Kea is a volcano in the Hawaiian Islands. This example demonstrates that the atom consistently captures a semantically coherent “Hawaii–Honolulu” concept region, exhibiting clear monosemanticity.

Furthermore, we analyze atoms activated by "Beijing" in layers 1–6 of Gemma2-2B, and examine, at each layer, all entities that activate these atoms to characterize their corresponding concept regions (Tabs.~\ref{tab:beijing_atom_entities_layer1_gemma2-2b}–\ref{tab:beijing_atom_entities_layer6_gemma2-2b}).

\begin{figure}
    \centering
    \includegraphics[width=0.88\textwidth]{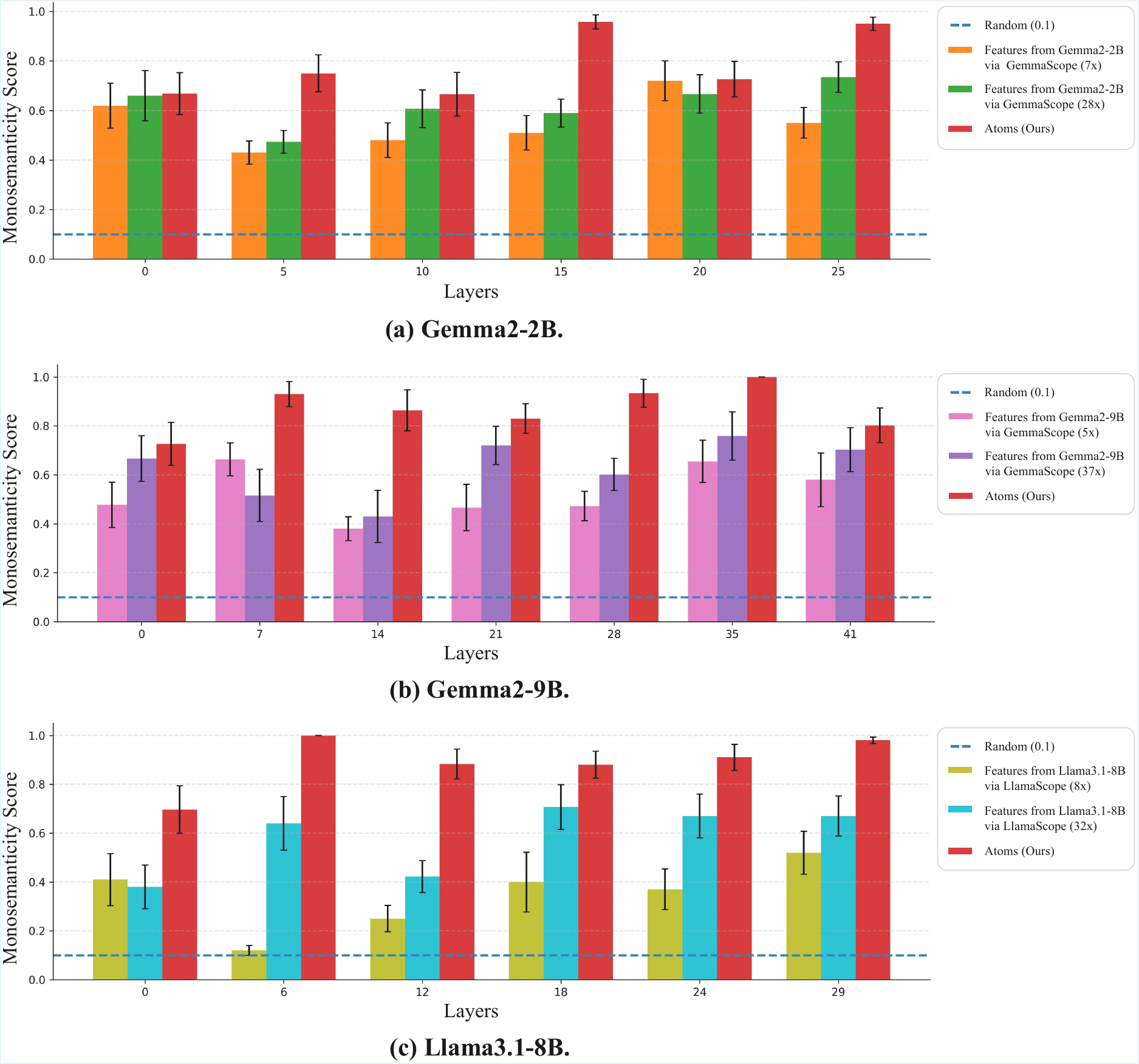}
    \caption{Monosemanticity scores of representational units across models and layers, using GPT-5.2 with manual verification. The blue dashed line indicates random-guess performance (0.1).}
  \label{fig:monosemanticity_score_across_models_and_layers}
\end{figure}

\begin{table}[htbp]
\centering
\caption{Entities grouped by atoms ID for \textit{Beijing} on layer 1 of Gemma2-2B.}
\begin{tabular}{@{}l p{0.78\linewidth}@{}}
\toprule
\textbf{Atoms ID} & \textbf{Entities} \\
\midrule
15264 & Beijing, Seoul, 1 Maccabees, Ulysses Dove \\
\midrule
15982 & Beijing, Siikainen, 36 China Town, Jim Allchin \\
\midrule
23987 & Beijing, Swann Memorial Fountain, Charles Chilton, Otto Neurath \\
\midrule
31322 & Shanghai, Beijing \\
\midrule
35951 & Beijing, Russia, Arkansas, Paris \\
\midrule
36035 & Beijing, Meiert Avis, Aviation Industry Corporation of China \\
\bottomrule
\end{tabular}
\label{tab:beijing_atom_entities_layer1_gemma2-2b}
\end{table}

\begin{table}[htbp]
\centering
\caption{Entities grouped by atoms ID for \textit{Beijing} on layer 2 of Gemma2-2B.}
\begin{tabular}{@{}l p{0.78\linewidth}@{}}
\toprule
\textbf{Atoms ID} & \textbf{Entities} \\
\midrule
620 & Shanghai, Beijing, Hanoi, Tokyo, Adam Maida \\
\midrule
6258 & Beijing, Majorca, Thailand, Greg Dyke \\
\midrule
7540 & 1300 Oslo, Beijing, Miami Horror, Lille \\
\midrule
10761 & Moscow, Beijing, Canberra, Pyongyang \\
\midrule
11519 & Karl Polanyi, Beijing, Cevdet Sunay, Mary Gaunt, Cyd Hayman, Les diamants de la couronne \\
\midrule
13418 & Beijing, Tarnobrzeg Voivodeship, Yakuza, Longs Peak, Jeep Wrangler \\
\midrule
15585 & Beijing, Ivan Koloff, Olinto Cristina \\
\midrule
22622 & Shanghai, Cleveland, Beijing, Delhi, Saint Lucia, St Lucia, Venice \\
\midrule
26002 & Beijing, Alte Oper, Intimate Stories, Seventeen, Five Star Krishna \\
\midrule
27116 & Ankara, Mandarin Oriental, Bangkok, Cairo, Beijing, Dublin, Jakarta, Amsterdam, Bratislava, Toronto, Sydney, Edinburgh, London, Honolulu, Auckland, Bali, Tokyo, Manila, Queens Gardens, Brisbane, Budapest, Montreal, Perth, Kolkata, Dubai, Melbourne, Copenhagen, Nairobi, Bangkok, Bangalore \\
\bottomrule
\end{tabular}
\label{tab:beijing_atom_entities_layer2_gemma2-2b}
\end{table}

\begin{table}[htbp]
\centering
\caption{Entities grouped by atoms ID for \textit{Beijing} on layer 3 of Gemma2-2B.}
\begin{tabular}{@{}l p{0.78\linewidth}@{}}
\toprule
\textbf{Atoms ID} & \textbf{Entities} \\
\midrule
9444  & Shanghai, Beijing \\
\midrule
24724 & Moscow, Beijing, Russia \\
\midrule
30463 & Beijing, Thailand \\
\midrule
32854 & Beijing, Madrid, Mariano Gonzalvo \\
\bottomrule
\end{tabular}
\label{tab:beijing_atom_entities_layer3_gemma2-2b}
\end{table}

\begin{table}[htbp]
\centering
\caption{Entities grouped by atoms ID for \textit{Beijing} on layer 4 of Gemma2-2B.}
\begin{tabular}{@{}l p{0.78\linewidth}@{}}
\toprule
\textbf{Atoms ID} & \textbf{Entities} \\
\midrule
1578  & Beijing, Cadbury \\
\midrule
11098 & Beijing, Jakarta \\
\midrule
11158 & Beijing \\
\midrule
15601 & Oslo, Moscow, Stockholm, Berlin, Athens, Helsinki, Beijing, Vienna, Geneva, Amsterdam, Seoul, Prague, Madrid, London, Warsaw, Kyoto, Naples, Tokyo, Budapest, Paris, Rome, Bangkok \\
\midrule
25755 & Stockholm, Helsinki, Beijing, Minneapolis, Minecraft, Copenhagen, Nairobi \\
\midrule
33322 & Shanghai, Beijing, Guangzhou, Macau, Hong Kong, Chongqing, Shenzhen, Wuhan \\
\bottomrule
\end{tabular}
\label{tab:beijing_atom_entities_layer4_gemma2-2b}
\end{table}

\begin{table}[htbp]
\centering
\caption{Entities grouped by atoms ID for \textit{Beijing} on layer 5 of Gemma2-2B.}
\begin{tabular}{@{}l p{0.78\linewidth}@{}}
\toprule
\textbf{Atoms ID} & \textbf{Entities} \\
\midrule
11453 & Beijing, The Great Citizen \\
\midrule
12661 & Beijing, Holycross-Ballycahill GAA \\
\midrule
19018 & Beijing, Registro, 4th of August Regime, Witnesses \\
\midrule
23750 & Moscow, Ankara, Beijing, Jakarta, Madrid \\
\bottomrule
\end{tabular}
\label{tab:beijing_atom_entities_layer5_gemma2-2b}
\end{table}

\begin{table}[htbp]
\centering
\caption{Entities grouped by atoms ID for \textit{Beijing} on layer 6 of Gemma2-2B.}
\begin{tabular}{@{}l p{0.78\linewidth}@{}}
\toprule
\textbf{Atoms ID} & \textbf{Entities} \\
\midrule
7533 & Johannesburg, Shanghai, Beijing, Colombo, Prafulla Chandra Ghosh \\
\midrule
16414 & Shenyang, Shanghai, Beijing, Guangzhou, Yangtze, Google China, Taobao, Tianjin, Chongqing, National Development and Reform Commission, Shenzhen, Qing dynasty, Aviation Industry Corporation of China, Qzone, Youku, Wuhan, People's Republic of China \\
\midrule
22386 & Beijing \\
\midrule
33958 & Carol Zhao, Shenyang, Shanghai, Beijing, Guangzhou, Seoul, Yangtze, Macau, Hanoi, Taipei, Hong Kong, Kaohsiung, South Korea, Busan, United States Army Military Government in Korea, Tianjin, Pyongyang, Incheon, Chongqing, Vietnam, Dennis Hwang, Shenzhen, Daejeon, North Korea, Wuhan \\
\bottomrule
\end{tabular}
\label{tab:beijing_atom_entities_layer6_gemma2-2b}
\end{table}

%%%%%%%%%%%%%%%%%%%%%%%%%%%%%%%%%%%%%%%%%%%%%%%%%%%%%%%%%%%%%%%%%%%%%%%%%%%%%%%
%%%%%%%%%%%%%%%%%%%%%%%%%%%%%%%%%%%%%%%%%%%%%%%%%%%%%%%%%%%%%%%%%%%%%%%%%%%%%%%

\end{document}